\documentclass{article}

\usepackage{microtype}
\usepackage{graphicx}
\usepackage{subfigure}
\usepackage{booktabs} 
\usepackage{enumitem}
\usepackage{multirow}

\usepackage{hyperref}

\usepackage[accepted]{icml2026}

\usepackage{amsmath,amsfonts,bm}
\usepackage{bbm}
\usepackage{amsthm}

\def\eqref#1{equation~\ref{#1}}

\def\ceil#1{\lceil #1 \rceil}

\def\1{\bm{1}}

\DeclareMathAlphabet{\mathsfit}{\encodingdefault}{\sfdefault}{m}{sl}
\SetMathAlphabet{\mathsfit}{bold}{\encodingdefault}{\sfdefault}{bx}{n}

\def\gS{{\mathcal{S}}}

\def\sP{{\mathbb{P}}}

\newcommand{\E}{\mathbb{E}}

\newcommand{\R}{\mathbb{R}}

\DeclareMathOperator*{\argmax}{arg\,max}
\DeclareMathOperator*{\argmin}{arg\,min}

\newcommand{\X}{\mathcal{X}}
\newcommand{\Y}{\mathcal{Y}}
\newcommand{\V}{\mathcal{V}}

\newcommand{\Pjoint}{\mathbb{P}_{\mathbf{G}}}

\newcommand{\bY}{\mathbf{Y}}
\newcommand{\by}{\mathbf{y}}
\newcommand{\bX}{\mathbf{X}}

\newcommand{\bD}{\mathbf{D}}

\newcommand{\bG}{\mathbf{G}}
\newcommand{\C}{\mathbf{C}_{\lambda}}
\newcommand{\hlambda}{\hat{\lambda}}
\newcommand{\Chat}{\mathbf{C}_{\hat{\lambda}}}

\usepackage{url}
\usepackage{mathtools}
\usepackage{amsthm}
\usepackage{dsfont}
\usepackage{xcolor}
\usepackage{dsfont}
\usepackage[normalem]{ulem}

\theoremstyle{plain}
\newtheorem{definition}{Definition}
\newtheorem{theorem}{Theorem}

\usepackage[english]{babel}
\addto\captionsenglish{%
}
\addto\extrasenglish{%
  \def\subsectionautorefname{Section}%
}

\theoremstyle{plain}

\theoremstyle{definition}

\theoremstyle{remark}

\usepackage[textsize=tiny]{todonotes}

\usepackage{fancyvrb}
\usepackage{fvextra} 
\usepackage[most]{tcolorbox}
\usepackage{framed}
\usepackage{moresize}

\usepackage{thmtools}
\usepackage{thm-restate}

\usepackage{etoolbox}
\newtoggle{restateMode}
\togglefalse{restateMode} 

\usepackage[nameinlink, noabbrev, capitalize]{cleveref} 

\crefname{appsub}{Appendix}{Appendices}

\icmltitlerunning{Conf-Gen: Conformal Uncertainty Quantification for Generative Models}

\begin{document}

\twocolumn[
  \icmltitle{Conf-Gen: Conformal Uncertainty Quantification for Generative Models}



  \icmlsetsymbol{equal}{*}

  \begin{icmlauthorlist}
\icmlauthor{Gabriel Loaiza-Ganem}{l6}
\icmlauthor{Kevin Zhang}{l6}
\icmlauthor{Wei Cui}{l6}
\icmlauthor{Marc T. Law}{l6}
\icmlauthor{Kin Kwan Leung}{l6}
  \end{icmlauthorlist}

  \icmlaffiliation{l6}{Layer 6 AI, Toronto, Canada}

  \icmlcorrespondingauthor{All authors}{\{gabriel, kevinz, wei, marc, kk\}@layer6.ai}

  \icmlkeywords{Generative Models, Uncertainty Quantification, Conformal Methods}

  \vskip 0.3in
]



\printAffiliationsAndNotice{}  

\begin{abstract}
Conformal prediction (CP) and its extension, conformal risk control (CRC), are established frameworks for quantifying uncertainty in supervised machine learning through formal guarantees. However, recent breakthroughs in artificial intelligence (AI) have been driven by unsupervised generative models, such as large language models (LLMs) and image generators, which are not directly compatible with CP or CRC. In this work we introduce \emph{conformal generation} (Conf-Gen), a general framework adapting CRC to generative tasks while relaxing its theoretical assumptions. Conf-Gen unifies and generalizes previous attempts to apply CP to LLMs, and extends conformal methodology to entirely new domains. We demonstrate the flexibility of Conf-Gen through some novel applications, including obtaining conformal guarantees on: image generators producing non-memorized images, conversational AI systems having asked enough clarifying questions, and the output of AI agents being correct.
\end{abstract}

\section{Introduction}
Significant empirical breakthroughs have been achieved in artificial intelligence (AI) in recent years, from the advent of large language models \citep[LLMs;][]{brown2020language, openai2024gpt4technicalreport}, to realistic image generators \citep{ramesh2022hierarchical, rombach2022high}, agentic systems \citep{park2023generative}, and more \citep{Silver_2016, potapenko2021highly}. Most of these developments have moved away from supervised learning and instead leverage generative models. Despite these strides, uncertainty quantification (UQ) remains a critical challenge in AI, particularly for generative models, thereby hindering their deployment in high-stakes applications such as healthcare \citep{begoli2019need} and scientific discovery \citep{wang2023scientific}.

Conformal prediction \citep[CP;][]{gammerman1998learning, vovk1999machine, saunders1999transduction, gammerman2007hedging} is a principled approach for UQ in supervised learning. CP quantifies uncertainty by using features $X$ to construct a set $\mathcal{C}(X)$ of predictions guaranteed to contain the ground truth $Y_{\text{GT}}$ with a user-specified probability. Conformal risk control \citep[CRC;][]{angelopoulos2024conformal} extends CP by lower-bounding the expectation of a more general utility function, $U(\mathcal{C}(X), Y_{\text{GT}})$, of which CP is a special case. Because conformal guarantees are distribution-free, they are particularly appealing as an approach for UQ capable of handling the complex distributions represented by generative models. Yet, despite its generality, CRC is not directly applicable to generative tasks. 
As a consequence, work leveraging conformal ideas in the context of generative models tends to be highly task-specific.

\begin{figure*}
    \centering
    \includegraphics[width=\linewidth]{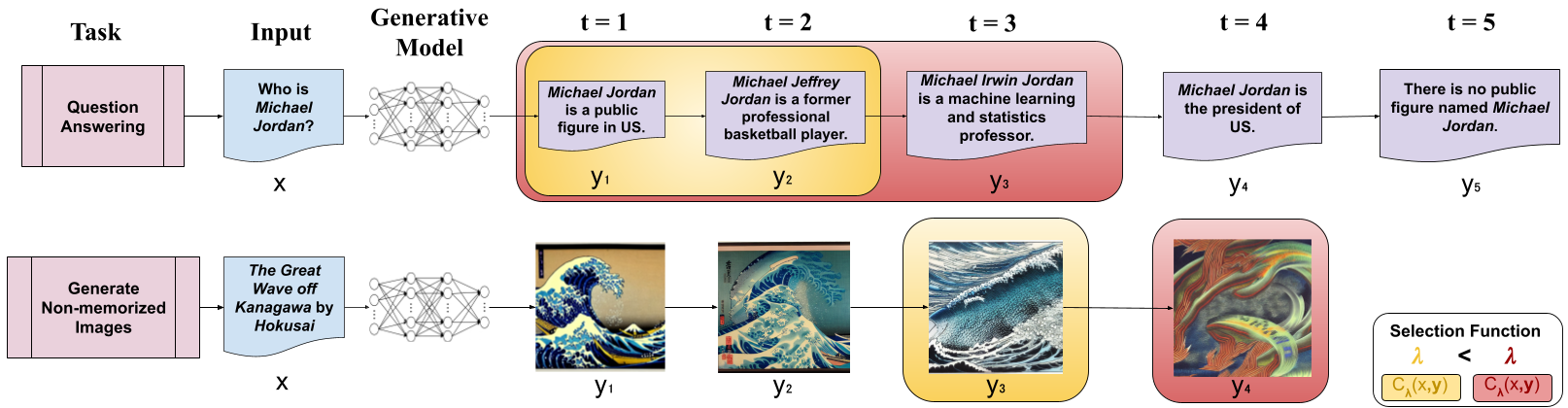}
    \caption{Simplified illustration of our method. The selection function $\C(x, \by)$ processes an input $x$ and a sequence of generations $\by$ to yield an output that becomes increasingly conservative as $\lambda$ increases. Conf-Gen identifies a value $\hlambda$ that formally ensures the output of $\Chat$ satisfies a target \emph{admissibility} requirement in expectation. \textbf{Top}: when $\by$ consists of \emph{i.i.d.} responses to a query $x$, admissibility ensures the output contains at least one correct answer. \textbf{Bottom}: when the images in $\by$ are generated by progressively modifying the prompt $x$, admissibility ensures the outputted image is not memorized.}
    \label{fig:confgen}
    \vspace{-10pt}
\end{figure*}

In this work we make the following contributions:
\begin{enumerate}[leftmargin=*, nosep, topsep=-\parskip, label=(\alph*)]
    \item \textbf{Framework.} We propose \emph{conformal generation} (Conf-Gen, depicted in \autoref{fig:confgen}), a framework extending CRC for rigorous UQ in generative modelling. 
    Conf-Gen can be applied to structures beyond sets, such as sequences, and formally lower bounds the expected admissibility (the analogue of $U$ for generative tasks) of the selected structures. We show that existing task-specific conformal methods can be recovered as special cases of Conf-Gen.
    \item \textbf{Theory.} Conf-Gen relaxes the theoretical assumptions of CRC. Most importantly, we weaken a monotonicity assumption and we derive a more general reverse (upper) bound on the expected utility/admissibility.
    \item \textbf{Python Package.} Na\"ively carrying out the computations required by Conf-Gen is sometimes intractable. We identify patterns in the design space of Conf-Gen components which enable efficient computations, and provide a flexible Python package (\url{https://github.com/layer6ai-labs/conf-gen}) supporting the full range of Conf-Gen applications.
    \item \textbf{Empirical Validation.} We demonstrate that Conf-Gen outperforms state-of-the-art conformal baselines in LLM question answering and showcase its versatility across various novel tasks. Specifically, we show conformal guarantees for: generating non-memorized images; ensuring a conversational LLM has asked enough clarifying questions; producing a sequence of agentic AI outputs containing a correct solution; and selecting a subset of trees from a random forest \citep{breiman2001random} containing enough trees making the correct prediction.
\end{enumerate}

\section{Conformal Prediction and Risk Control}
\textbf{Split Conformal Prediction.} We present a short summary of key concepts from CP that are relevant to Conf-Gen; for more thorough presentations, we refer the reader to \citet{vovk2005algorithmic}, \citet{angelopoulos2021gentle}, and \citet{angelopoulos2024theoretical}. We focus on split conformal prediction \citep{papadopoulos2002inductive}, a particular instance of CP; henceforth, we will use the abbreviation CP to refer to split conformal prediction. In CP, we assume access to a calibration dataset containing features $X^{(i)} \in \X$ and their corresponding ground truth target $Y^{(i)}_{\text{GT}} \in \Y_{\text{GT}}$ for $i=1,\dots, n$. Then, given a new $X^{(n+1)}$, the goal of CP is to use the calibration dataset to produce a set $\mathcal{C}(X^{(n+1)}) \subseteq \Y_{\text{GT}}$ which contains $Y^{(n+1)}_{\text{GT}}$ with probability at least $\gamma$, where $\gamma$ is a user-specified quantity. To achieve this goal, CP uses a score function $S^{\downarrow}: \X \times \Y_{\text{GT}} \rightarrow \R$, with large values of $S^{\downarrow}(x, y)$ conveying low confidence that $y$ is compatible with $x$. 
For example, in a classification setting, we could have $S^{\downarrow}(x, y) = 1 - S^{\uparrow}(x, y)$, where $S^{\uparrow}(x, y)$ is the probability assigned by a pre-trained classifier to $y$ when given $x$ as input. 
For a given $\lambda \in \R$, CP considers sets of the form:
\begin{equation}\label{eq:cp_set}
    \mathcal{C}_\lambda(x) \coloneqq \{y \in \Y_{\text{GT}} : S^{\downarrow}(x, y) \leq \lambda\}.
\end{equation}
CP performs \emph{calibration} by setting $\hlambda$ to be the $\tfrac{\ceil{(n+1)\gamma}}{n}$ quantile of $S^{\downarrow}(X^{(1)}, Y^{(1)}_{\text{GT}}), \dots, S^{\downarrow}(X^{(n)}, Y^{(n)}_{\text{GT}})$. Then, $\mathcal{C}_{\hlambda}(X^{(n+1)})$ satisfies the CP desiderata:
\begin{theorem}[\citet{vovk1999machine}] \label{th:cp}
    Assume $(X^{(1)}, Y^{(1)}_{\textnormal{GT}}), \dots, (X^{(n+1)}, Y^{(n+1)}_{\textnormal{GT}})$ are exchangeable. Then, for $\mathcal{C}_\lambda$ and $\hlambda$ as defined above:
    \begin{equation}\label{eq:cp_guarantee}
        \sP \left(Y^{(n+1)}_{\textnormal{GT}} \in \mathcal{C}_{\hlambda}\left(X^{(n+1)}\right)\right) \geq \gamma.
    \end{equation}
\end{theorem}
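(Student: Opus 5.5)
The plan is the standard exchangeability/rank argument. Write $S_i \coloneqq S^{\downarrow}(X^{(i)}, Y^{(i)}_{\text{GT}})$ for $i = 1, \dots, n+1$. By \eqref{eq:cp_set}, the event $Y^{(n+1)}_{\text{GT}} \in \mathcal{C}_{\hlambda}(X^{(n+1)})$ is exactly the event $S_{n+1} \leq \hlambda$. So it suffices to show $\sP(S_{n+1} \leq \hlambda) \geq \gamma$.

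Let $k \coloneqq \ceil{(n+1)\gamma}$. I will interpret the $\tfrac{k}{n}$ quantile of $S_1, \dots, S_n$ as the order statistic $S_{(k)}$, the $k$-th smallest of these values. If $k > n$, which only happens when $\gamma > n/(n+1)$, the convention is $\hlambda = +\infty$ and the claim is trivial; so assume $k \leq n$.

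The first key step is a deterministic reduction from the $n$ calibration scores to all $n+1$ scores. I claim that $S_{n+1} \leq S_{(k)}$ holds if and only if $S_{n+1}$ is at most the $k$-th smallest value of the full multiset $S_1, \dots, S_{n+1}$.
\begin{itemize}
\item If $S_{n+1} \leq S_{(k)}$, then adding $S_{n+1}$ to the list leaves $S_{(k)}$ as an upper bound for at least $k$ elements, namely $S_{n+1}$ and the $k-1$ calibration scores below $S_{(k)}$. So the $k$-th smallest of all $n+1$ scores is at least $S_{n+1}$.
\item If $S_{n+1} > S_{(k)}$, then there are $k$ calibration scores strictly below $S_{n+1}$. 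Hence the $k$-th smallest of all $n+1$ scores is at most $S_{(k)} < S_{n+1}$.
\end{itemize}
Checking this equivalence carefully in the presence of ties is the step I expect to need the most care. Ties are exactly why the argument must be phrased through ``at most the $k$-th smallest'' and not through strict ranks.

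The second step uses exchangeability. The scores $S_1, \dots, S_{n+1}$ are a fixed function applied coordinatewise to an exchangeable sequence, so they are themselves exchangeable. Let $E_i$ be the event that $S_i$ is at most the $k$-th smallest of all $n+1$ scores.
\begin{itemize}
\item By exchangeability, $\sP(E_i)$ does not depend on $i$.
\item Deterministically, at least $k$ indices satisfy $E_i$: the indices of the $k$ smallest values do, and ties only enlarge this set. Hence $\sum_{i=1}^{n+1} \mathbbm{1}(E_i) \geq k$.
\end{itemize}
Taking expectations gives $(n+1)\,\sP(E_{n+1}) \geq k$. Therefore
\begin{equation*}
\sP(E_{n+1}) \geq \frac{\ceil{(n+1)\gamma}}{n+1} \geq \gamma.
\end{equation*}

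Combining this with the first step, $\sP(S_{n+1} \leq \hlambda) = \sP(E_{n+1}) \geq \gamma$. This is \eqref{eq:cp_guarantee}.
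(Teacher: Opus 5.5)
Your proof is correct and is essentially the paper's argument. The paper does not prove this theorem separately, but specializing the lower-bound proof of Theorem~\ref{th:main} to $A^{(i)}(\lambda)=\mathbbm{1}(S_i\le\lambda)$ gives exactly your steps: $\lambda^*$ is the $k$-th smallest of all $n+1$ scores, $\sum_i\mathbbm{1}(E_i)\ge k$ plays the role of $\bar{A}_{n+1}(\lambda^*)\ge\gamma$, and your second bullet plays the role of $\lambda^*\le\hlambda$ plus monotonicity.

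One minor point concerns your first bullet. Its reasoning actually shows that the $k$-th smallest of all scores is at most $S_{(k)}$, not that it is at least $S_{n+1}$. For the latter, argue that at most $k-1$ scores lie strictly below $S_{(k)}\ge S_{n+1}$. This direction is not needed anyway, since the inclusion $E_{n+1}\subseteq\{S_{n+1}\le\hlambda\}$ from your second bullet already yields the bound.
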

Recall that the random variables $Z^{(1)}, \dots, Z^{(n+1)}$ are \emph{exchangeable} if their joint distribution is invariant under any permutation of their indices. Exchangeability is a remarkably flexible assumption, as it is strictly weaker than the \emph{i.i.d.} requirement. Two remarks are worth making about \autoref{th:cp}. First, note that the probability in \autoref{eq:cp_guarantee} is taken with respect to all random quantities inside it, including the calibration dataset itself (upon which $\hlambda$ depends). Second, 
although the conformal guarantee in \autoref{eq:cp_guarantee} holds regardless of the underlying data-generating distribution and the choice of $S^{\downarrow}$, the practical usefulness of CP does depend on $S^{\downarrow}$; for example, a poor choice of $S^{\downarrow}$ could lead to $\mathcal{C}_{\hlambda}(X^{(n+1)})=\Y_{\text{GT}}$, which would render the conformal guarantee vacuously true.

\textbf{Conformal Risk Control (CRC).} \citet{angelopoulos2024conformal} extended CP to a more general setting where rather than controlling for coverage, the expectation of an arbitrary utility function is controlled instead.\footnote{CRC is usually formulated as upper-bounding a loss function. Lower-bounding a utility function is equivalent, and will facilitate comparisons to our method.} Let $\mathcal{C}_\lambda : \X \rightarrow 2^{\Y_{\text{GT}}}$ be a family of functions, indexed by $\lambda \in \Lambda \coloneqq [\lambda_{\text{min}}, \lambda_{\text{max}}]$, producing prediction sets, and let $U : 2^{\Y_{\text{GT}}} \times \Y_{\text{GT}} \rightarrow [0,\infty]$ be a utility function on prediction sets. 
Intuitively, larger values of $\lambda$ should result in more conservative prediction sets, i.e., ones achieving higher utility. 
CRC performs calibration by finding the smallest $\lambda$ such that the average utility on the calibration set exceeds $\tfrac{n+1}{n}\gamma$. Formally,
\begin{equation}\label{eq:crc_lambda}
    \hlambda = \inf \left\{\lambda \in \Lambda : \bar{U}_n(\lambda) \geq  \dfrac{n+1}{n}\gamma \right\} \wedge \lambda_{\text{max}},
\end{equation}
where $\bar{U}_n(\lambda) \coloneqq \tfrac{1}{n}\sum_{i=1}^n U^{(i)}(\lambda)$ and $U^{(i)}(\lambda) \coloneqq U(\mathcal{C}_\lambda(X^{(i)}), Y^{(i)}_{\text{GT}})$.\footnote{Recall that $\wedge$ and $\vee$ denote the minimum and maximum between two numbers, respectively, and that $\inf \emptyset = \infty$; the minimum in \autoref{eq:crc_lambda} handles this case.} Under some regularity conditions, a conformal guarantee follows:
\begin{theorem}[\citet{angelopoulos2024conformal}]\label{th:crc}
    Assume the functions $U^{(1)}, \dots, U^{(n+1)}$ are exchangeable, and that they are right-continuous and non-decreasing on $\Lambda$, almost surely. Assume also that $U^{(i)}(\lambda_{\max}) \geq \gamma$, almost surely. 
    Then, for $\hlambda$ defined as in \autoref{eq:crc_lambda}, the following inequality holds:
    \begin{equation}\label{eq:crc_guarantee}
        \E\left[U^{(n+1)}(\hlambda)\right] \geq \gamma.
    \end{equation}
\end{theorem}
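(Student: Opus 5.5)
The plan is to run the standard CRC argument of \citet{angelopoulos2024conformal}, adapted to the utility (lower-bound) formulation. First I introduce an oracle threshold that uses all $n+1$ utilities:
\begin{equation*}
    \hlambda' \coloneqq \inf\left\{\lambda \in \Lambda : \bar{U}_{n+1}(\lambda) \geq \gamma\right\} \wedge \lambda_{\max}, \qquad \bar{U}_{n+1}(\lambda) \coloneqq \tfrac{1}{n+1}\textstyle\sum_{i=1}^{n+1} U^{(i)}(\lambda).
\end{equation*}
The assumption $U^{(i)}(\lambda_{\max}) \geq \gamma$ almost surely gives $\bar{U}_{n+1}(\lambda_{\max}) \geq \gamma$, so the set defining $\hlambda'$ is nonempty. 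Each $U^{(i)}$ is non-decreasing and right-continuous, hence so is $\bar{U}_{n+1}$. The infimum is therefore attained, and $\bar{U}_{n+1}(\hlambda') \geq \gamma$.

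Next I compare $\hlambda$ with $\hlambda'$ and show $\hlambda' \leq \hlambda$. If $\hlambda = \lambda_{\max}$ this is immediate. Otherwise, right-continuity of $\bar{U}_n$ gives $\bar{U}_n(\hlambda) \geq \tfrac{n+1}{n}\gamma$. Since $U^{(n+1)} \geq 0$,
\begin{equation*}
    \bar{U}_{n+1}(\hlambda) = \tfrac{n}{n+1}\bar{U}_n(\hlambda) + \tfrac{1}{n+1}U^{(n+1)}(\hlambda) \geq \gamma .
\end{equation*}
This places $\hlambda$ in the set defining $\hlambda'$, so $\hlambda' \leq \hlambda$. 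Monotonicity of $U^{(n+1)}$ then gives $U^{(n+1)}(\hlambda) \geq U^{(n+1)}(\hlambda')$, and it suffices to show $\E[U^{(n+1)}(\hlambda')] \geq \gamma$.

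The key step is exchangeability. The threshold $\hlambda'$ is a symmetric function of the multiset $\{U^{(1)},\dots,U^{(n+1)}\}$. Because $(U^{(1)},\dots,U^{(n+1)})$ is exchangeable, the pairs $(U^{(i)}, \hlambda')$ all have the same distribution, so $\E[U^{(n+1)}(\hlambda')] = \E[U^{(i)}(\hlambda')]$ for every $i$. Averaging over $i$,
\begin{equation*}
    \E\left[U^{(n+1)}(\hlambda')\right] = \E\left[\bar{U}_{n+1}(\hlambda')\right] \geq \gamma,
\end{equation*}
which proves the theorem.

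I expect the main obstacle to be measure-theoretic rather than conceptual. Two things need care: that $\hlambda'$ is a measurable symmetric functional of random functions, and that the exchangeability of function-valued random elements transfers to the pairs $(U^{(i)},\hlambda')$. I would handle both by viewing $\hlambda'$ as $g(U^{(1)},\dots,U^{(n+1)})$ with $g$ permutation-invariant, and applying the permutation swapping indices $i$ and $n+1$. The possibility $U=\infty$ is harmless: all expectations are of non-negative quantities, and the inequalities hold in $[0,\infty]$.
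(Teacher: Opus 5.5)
Your proposal is correct and follows essentially the same route as the paper's proof of \autoref{th:main}, which contains CRC as a special case. Both arguments use an oracle threshold $\lambda^*$ built from all $n+1$ utilities, exchangeability and symmetry of $\lambda^*$ to get $\E[U^{(n+1)}(\lambda^*)]=\E[\bar{U}_{n+1}(\lambda^*)]\geq\gamma$, non-negativity to get $\lambda^*\leq\hat{\lambda}$, and then monotonicity to conclude. The only difference is that you use almost-sure monotonicity and right-continuity directly, whereas the paper uses weaker substitutes that your CRC hypotheses imply: monotonicity of $\lambda\mapsto\E[A^{(n+1)}(\lambda)\mid\hat{\lambda},\lambda^*]$, and the infimum being attained.
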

Note that the functions $U^{(1)}, \dots, U^{(n+1)}$ are exchangeable whenever $(X^{(1)}, Y^{(1)}_{\textnormal{GT}}), \dots, (X^{(n+1)}, Y^{(n+1)}_{\textnormal{GT}})$ are exchangeable, provided $U$ and $\mathcal{C}_\lambda$ are either deterministic, or independent of the features and ground truth variables. 
Once again, the expectation in \autoref{eq:crc_guarantee} is taken with respect to all random quantities inside it.

The lower bounds in \autoref{th:cp} and \autoref{th:crc} represent the foundational guarantees of CP and CRC, respectively. Corresponding upper bounds quantifying the degree of over-coverage in the resulting sets also exist; we summarize them in \autoref{app:upper_bound}. 
Note that CRC generalizes CP; indeed, \autoref{eq:crc_guarantee} recovers \autoref{eq:cp_guarantee} when $U(\mathcal{C}, y) = \mathds{1}(y \in \mathcal{C})$.

\section{Conformal Generation}\label{sec:method}
\textbf{Motivation.} The goal of this section is to extend conformal methodology to generative tasks in a highly general way. We begin by identifying several practical limitations in the current application of CRC. First, while \autoref{th:crc} does not explicitly restrict $\mathcal{X}$ to the feature space of a supervised learning problem, 
\citet{angelopoulos2024conformal} focus exclusively on this setting. Second, $\mathcal{C}_\lambda$ always outputs sets, despite this not being a prerequisite to maintain exchangeability. Third, current implementations assume $U$ is a callable function, which precludes utilities defined by procedures such as human evaluation. Finally, the requirement that $U^{(i)}$ be non-decreasing is overly restrictive for some generative tasks. We will see that Conf-Gen addresses all these limitations.

\textbf{Notation.} We use calligraphic font to denote sets (e.g., $\X$ and $\Y$), and we write the space of finite $\Y$-valued sequences as $\Y^*$, i.e., $\Y^* \coloneqq \cup_{t=0}^\infty \Y^t$. We use lower-case letters to denote fixed elements of these sets (e.g., $x \in \X$ and $y \in \Y$), upper-case letters to denote random variables (e.g., $X \in \X$ and $Y \in \Y$), and bold font to denote tuples and sequences (e.g., a random sequence $\bY = (Y_1,\dots, Y_T) \in \Y^*$). We denote the length of a sequence $\by$ as $|\by|$, and its subsequence given by $(y_1, \dots, y_t)$ as $\by_{:t}$.

\textbf{Setting.} In Conf-Gen we observe samples $\bG = (X, \bY, Y_{\text{GT}})$ from some distribution $\Pjoint$ on $\X \times \Y^* \times \Y_{\text{GT}} $. Here, $X$ corresponds to a conditioning variable or ``input'' of the generative model, $\bY$ is a sequence of generations or ``outputs'' from the model, and $Y_{\text{GT}}$ is an optional ground truth corresponding to $X$. For example, $\X$ and $\Y$ could both be the space of text, with $X$ corresponding to a question, $\bY$ to a sequence of answers to $X$ generated by an LLM, and $Y_{\text{GT}}$ to a ground truth answer; we will use this as a running example throughout our paper, but we nonetheless highlight that our method is much more general. The goal of Conf-Gen will be to use $(X, \bY)$, which plays the role of the ``features'' in CRC, to produce a set (or more generally, a structured output, such as a sequence) guaranteed to have large expected admissibility, which expands the notion of utility in CRC to the generative realm. Conf-Gen provides partial control over $\Pjoint$, whereas CRC lacks control over its data-generating distribution. For instance, in our running example, we can choose how each individual answer $Y_t$ in $\bY$ is generated (e.g., they could be generated independently, or we could also enforce the LLM to provide a different answer than its previous ones, $Y_1, \dots, Y_{t-1}$).

The Conf-Gen framework is comprised by three components: a parameterized family of selection functions, an admissibility function, and a calibration dataset. We define and discuss these components below.

\textbf{Selection Functions.} We assume access to a family of potentially stochastic functions, $\C: \X \times \Y^* \rightarrow \gS$, indexed by $\lambda \in \Lambda$, where $\Lambda \subseteq [-\infty, \infty]$ is such that $\infty \in \Lambda$. 
Here, $\gS$ is the output space over which we will provide a conformal guarantee; it is precisely the freedom to choose $\gS$ that allows Conf-Gen to extend CRC beyond sets. Natural choices of $\gS$ include $\gS=2^\Y$ and $\gS=\Y^*$; most examples in our paper will use the latter, and we hence write $\C$ in bold font. Intuitively, larger values of $\lambda$ should correspond to $\C$ producing more conservative outputs, i.e., less useful but achieving larger admissibility values. $\C$ is analogous to $\mathcal{C}_\lambda$ from CP and CRC, but it takes a sequence of generations as an additional input.
In our running example, $\C(x, \mathbf{y})$ could be given by the subsequence of $\mathbf{y}$ containing only the elements $y_t$ such that $S^{\downarrow}(x, y_t) \leq \lambda$, where $S^{\downarrow}: \X \times \Y \rightarrow \R$ is a score function, e.g., $S^{\downarrow}(x, y) = - S^{\uparrow}(x, y)$, where $S^{\uparrow}(x, y)$ is obtained by asking the LLM to provide a scalar value expressing its confidence that $y$ is a correct answer to the question $x$, or by having access to the likelihood function of the LLM.\footnote{We will use both $S^{\downarrow}$ and $S^{\uparrow}$ to denote score functions, with the convention that smaller values of $S^{\downarrow}$ are ``better'' (i.e., expected to have large instance-level admissibility, see \autoref{sec:efficient}; in our LLM example, this means correctness of the answer), whereas larger values of $S^{\uparrow}$ are ``better''.}

\textbf{Admissibility Function.} We also assume the existence of a potentially stochastic admissibility function $A: \X \times \gS \times \Y_{\text{GT}} \rightarrow [0, \infty]$, where $A(X, \C(X, \bY), Y_{\text{GT}})$ generalizes the role of $U(\mathcal{C}_\lambda(X), Y_{\text{GT}})$ in CRC. Intuitively, larger values of $A$ are more desirable, but the outputs of $\C$ must become more conservative to increase $A$. In our running example, $A(x, \by, y_{\text{GT}})$ will be a binary variable indicating whether at least one element of $\by$ is a correct answer to $x$ (other choices, e.g., a scalar indicating the quality of the best answer in $\by$, could also be possible). When the ground truth is well-defined, this $A$ could be given by an LLM call checking the semantic equivalence between $y_{\text{GT}}$ and each element of $\by$. 
Importantly though, we do not assume that $A$ is callable. For example, the question $x$ being asked could so open-ended that it has no corresponding ground truth answer $y_{\text{GT}}$. In this case, $A$ could instead be given by human evaluation without access to $y_{\text{GT}}$; mathematically, this corresponds to $A(x, \by, y_{\text{GT}})$ ignoring $y_{\text{GT}}$, and thus Conf-Gen does not require a well-defined ground truth.

\textbf{Calibration Dataset.} Let us consider a sequence $\bG_{:n} \coloneqq ((X^{(i)}, \bY^{(i)}, Y_{\text{GT}}^{(i)}))_{i=1}^n$ containing $n$ samples from $\Pjoint$. The last component for Conf-Gen is a calibration dataset, $\bD_{:n}$, which contains the admissibility of every tuple in $\bG_{:n}$ for every $\lambda \in \Lambda$, i.e.,
\begin{equation}
    \bD_{:n} \coloneqq \left(\left\{A^{(i)}(\lambda)\right\}_{\lambda \in \Lambda}\right)_{i=1}^n,
\end{equation}
where $A^{(i)}(\lambda) \coloneqq A(X^{(i)}, \C(X^{(i)}, \bY^{(i)}), Y_{\text{GT}}^{(i)})$. 
Note that the generated sequences in $\bG_{:n}$ need not have the same length; we make this explicit by writing $\bY^{(i)} = (Y_1^{(i)}, \dots, Y^{(i)}_{T_i})$. 
We also highlight that while $A$ need not be callable, we do require access to $\bD_{:n}$. In our running LLM example, this can be achieved---even when $A$ is not callable---by evaluating the correctness of every answer in $\bY^{(i)}$ for every $i=1,\dots,n$. We will cover this point in more detail and generality in \autoref{sec:efficient}.

\textbf{Conformal Generation.} For a given instance of Conf-Gen composed of the above components, the goal is to leverage the calibration dataset to find $\hlambda$ such that $\mathbf{C}_{\hlambda}$ produces outputs that achieve large admissibility values. In our running example, this means $\Chat(X^{(n+1)}, \bY^{(n+1)})$ contains a correct answer with high probability. In analogy to CRC, calibration is performed by finding $\hlambda$, the smallest $\lambda$ such that the average admissibility on the calibration dataset is at least $\tfrac{n+1}{n}\gamma$. More formally, for $\gamma \geq 0$, we define
\begin{equation}\label{eq:cg_lambda}
    \hlambda \coloneqq \inf \V\left(\bD_{:n}, \dfrac{n+1}{n}\gamma \right),
\end{equation}
where
\begin{equation}
    \V(\bD_{:n}, \gamma')  \coloneqq \left\{\lambda \in \Lambda : \bar{A}_n(\lambda) \geq \gamma' \right\} \cup \{\infty\}
\end{equation}
and $\bar{A}_n(\lambda) \coloneqq \tfrac{1}{n}\sum_{i=1}^n A^{(i)}(\lambda)$. We now introduce the notion of a \emph{$\gamma$-sensible} instance of Conf-Gen, which provides the requirements for our conformal guarantees.
\begin{restatable}{definition}{defsens}\label{def:sensible}
    We say that an instance of Conf-Gen is \emph{$\gamma$-sensible} if the following properties hold:
    \iftoggle{restateMode}{
    \begin{enumerate}[topsep=-0.5\parskip, label=(\alph*)]
        \item The functions $A^{(1)}, \dots, A^{(n+1)}$ are exchangeable.
        \item For every $\gamma', \gamma'' \geq 0$, if $\lambda' = \inf \V(\bD_{:n}, \gamma')$ and $\lambda'' = \inf \V(\bD_{:n+1}, \gamma'')$, then:
        \begin{enumerate}[label=(\alph{enumi}\arabic*)]
            \item The map given by $ \lambda \mapsto \E[A^{(n+1)}(\lambda)\mid \lambda', \lambda'']$ is  non-decreasing on $\Lambda$, almost surely.
            \item $\lambda' \in \V(\bD_{:n}, \gamma')$ and $\lambda'' \in \V(\bD_{:n+1}, \gamma'')$, almost surely.
        \end{enumerate}
        \item $A(X, \mathbf{C}_{\infty}(X, \bY), Y_{\textnormal{GT}})\geq \gamma$, almost surely.
    \end{enumerate}
    }{
    \begin{enumerate}[leftmargin=*, nosep, topsep=-\parskip, label=(\alph*)]
        \item The functions $A^{(1)}, \dots, A^{(n+1)}$ are exchangeable.
        \item For every $\gamma', \gamma'' \geq 0$, if $\lambda' = \inf \V(\bD_{:n}, \gamma')$ and $\lambda'' = \inf \V(\bD_{:n+1}, \gamma'')$, then:
        \begin{enumerate}[leftmargin=*, nosep, topsep=-\parskip, label=(\alph{enumi}\arabic*)]
            \item The map given by $ \lambda \mapsto \E[A^{(n+1)}(\lambda)\mid \lambda', \lambda'']$ is  non-decreasing on $\Lambda$, almost surely.
            \item $\lambda' \in \V(\bD_{:n}, \gamma')$ and $\lambda'' \in \V(\bD_{:n+1}, \gamma'')$, almost surely.
        \end{enumerate}
        \item $A(X, \mathbf{C}_{\infty}(X, \bY), Y_{\textnormal{GT}})\geq \gamma$, almost surely.\footnote{In practice, this property can always be achieved. For example, we can add an element, $\texttt{Abstain}$, to $\Y$, define $A(x, \by, y_{\text{GT}})$ to be $\gamma$ (or any larger scalar) whenever $\texttt{Abstain}$ is an element of $\by$, and define $\mathbf{C}_{\infty}(x, \by)$ to always have $\texttt{Abstain}$ as an element. Intuitively, a sequence containing $\texttt{Abstain}$ represents that we abstain from making a non-vacuous conformal guarantee, analogously to having $\mathcal{C}_{\hlambda}(X^{(n+1)}) = \Y_{\text{GT}}$ in CP.}
    \end{enumerate}
    }
\end{restatable}
Property $(a)$, which is identical to the exchangeability assumption of CRC, holds whenever $\bG_{:n+1}$ is exchangeable and both $A$ and $\C$ are independent of $\bG_{:n+1}$. Note that in our running example this property does not require tokens within an answer, nor answers within a sequence of answers, to be exchangeable; we only require different sequences of answers themselves to be exchangeable. Intuitively, property $(b1)$ weakens the strict monotonicity required by CRC to ``sufficient monotonicity'', and $(b2)$ relaxes the right-continuity assumption of CRC. Lastly, $(c)$ is analogous to the condition that $U(\mathcal{C}_{\lambda_{\max}}(X), Y_{\text{GT}})\geq \gamma$ in CRC, although we make it explicit that $\Lambda$ need not be bounded by defaulting its maximal element to $\infty$ and requiring that $\infty \in \Lambda$.
\begin{restatable}{theorem}{thmain}\label{th:main}
    Consider a $\gamma$-sensible instance of Conf-Gen. Then, for $\hlambda$ as defined in \autoref{eq:cg_lambda},
    \iftoggle{restateMode}{
    \begin{equation*}
        \E\left[A^{(n+1)}(\hlambda)\right] \geq \gamma.
    \end{equation*}
    Additionally, if $A: \X \times \gS \times \Y_\textnormal{GT} \rightarrow [0, a_{\textnormal{max}}]$, then
    \begin{equation*}
        \E\left[A^{(n+1)}(\hlambda)\right] \leq \gamma + \dfrac{a_{\textnormal{max}}}{n+1} + \E[H],
    \end{equation*}
    where $\lambda^{**} \coloneqq \inf \V(\bD_{:n+1}, \gamma + \tfrac{a_{\textnormal{max}}}{n+1})$ and
    \begin{equation*}
        H \coloneqq \bar{A}_{n+1}(\lambda^{**}) - \sup \left\{ \bar{A}_{n+1}(\lambda) : \lambda \in \Lambda, \lambda < \lambda^{**}\right\} \vee 0.
    \end{equation*}
    }{
    \begin{equation}\label{eq:cg_guarantee}
        \E\left[A^{(n+1)}(\hlambda)\right] \geq \gamma.
    \end{equation}
    Additionally, if $A: \X \times \gS \times \Y_\textnormal{GT} \rightarrow [0, a_{\textnormal{max}}]$, then
    \begin{equation}\label{eq:cg_guarantee2}
        \E\left[A^{(n+1)}(\hlambda)\right] \leq \gamma + \dfrac{a_{\textnormal{max}}}{n+1} + \E[H],
    \end{equation}
    where $\lambda^{**} \coloneqq \inf \V(\bD_{:n+1}, \gamma + \tfrac{a_{\textnormal{max}}}{n+1})$ and
    \begin{equation}\label{eq:h_def}
    \scalebox{0.86}{
        $H \coloneqq \bar{A}_{n+1}(\lambda^{**}) - \sup \left\{ \bar{A}_{n+1}(\lambda) : \lambda \in \Lambda, \lambda < \lambda^{**}\right\} \vee 0$}.
    \end{equation}
    }
\end{restatable}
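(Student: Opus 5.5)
We adapt the CRC argument of \citet{angelopoulos2024conformal}, replacing pointwise monotonicity by the conditional monotonicity in property $(b1)$. For the lower bound, introduce the oracle $\lambda^{*} \coloneqq \inf \V(\bD_{:n+1}, \gamma)$, computed from all $n+1$ admissibility functions. By $(b2)$, $\lambda^{*} \in \V(\bD_{:n+1}, \gamma)$ almost surely. Hence either $\bar{A}_{n+1}(\lambda^{*}) \geq \gamma$, or $\lambda^{*} = \infty$; in the latter case property $(c)$ gives $A^{(i)}(\infty)\geq\gamma$ for all $i$, so $\bar{A}_{n+1}(\infty)\geq\gamma$ too. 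Since $\lambda^{*}$ is a symmetric function of $A^{(1)},\dots,A^{(n+1)}$, exchangeability $(a)$ yields $\E[A^{(n+1)}(\lambda^{*})] = \E[\bar{A}_{n+1}(\lambda^{*})] \geq \gamma$.

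Next we show $\hlambda \geq \lambda^{*}$ almost surely. By $(b2)$ either $\hlambda=\infty$, or $\bar{A}_n(\hlambda)\geq \tfrac{n+1}{n}\gamma$. In the second case nonnegativity of $A^{(n+1)}$ gives $\bar{A}_{n+1}(\hlambda) \geq \tfrac{n}{n+1}\bar{A}_n(\hlambda) \geq \gamma$, so $\hlambda \in \V(\bD_{:n+1},\gamma)$ and $\hlambda\geq\lambda^{*}$. To compare $A^{(n+1)}(\hlambda)$ with $A^{(n+1)}(\lambda^{*})$ we cannot use pathwise monotonicity. Instead we condition on the pair $(\lambda',\lambda'')=(\hlambda,\lambda^{*})$, which is exactly the form of $(b1)$ with $\gamma'=\tfrac{n+1}{n}\gamma$ and $\gamma''=\gamma$. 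The map $\lambda\mapsto \E[A^{(n+1)}(\lambda)\mid \hlambda,\lambda^{*}]$ is non-decreasing. Evaluating it at the conditioned values $\hlambda\geq\lambda^{*}$ gives
$\E[A^{(n+1)}(\hlambda)\mid\hlambda,\lambda^{*}] \geq \E[A^{(n+1)}(\lambda^{*})\mid\hlambda,\lambda^{*}]$.
Taking expectations finishes the lower bound.

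\textbf{Main obstacle.} The delicate step is justifying that "evaluate the conditional expectation map at the conditioned random point" equals $\E[A^{(n+1)}(\hlambda)\mid \hlambda,\lambda^{*}]$. This is a substitution/disintegration argument. I would handle it by working with regular conditional distributions given $(\hlambda,\lambda^{*})$, which are measurable functions of $\bD_{:n+1}$. I would interpret $(b1)$ as a statement about the conditional law of the process $A^{(n+1)}(\cdot)$, so that plugging in $\sigma(\hlambda,\lambda^{*})$-measurable arguments is legitimate. I expect to state this as the intended reading of $(b1)$, with a measurability caveat.

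For the upper bound, set $\lambda^{**}=\inf\V(\bD_{:n+1},\gamma+\tfrac{a_{\max}}{n+1})$ and show $\hlambda\leq\lambda^{**}$. If $\lambda^{**}<\infty$, then by $(b2)$ $\bar{A}_{n+1}(\lambda^{**})\geq\gamma+\tfrac{a_{\max}}{n+1}$. Since $A^{(n+1)}\leq a_{\max}$, this gives $\bar{A}_n(\lambda^{**})\geq \tfrac{n+1}{n}\gamma$, so $\lambda^{**}\in\V(\bD_{:n},\tfrac{n+1}{n}\gamma)$ and $\hlambda\le\lambda^{**}$. The case $\lambda^{**}=\infty$ is trivial.

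Apply $(b1)$ again, now with $(\lambda',\lambda'')=(\hlambda,\lambda^{**})$, to get $\E[A^{(n+1)}(\hlambda)]\leq\E[A^{(n+1)}(\lambda^{**})]$. By exchangeability this equals $\E[\bar{A}_{n+1}(\lambda^{**})]$. By definition of $H$, $\bar{A}_{n+1}(\lambda^{**}) = H + \sup\{\bar{A}_{n+1}(\lambda):\lambda<\lambda^{**}\}$ when that supremum is positive, and $\leq H$ otherwise. Every $\lambda<\lambda^{**}$ lies outside $\V(\bD_{:n+1},\gamma+\tfrac{a_{\max}}{n+1})$, so that supremum is at most $\gamma+\tfrac{a_{\max}}{n+1}$. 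Together these give $\bar{A}_{n+1}(\lambda^{**})\le \gamma+\tfrac{a_{\max}}{n+1}+H$. Two edge cases need care: if no $\lambda<\lambda^{**}$ exists, the supremum is $-\infty$ and the $\vee 0$ makes $H=\bar{A}_{n+1}(\lambda^{**})$; and if $\lambda^{**}=\infty$, the definition of $H$ covers it. Taking expectations yields \autoref{eq:cg_guarantee2}.
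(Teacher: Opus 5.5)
Your proposal is correct and follows the paper's proof: the oracle thresholds $\lambda^*$ and $\lambda^{**}$, exchangeability to identify $\E[A^{(n+1)}(\cdot)]$ with $\E[\bar{A}_{n+1}(\cdot)]$ at these symmetric thresholds, the sandwich $\lambda^* \leq \hlambda \leq \lambda^{**}$, and property $(b1)$ to carry that ordering through the conditional expectation given each pair of thresholds. You are slightly more explicit than the paper in two places, neither of which changes the argument: you invoke $(c)$ for the case $\lambda^*=\infty$, and you flag the substitution $\E[A^{(n+1)}(\hlambda)\mid\hlambda,\lambda^*]=B(\hlambda)$, which the paper justifies only by the measurability of $B(\lambda)$ for each fixed $\lambda$.
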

\begin{proof}
    See \autoref{app:proof}.
\end{proof}

\textbf{Conf-Gen vs CRC.} 
Despite \autoref{def:sensible} being relatively technical, Conf-Gen relies on strictly weaker assumptions than CRC. Most notably, it relaxes the monotonicity assumption from an almost sure condition to one in conditional expectation. It also enables the construction of conformal objects beyond sets ($\gS=2^\Y$), such as sequences ($\gS=\Y^*$), along with other minor technical refinements. 
Conf-Gen serves as a strict generalization of CRC, where any instance of the latter can be recovered as a specific case of the former (by having $\C$ ignore $\bY$, and $A$ ignore $X$). 
The gap between the lower bound in \autoref{eq:cg_guarantee} and the upper bound in \autoref{eq:cg_guarantee2} is determined by $\E[H]$, where, intuitively, $H$ quantifies the size of the jumps of the average admissibility, $\bar{A}_{n+1}$. Importantly, \autoref{eq:cg_guarantee2} generalizes the existing upper bound for CRC. A detailed discussion of the theoretical assumptions underpinning Conf-Gen, as well as a comprehensive comparison to CRC, is provided in \autoref{app:extra}.

\section{Computational Considerations}\label{sec:computation}
In this section we outline common patterns in the selection and admissibility functions which enable efficient calibration and inference within Conf-Gen. Our accompanying Python package supports all the instances and patterns we mention, as well as the straightforward customization of all Conf-Gen components. Beyond the examples presented here, our package includes a broader suite of utilities designed to support a wide range of Conf-Gen applications.

\subsection{Efficient Calibration}\label{sec:efficient}

In practice, the procedure for finding $\hlambda$ must be tailored to the particular Conf-Gen instance at hand. If $\bar{A}_n$ is suitable to gradient-based optimization, $\hlambda$ can be obtained by calling a solver to minimize $\lambda$ subject to $\bar{A}_n(\lambda) \geq \tfrac{n+1}{n}\gamma$. If $\Lambda$ is finite, constructing $\bD_{:n}$ from $\bG_{:n}$ requires evaluating the admissibility function $n\times |\Lambda|$ times, after which computing $\hlambda$ is straightforward; this process could range from trivially cheap to prohibitively expensive, depending on the cost of evaluating $A$. If $\Lambda$ is infinite and $A$ is not a callable function (e.g., if it is given by human evaluation), calibration might seem out of reach since calling a solver is not feasible and constructing $\bD_{:n}$ appears impossible.

\textbf{Score-Based Selection Functions.} Specific choices of $\C$ enable calibration even in this challenging scenario. Our key insight here is that if $\C$ has a finite image as a function of $\lambda$ (i.e., if $\{\C(X, \bY) : \lambda \in \Lambda\}$ is finite, almost surely), then we can characterize $\bD_{:n}$---even if the set $\Lambda$ is infinite---by evaluating $A^{(i)}$ at the finitely-many possible outputs of $\C(X^{(i)}, \bY^{(i)})$, for every $i=1,\dots, n$. A finite image can be achieved by defining $\C(x, \by)$ as a subsequence of $\by$ derived from scores assigned to each element in $\by$. In the example from \autoref{sec:method}, the selection function uses scores $S_t^{\downarrow} = S^{\downarrow}(x, y_t)$ to form the subsequence $\C(x, \by) = (y_t : S_t^{\downarrow} \leq \lambda)$. Other constructions are also viable; for example, $\C(x, \by)$ could be defined as $\by_{:\tau(x, \by, \lambda)}$, where $\tau(x, \by, \lambda)$ is the first index $t$ at which the accumulated score of $\by_{:t}$ exceeds $\lambda$. More formally,
\begin{equation}\label{eq:stopping_time}
\scalebox{0.9}{
$\tau(x, \by, \lambda) \coloneqq \inf \left\{t : \texttt{accum}\left(S_1^{\uparrow}, \dots, S_t^{\uparrow}\right) > \lambda\right\} \wedge |\by|,$}
\end{equation}
where $S_t^{\uparrow} = S^{\uparrow}(x, y_t)$ and $\texttt{accum}: \R^* \rightarrow \R$ represents an accumulation function such as a sum or a maximum. A sum would be appropriate when the individual scores are non-negative, and in the context of our running example, if the individual answers tend to be varied. Conversely, a maximum might be preferable if duplicate answers are frequent, as it prevents redundant answers from inflating the accumulated score. These examples are illustrative rather than exhaustive, for instance: the rules for generating a subsequence from individual scores are flexible, the score at time $t$ could depend on the entire prefix $\mathbf{y}_{:t}$ rather than just the individual element $y_t$, and the finite image property can still be satisfied when $\C(x, \by)$ is not a subsequence of $\mathbf{y}$.

\textbf{Admissibility Evaluation.} Without additional structure beyond a finite image, calibration requires evaluating $A$ across all potential outputs of $\C$; this can be suboptimal because the evaluation of $A$ becomes tied to the specific choice of $\C$. For example, testing a new selection function would require a completely new set of evaluations $\bD_{:n}$, even if $\bG_{:n}$ remains fixed. To address this issue, we can often identify an instance-level admissibility function, $A': \X \times \Y \times \Y_{\text{GT}} \rightarrow [0, \infty]$, which allows $A$ to be decomposed as
\begin{equation}\label{eq:instance_lvl_adm}
    A\left(x, \by, y_{\text{GT}}\right) = \texttt{agg}\left(A'_1, \dots, A'_T \right),
\end{equation}
where $A'_t \coloneqq A'(x, y_t, y_{\text{GT}})$ and $\texttt{agg}: \R^* \rightarrow \R$ is an aggregation function, such as the minimum or maximum. In our running example, $\texttt{agg}$ is the maximum and $A_t'$ represents the correctness of $y_t$ as an answer to $x$. Under this decomposition, when $\C(x, \by)$ is a subsequence of $\by$, evaluating $A'(X^{(i)}, Y_t^{(i)}, Y_{\text{GT}}^{(i)})$ for all $t=1,\dots, T_i$ and $i=1,\dots, n$ is sufficient for calibration. This process decouples the evaluation of $A$ from the choice of $\C$, as it requires evaluating $A'$ over the same fixed set of $\sum_{i=1}^n T_i$ points, regardless of $\C$. Lastly, as with selection functions, this example is not exhaustive, e.g., $A_t'$ could depend on the prefix $\by_{:t}$ rather than just the single element $y_t$.

\subsection{Efficient Inference through Partial Generation}

A priori, once calibration has been performed, computing $\Chat(X^{(n+1)}, \bY^{(n+1)})$ at test time requires access to both $X^{(n+1)}$ and $\bY^{(n+1)}$; however, many scenarios only necessitate partial access to these quantities. We refer to these cases as having the capability for \emph{partial generation}. For instance, there may exist a stopping time $\tau_{n+1} \leq T_{n+1}$ such that $\bY_{:\tau_{n+1}}^{(n+1)}$ contains all the information needed from $\bY^{(n+1)}$ to compute $\Chat(X^{(n+1)}, \bY^{(n+1)})$. 
When this is the case, we can sequentially generate $Y_1^{(n+1)}, \dots, Y_{\tau_{n+1}}^{(n+1)}$ and stop, thus avoiding having to generate the remaining $T_{n+1} - \tau_{n+1}$ elements; this can save significant compute when generation is expensive. 
In the case where $\C(x, \by) = \by_{:\tau(x, \by, \lambda)}$ for $\tau(x, \by, \lambda)$ defined as in \autoref{eq:stopping_time}, we have that $\tau_{n+1} = \tau(X^{(n+1)}, \bY^{(n+1)}, \hlambda)$. Notably, the potential for partial generation extends beyond this single instance.

\subsection{Examples}\label{sec:examples}

Not all design choices within Conf-Gen are mutually compatible, i.e., they do not always result in $\gamma$-sensibility holding. \autoref{tab:examples} presents various score-based selection functions alongside candidate admissibility functions with which they are compatible. These entries, which also detail partial generation capabilities, assume $\Lambda = [-\infty, \infty]$ and the instance-level decomposition defined in \autoref{eq:instance_lvl_adm}. Further examples are provided in \autoref{app:examples}.

The first two rows of \autoref{tab:examples} are suited for scenarios where the goal is to ensure that at least one element in $\C$ possesses high admissibility ($\text{agg}=\max$). In the second row, the nesting property, $\C \subseteq \mathbf{C}_{\lambda'}$ for $\lambda \leq \lambda'$, holds inherently. For the first row, this nesting is satisfied provided that $\texttt{accum} = \max$ or the score function is non-negative, as either condition ensures that $\tau(x, \by, \lambda)$ is non-decreasing in $\lambda$. In both cases, this nesting behavior ensures that the output of $\C$ becomes more conservative as $\lambda$ increases: since appending elements to a sequence cannot decrease the admissibility of its most admissible member, the mapping $\lambda \mapsto A(\lambda) \coloneqq A(X, \C(X, \bY), Y_{\text{GT}})$ is non-decreasing, and thus property $(b1)$ of the $\gamma$-sensibility requirements (\autoref{def:sensible}) is satisfied.

The third row is appropriate when the goal is for all elements in $\C$ to have large admissibility ($\texttt{agg}=\min$). Here, the nesting property is reversed: $\mathbf{C}_{\lambda'} \subseteq \C$ whenever $\lambda \leq \lambda'$. Nevertheless, the outputs of $\C$ still become more conservative as $\lambda$ increases because removing elements from a set cannot decrease the admissibility of its least admissible member; consequently, $\lambda \mapsto A(\lambda)$ remains non-decreasing. The conformal factuality framework \citep{mohri2024language}, discussed further in \autoref{sec:related}, constitutes an instance of this configuration.

In the fourth row, $\C$ always outputs a single element $(\gS = \Y)$; this is appropriate when the goal is for this element to have high admissibility (the aggregation function becomes irrelevant as long as $\texttt{agg}(y)=y$). Here, since nesting does not hold, $\C$ becoming increasingly conservative as $\lambda$ grows becomes contingent on $\Pjoint$; thus, formally verifying property $(b1)$ from \autoref{def:sensible} is harder. Regardless, $\gamma$-sensibility should still hold provided that $\tau(x, \by, \lambda)$ is non-decreasing in $\lambda$ and that $\Pjoint$ was chosen appropriately. For example, if $\bY$ consists of responses generated by progressively more capable LLMs, $\C$ would return the first response believed to be correct. While a stronger LLM might occasionally produce a worse answer than a weaker one---preventing $\lambda \mapsto A(\lambda)$ from being almost surely non-decreasing---the required monotonicity may still hold in conditional expectation.

Notably, all examples presented here are specifically constructed to ensure the right-continuity of the mapping $\lambda \mapsto A(\lambda)$, thus satisfying property $(b2)$ of \autoref{def:sensible}. This property is delicate; for instance, replacing strict inequalities with non-strict ones (or vice versa) within our definitions would generally violate this requirement.

\begin{table}[t]
    \centering
    \caption{Representative configurations of Conf-Gen.}
    \small
    \begin{tabular}{cccc}
\hline
\multicolumn{2}{c}{Selection function} & \multicolumn{2}{c}{Compatible with} \\ \cmidrule(r){1-2} \cmidrule(l){3-4}
$\C(x, \by)$ & $\texttt{accum}$ & $\texttt{agg}$ & Partial generation \\ \hline
$\by_{:\tau(x, \by, \lambda)}$ & $\max$/sum  & $\max$ & Yes   \\
$\{y_t \mid S_t^{\downarrow} \leq \lambda \}$ & N/A  & $\max$ & No \\
$\{y_t \mid S_t^{\uparrow} > \lambda \}$ & N/A  & $\min$ & No \\
$y_{\tau(x, \by, \lambda)}$ & $\max$  & $\max$/$\min$ & Yes  \\
\hline
\end{tabular}
    \label{tab:examples}
    \vspace{-10pt}
\end{table}

\section{Related Work as Instances of Conf-Gen}\label{sec:related}
UQ remains a more open problem for generative models than for supervised learning, with current research largely concentrated on LLMs. One prominent direction involves \emph{verbalized uncertainty}, where the model is prompted to provide a scalar estimate of its own confidence \citep{kadavath2022language, yang2024verbalized}. Another category is based on having the LLM generate multiple answers and aggregating the variability of these answers into a scalar measure of uncertainty \citep{wang2023lora, kuhn2023semantic, lin2024generating, grewal2024improving, hou2024decomposing, yang2024bayesian, gao-etal-2024-spuq, qiu2024semantic, nikitin2024kernel, wang2025subjective, ross2026textual}. These methods provide no formal guarantees and are orthogonal to Conf-Gen; indeed, they can in principle be used to define score functions within Conf-Gen.

Several works have applied conformal ideas to generative models. \citet{teneggi2023trust} apply CRC to image-to-image regression tasks with diffusion models \citep{sohl2015deep, ho2020denoising, song2021score}, and \citet{quach2024conformal} and \citet{kladny2025conformal} apply it to LLMs as in our running example from the previous section, i.e., they aim to produce a set of answers to a question $X$ such that with probability at least $\gamma$, the set contains at least one correct answer. We highlight a few key differences between these two works and ours. First, their admissibility function does not depend on $X$. They thus always assume access to a ground truth answer, and their admissibility function is limited to asking an LLM whether the generated answers match the ground truth. Second, these works sequentially filter generated sets of answers, e.g., to ensure that each answer has high likelihood and that no two answers in the set are too similar (as measured by distance on some embedding space). This choice effectively parameterizes $\C$ with multiple values, one per filter, rather than a single scalar $\lambda$. Because each parameter must be calibrated independently on a disjoint subset of the data in an overly conservative way, the procedure ultimately compounds into the production of unnecessarily large conformal sets. Although each of  these fragmented steps are themselves instances of Conf-Gen, our experiments in \autoref{sec:experiments} show that an instance of Conf-Gen sidestepping this partitioning yields smaller calibrated sets.

\citet{mohri2024language} proposed \emph{conformal factuality}, where an LLM answer $Y'$ to a question $X$ is decomposed into a sequence of claims $\mathbf{Y}=(Y_1, \dots, Y_T)$ via an auxiliary LLM call. Their algorithm produces a subsequence $\Chat(X, \bY)$ where all selected claims are factually correct with probability at least $\gamma$. \citet{kuwahara2025document} adapted this framework to \emph{conformal summarization}, where $\bY$ is now a sequence of sentences making up a document, and their goal is to ensure that some fraction $\beta$ of all important sentences in $\bY$ is retained with probability at least $\gamma$. As noted in \autoref{sec:examples}, conformal factuality is an instance of Conf-Gen. Conformal summarization can be similarly recovered as an instance of Conf-Gen where the admissibility function is defined as an indicator that the fraction of recovered admissible elements meets the threshold $\beta$ (see \autoref{app:examples}). We also highlight that both methodologies depend on a nesting property that corresponds to the monotonicity requirement in CRC, which is relaxed in Conf-Gen.

Lastly, in \emph{conformal agent error attribution} \citep{feng2026conformal}, $X$ is a task for an AI agent, $\bY$ corresponds to a sequence of actions taken by the agent which failed to complete task $X$, and $Y_{\text{GT}}$ indicates the step of the trajectory on which the agent made its first mistake. \citet{feng2026conformal} show how to obtain a contiguous subsequence of $\bY$ guaranteed to contain the first mistake with probability at least $\gamma$, thus allowing to rollback the agent to the time of its first mistake: this is another instance of Conf-Gen, where a sequence is admissible if it contains the first mistake.

\section{Experiments}\label{sec:experiments}
We validate our approach on a diverse set of tasks. Additional details for each experiment, including the precise formulation of every task as an instance of Conf-Gen, are provided in \autoref{appendix:experiments}. Except for the experiments in \autoref{sec:qa}, all tasks considered here are novel, representing the first instance of conformal guarantees being provided for these settings. These tasks include both binary and scalar-valued admissibility functions, some of which are not callable. We also consider a case where $\lambda \mapsto A(\lambda)$ is not non-decreasing almost surely, but where $\gamma$-sensibility is expected to hold in practice nonetheless. This variety highlights the wide applicability of Conf-Gen and of our Python package---which we used across all our experiments---as well as the practical relevance of our relaxed theoretical requirements. Code to reproduce all our results is included alongside the package.

All figures in this section contain two panels. Left panels show the average admissibility on a test set as a function of $\gamma$, alongside the corresponding lower bound as a diagonal line. These plots aim to empirically verify the conformal guarantee; we highlight that the few small dips below the diagonal lines do not contradict \autoref{th:main} as we can only plot empirical averages over a finite test set for a single calibration dataset, not true expectations. Since the conformal guarantee can be trivially achieved by always being maximally conservative, the practical usefulness of Conf-Gen also needs to be empirically verified for each task: this is the goal of the right panels.

\subsection{Conformal Open-Domain Question Answering}\label{sec:qa}

Following our running example, we consider a question answering task (as in the top panel of \autoref{fig:confgen}) where responses are generated independently. We utilize the selection function described in the first row of \autoref{tab:examples} (with $\texttt{accum}=\max$), except we remove all duplicate entries from $\by_{:\tau(x, \by, \lambda)}$. We benchmark Conf-Gen against the method of \citet{quach2024conformal} on the TriviaQA dataset \citep{joshi2017triviaqa}. For a fair comparison, we adopt their experimental configuration: we use LLaMA-13B \citep{touvron2023llama} as the base LLM and its corresponding length-normalized likelihood \citep{johnson2017google} as the score function $S^{\uparrow}$. Recall that here, admissibility means that the output contains at least one correct response; consequently, the conformal guarantee ensures that the selected sequence includes such a response with probability at least $\gamma$.

Results are displayed in \autoref{fig:triviaqa_results}: the left plot confirms that both Conf-Gen and the baseline satisfy the conformal guarantee, while the right plot demonstrates that our method yields more useful outputs by producing shorter sequences across most values of $\gamma$. Additionally, thanks to partial generation, this means that our method also requires fewer LLM calls to produce its output. We provide additional details and experiments for this task in \cref{sec:triviaqa_experimental_details}.

\begin{figure}
    \centering
        \includegraphics[width=\linewidth]{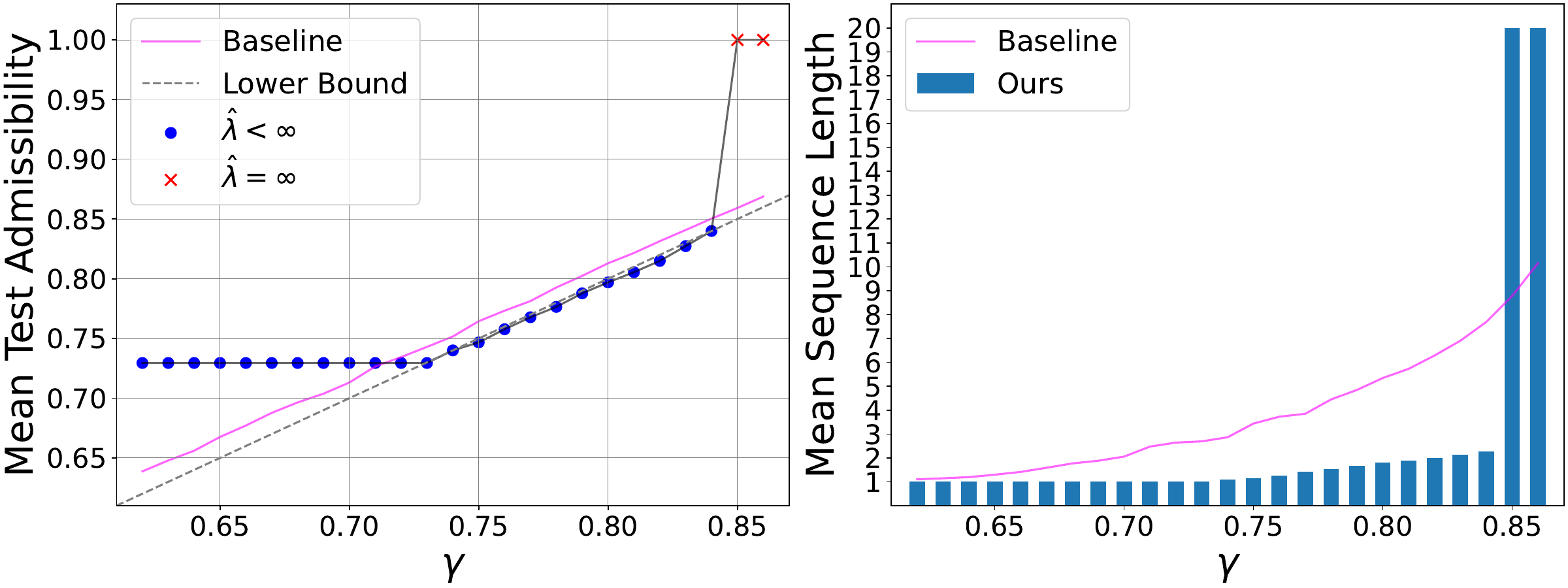}
    \caption{Results on test data for the question answering task.}
    \label{fig:triviaqa_results}
    \vspace{-10pt}
\end{figure}

\subsection{Conformal Generation of Non-Memorized Images}\label{sec:mem}

Image generation models such as Stable Diffusion \citep{rombach2022high} have been shown to memorize some of their training data \citep{wen2023detecting}. Reproducing memorized images during inference can pose a copyright infringement risk for model providers \citep{andersen2023case}: this makes obtaining conformal guarantees on generated images being non-memorized particularly appealing.

In this task, the input is a sequence $\bX=(X_1, \dots, X_T)$, where $X_1$ is a prompt of length $T-1$ describing an image $Y_{\text{GT}}$ that is memorized by Stable Diffusion v1.5, i.e., when given $X_1$, the model produces a near-duplicate of $Y_{\text{GT}}$. Subsequent prompts $X_t$ (for $t=2,\dots,T$) are generated by an LLM instructed to modify $t-1$ tokens of $X_1$; we follow the procedure of \citet{ross2025geometric}, which selects tokens deemed to contribute more strongly to memorization first. The sequence $\bY$ contains images generated from these perturbed prompts. We employ the selection function from the final row of \autoref{tab:examples}, with the score $S^{\uparrow}$ defined as the negative average norm of the classifier-free guidance term \citep{ho2022classifier}, which has been shown to be indicative of memorization \citep{wen2023detecting}. 

\begin{figure}
    \centering
        \includegraphics[width=\linewidth]{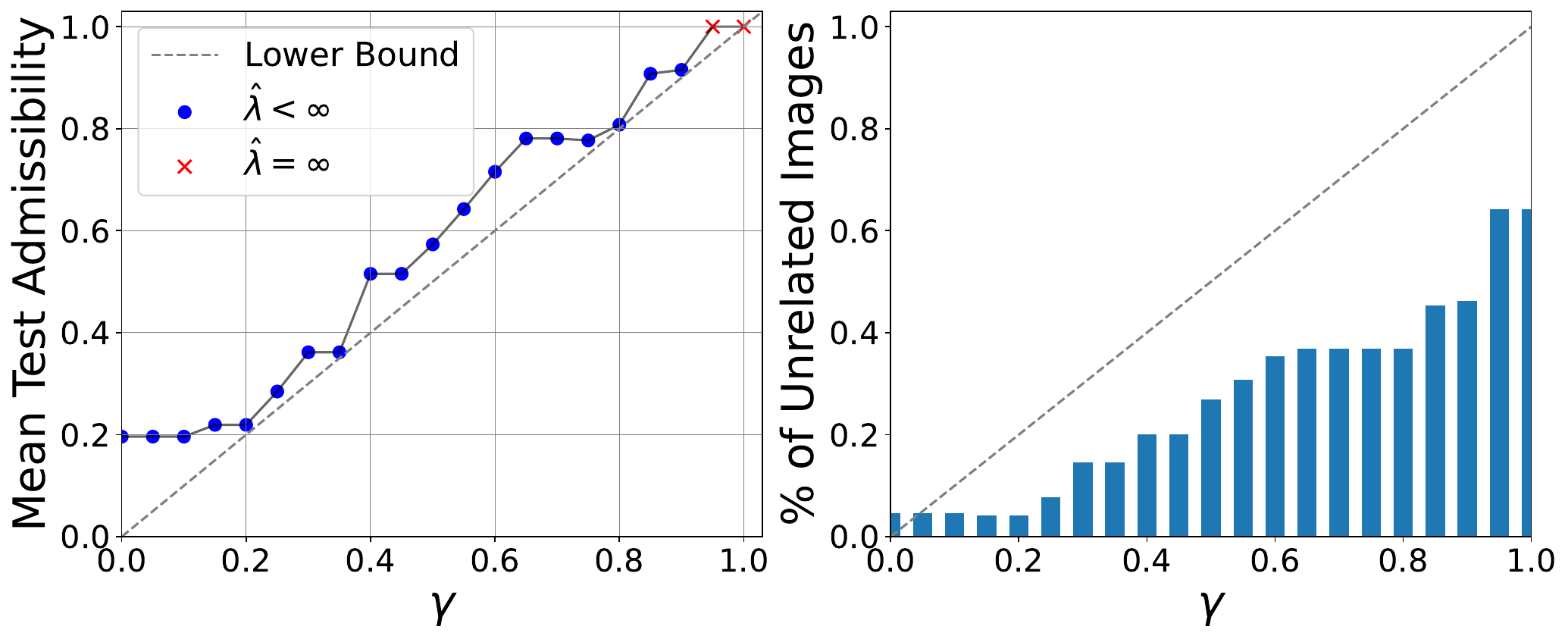}
    \caption{Results on test data for generation of non-memorized images.}
    \label{fig:mem}
    \vspace{-10pt}
\end{figure}

We rely on the memorized images from \citet{webster2023reproducible} and had their associated generations annotated by human evaluators. Specifically, we had $10$ human evaluators assess each generated image and label it as either ``good'', ``medium'', or ``bad''. Images labeled as ``good'' are those which retain the semantic content of the ground truth image, but did not memorize it; those labeled as ``medium'', while not memorized, are simply unrelated to the ground truth image; and those labeled as ``bad'' memorized the ground truth image. The admissibility $A$ is a scalar given by the fraction of $10$ human evaluators who classify the generated image as non-memorized relative to the ground truth (i.e., ``good'' or ``medium''). Consequently, the conformal guarantee ensures that, in expectation, the selected image would be classified as non-memorized by at least a fraction $\gamma$ of human evaluators. The bottom panel of \autoref{fig:confgen} provides a simplified illustration of this task.

Note that in this task, $\lambda \mapsto A(\lambda)$ need not be non-decreasing, as an image corresponding to a more heavily modified prompt could, in principle, be deemed more memorized than one corresponding to a less modified prompt. We should of course not expect this to happen often, and thus the $\gamma$-sensibility assumption remains reasonable.

Results are shown in \autoref{fig:mem}. The left panel verifies the conformal guarantee. 
The right panel displays the average fraction of human evaluators who assessed the selected image as too different from the ground truth (i.e., ``medium''). The gap between the left and right plots corresponds to the percentage of ``good'' images. This gap being large highlights the usefulness of Conf-Gen: it avoids trivially achieving large admissibility by always generating ``medium'' images. More details are provided in \cref{app:images}.

\subsection{Conformal Conversational AI Chatbot}\label{sec:conversational}

In this task, the input is also a sequence $\mathbf{X}=(X_1, \dots, X_T)$, where $X_1$ is now a potentially ambiguous question asked to an AI chatbot. At each step $t$, an LLM generates a candidate answer $Y_t$ to $X_t$; the chatbot then either provides $Y_t$ to the user or requests a clarified version of the prompt, $X_{t+1}$, which incorporates an additional piece of information. We use the selection function given by the last row of \autoref{tab:examples}: we interpret the output being $y_{\tau(x, \by, \lambda)}$ as meaning that the chatbot answered the question in step $\tau(x, \by, \lambda)$ and requested clarification in all previous steps. The score is obtained by instructing an LLM to provide a value quantifying how ambiguous a question is. The admissibility of a conversation is a binary label equal to $1$ if the last question in the conversation is unambiguous. Thus, our conformal guarantee ensures that, with probability at least $\gamma$, the chatbot has asked enough clarifying questions before answering. 

We use ClariQ \citep{clariq}, a dataset of questions and clarifications. The results are presented in \autoref{fig:conversational_results}. The left panel verifies the empirical coverage of our conformal guarantee. The right panel illustrates the expected trade-off: as $\gamma$ increases, the chatbot needs to ask more clarifications to ensure admissibility. Importantly, the chatbot does not default to always asking the maximal number of clarifications, indicating the usefulness of Conf-Gen. More details are provided in \cref{app:conversational}.

\begin{figure}
    \centering
    \includegraphics[width=\linewidth]{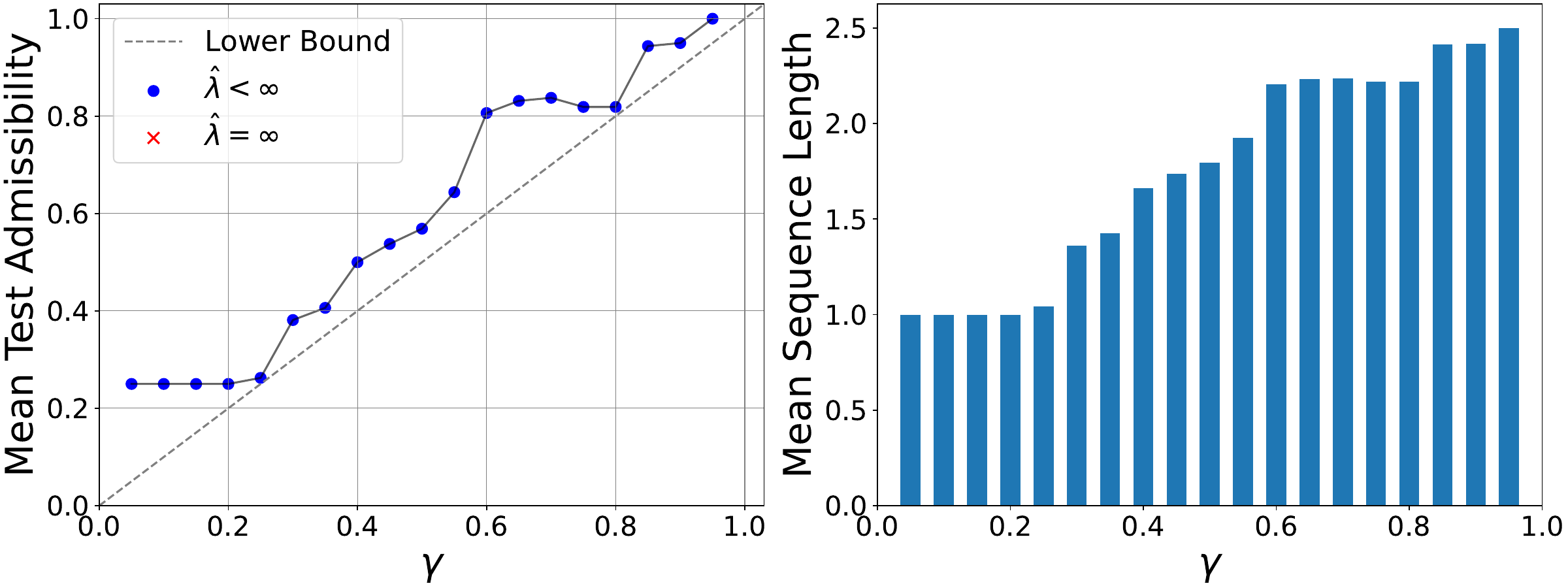} 
    \caption{Results on test data for the conversational task.}
    \label{fig:conversational_results}
    \vspace{-10pt}
\end{figure}

\subsection{Conformal Agentic AI for Web-Based Tasks}

We consider the WebVoyager dataset \citep{he2024webvoyager}, where $X$ is a task for an AI agent with access to the web. These tasks simulate common user objectives, such as locating a research paper on arXiv, reserving a hotel on booking.com, or searching for a specific place on Google. Each generation $\bY_t$ is itself a trajectory of actions taken by the agent, with each $\bY_t$ being generated independently. We use the selection function from the first row of \autoref{tab:examples} (with $\texttt{accum} =\max$), where the score function $S^{\uparrow}(x, \by_t)$ is obtained by instructing an LLM to assess the percentage at which $\by_t$ has completed task $x$. The admissibility function is a binary indicator of whether the task is successfully completed by the end of the trajectory (determined by an LLM judge). Thus, the conformal guarantee ensures that the selected sequence contains at least one successful trajectory with probability at least $\gamma$ (as assessed by the LLM judge, which \citet{he2024webvoyager} tested thoroughly).

We follow \citet{he2024webvoyager} to instantiate the AI agent, and report results in \autoref{fig:webvoyager_results}. While the agent achieves a single-attempt success rate near $60\%$, Conf-Gen formally guarantees above $65\%$ success averaging under two attempts, rising to nearly $80\%$ when more are permitted. More details are provided in \cref{app:agentic}.

We believe that this task could help enhance the reliability of AI agents. For example, the agent could execute all its trajectories in a simulated environment, so that each trajectory corresponds to a plan. The user could then be shown the corresponding subset of selected trajectories and choose which plan to execute, if any. Such a procedure could help ensure that, for example, an agent tasked with booking a hotel online does not incur a financial loss when it makes a mistake.

\begin{figure}
    \centering
    \includegraphics[width=\linewidth]{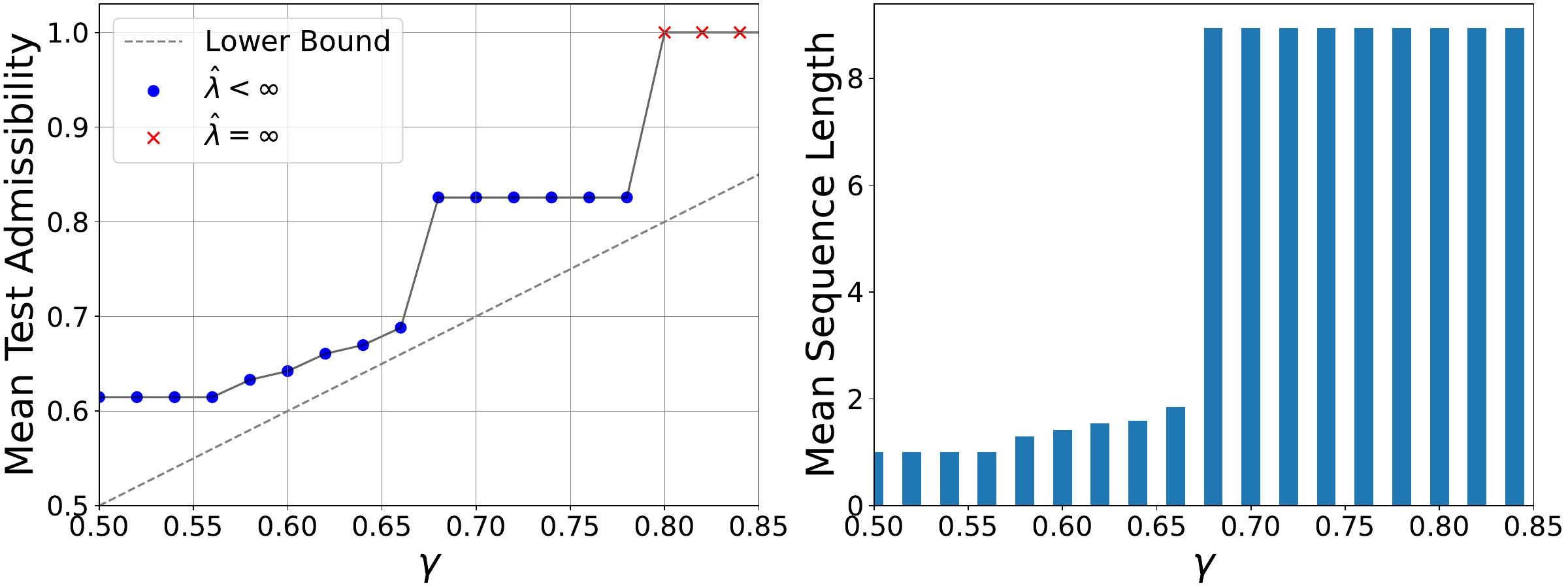} 
    \caption{Results on test data for the agentic AI tasks.}
    \label{fig:webvoyager_results}
    \vspace{-10pt}
\end{figure}

\subsection{Conformal Random Forests}\label{sec:forests}

To further demonstrate the generality of Conf-Gen, we consider a task in supervised learning beyond the scope of generative models. We assume access to a pre-trained random forest classifier with $T$ trees, where $X$ is a feature vector, $Y_t$ contains the prediction from the $t$-th tree, and $Y_{\text{GT}}$ is the ground truth label. Our selection function outputs the smallest subset of trees whose accumulated score (with $\texttt{accum}=\text{sum}$) exceeds $\lambda$, where the score function $S^{\uparrow}$ is given by the weighted tree sample count. The selected trees are then used to make a prediction, ignoring the other trees. Admissibility is defined via the binary variable $A'_t$, which indicates whether the $t$-th tree correctly predicts $Y_{\text{GT}}$ for $X$, and the aggregation function $\texttt{agg}(\mathbf{a}') = \mathds{1}(\sum_{a' \in \mathbf{a'}} a' \geq k)$ for a user-specified $k$. Consequently, Conf-Gen guarantees that the probability of the selected subset containing at least $k$ trees which make the correct prediction is at least $\gamma$. This guarantee is particularly meaningful when the selected set contains at most $2k-1$ trees, as it then ensures the correctness of the majority-vote prediction.

While defining admissibility directly as the correctness of the final prediction might seem more intuitive---as the resulting conformal guarantee would then ensure that the prediction is correct with probability at least $\gamma$---such a formulation fails to ensure the monotonicity of $\lambda \mapsto A(\lambda)$, even in conditional expectation. Our chosen definition of $A$ avoids this issue, and satisfies the necessary theoretical properties while still providing a meaningful conformal guarantee.

We pre-train a random forest with $T=100$ trees on the Click\_prediction\_small dataset \citep{click} and set $k=30$. The results in \autoref{fig:tree_results} demonstrate that the conformal guarantee is empirically satisfied and that the average number of selected trees remains below $2k-1=59$ across a wide range of $\gamma$ values. Additional details, experiments on more datasets, and ablations over $k$ are provided in \cref{app:trees}.

\section{Conclusion and Future Work}\label{sec:conclusion}
In this work, we presented Conf-Gen, a framework that extends CRC to generative tasks while relaxing its underlying theoretical assumptions. We also provide a Python package to facilitate the deployment of Conf-Gen across diverse settings. Our experiments demonstrate that Conf-Gen not only outperforms current state-of-the-art conformal methods in open-domain question answering but also establishes a foundation for a wide array of novel conformal applications.

We hope that our work will catalyze the adoption of conformal guarantees for UQ in generative modelling. Future work may focus on refining the data-generating processes, score functions, and selection functions we proposed for our considered tasks. Beyond these potential technical improvements, exploring new use cases for Conf-Gen remains a compelling research direction.

\begin{figure}
    \centering
    \includegraphics[width=0.95\linewidth]{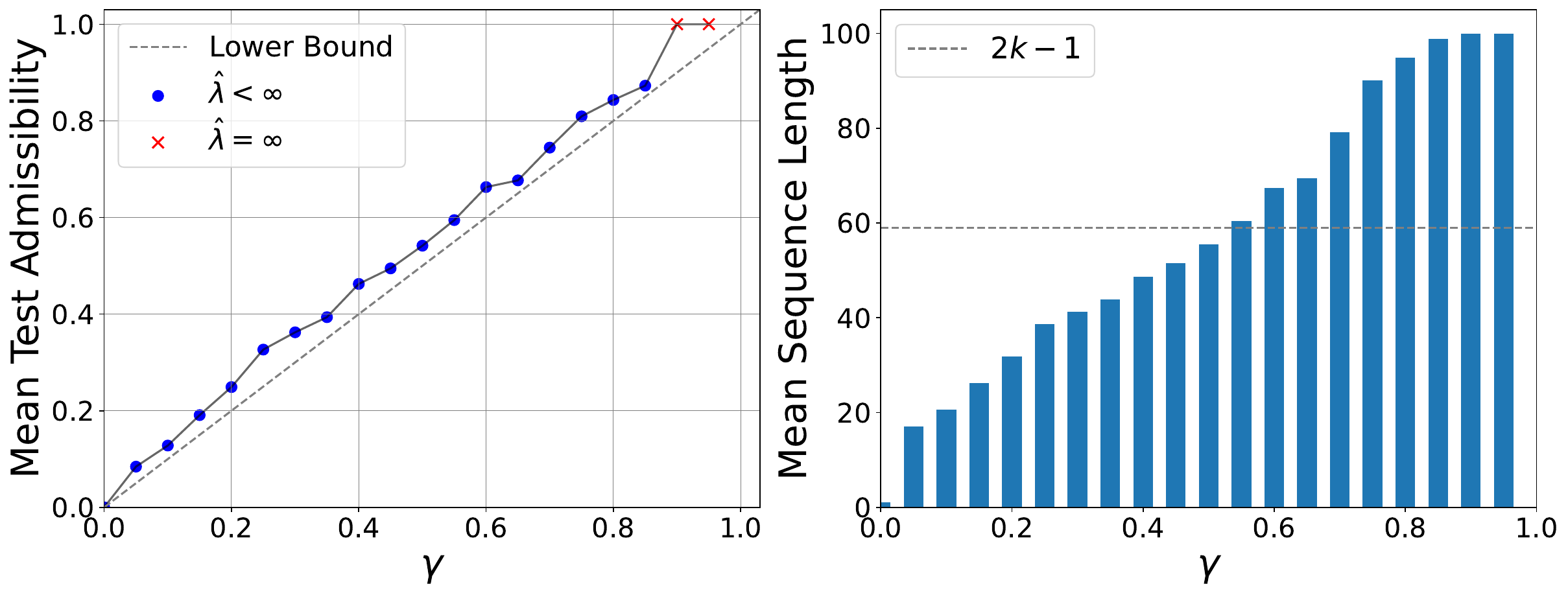} 
    \caption{Results on test data for the random forest.}
    \label{fig:tree_results}
    \vspace{-10pt}
\end{figure}




\section*{Impact Statement}
Our work presents a step towards improved uncertainty quantification in generative models. These type of improvements could have various societal consequences. On the positive side, they could increase the overall reliability of AI systems, help deploy them in safety-critical applications, and increase trust in them; indeed, these are some of the key motivations behind our work. On the negative side, uncertainty quantification  can also improve the reliability of malicious AI applications. Overall, we believe that uncertainty quantification for generative models is a meaningful research topic precisely because it is likely to have more positive than negative impact. Although we do not believe that any of the experiments in this paper are likely to have direct societal consequences, future applications might.

\bibliography{conf_gen_bib}

\begin{thebibliography}{62}
\providecommand{\natexlab}[1]{#1}
\providecommand{\url}[1]{\texttt{#1}}
\expandafter\ifx\csname urlstyle\endcsname\relax
  \providecommand{\doi}[1]{doi: #1}\else
  \providecommand{\doi}{doi: \begingroup \urlstyle{rm}\Url}\fi

\bibitem[Aliannejadi et~al.(2021)Aliannejadi, Kiseleva, Chuklin, Dalton, and Burtsev]{clariq}
Aliannejadi, M., Kiseleva, J., Chuklin, A., Dalton, J., and Burtsev, M.
\newblock Building and evaluating open-domain dialogue corpora with clarifying questions.
\newblock In \emph{Conference on Empirical Methods in Natural Language Processing}, 2021.

\bibitem[Angelopoulos \& Bates(2021)Angelopoulos and Bates]{angelopoulos2021gentle}
Angelopoulos, A.~N. and Bates, S.
\newblock A gentle introduction to conformal prediction and distribution-free uncertainty quantification.
\newblock \emph{arXiv:2107.07511}, 2021.

\bibitem[Angelopoulos et~al.(2024{\natexlab{a}})Angelopoulos, Barber, and Bates]{angelopoulos2024theoretical}
Angelopoulos, A.~N., Barber, R.~F., and Bates, S.
\newblock Theoretical foundations of conformal prediction.
\newblock \emph{arXiv:2411.11824}, 2024{\natexlab{a}}.

\bibitem[Angelopoulos et~al.(2024{\natexlab{b}})Angelopoulos, Bates, Fisch, Lei, and Schuster]{angelopoulos2024conformal}
Angelopoulos, A.~N., Bates, S., Fisch, A., Lei, L., and Schuster, T.
\newblock Conformal risk control.
\newblock In \emph{International Conference on Learning Representations}, 2024{\natexlab{b}}.

\bibitem[Becker \& Kohavi(1996)Becker and Kohavi]{adult}
Becker, B. and Kohavi, R.
\newblock {Adult}.
\newblock UCI Machine Learning Repository, 1996.

\bibitem[Begoli et~al.(2019)Begoli, Bhattacharya, and Kusnezov]{begoli2019need}
Begoli, E., Bhattacharya, T., and Kusnezov, D.
\newblock The need for uncertainty quantification in machine-assisted medical decision making.
\newblock In \emph{Nature Machine Intelligence}, volume~1, pp.\  20--23, 2019.

\bibitem[Breiman(2001)]{breiman2001random}
Breiman, L.
\newblock Random forests.
\newblock In \emph{Machine learning}, volume~45, pp.\  5--32. Springer, 2001.

\bibitem[Brown et~al.(2020)Brown, Mann, Ryder, Subbiah, Kaplan, Dhariwal, Neelakantan, Shyam, Sastry, Askell, Agarwal, Herbert-Voss, Krueger, Henighan, Child, Ramesh, Ziegler, Wu, Winter, Hesse, Chen, Sigler, Litwin, Gray, Chess, Clark, Berner, McCandlish, Radford, Sutskever, and Amodei]{brown2020language}
Brown, T., Mann, B., Ryder, N., Subbiah, M., Kaplan, J.~D., Dhariwal, P., Neelakantan, A., Shyam, P., Sastry, G., Askell, A., Agarwal, S., Herbert-Voss, A., Krueger, G., Henighan, T., Child, R., Ramesh, A., Ziegler, D., Wu, J., Winter, C., Hesse, C., Chen, M., Sigler, E., Litwin, M., Gray, S., Chess, B., Clark, J., Berner, C., McCandlish, S., Radford, A., Sutskever, I., and Amodei, D.
\newblock Language models are few-shot learners.
\newblock In \emph{Advances in Neural Information Processing Systems}, 2020.

\bibitem[Coutinho(2022)]{click}
Coutinho, F.
\newblock Click prediction.
\newblock \url{https://kaggle.com/competitions/click-prediction-cds}, 2022.
\newblock Kaggle.

\bibitem[Feng et~al.(2026)Feng, Sui, Hou, Wu, and Cresswell]{feng2026conformal}
Feng, N., Sui, Y., Hou, S., Wu, G., and Cresswell, J.~C.
\newblock Conformal agent error attribution.
\newblock \emph{arXiv:2605.06788}, 2026.

\bibitem[Gammerman \& Vovk(2007)Gammerman and Vovk]{gammerman2007hedging}
Gammerman, A. and Vovk, V.
\newblock Hedging predictions in machine learning.
\newblock \emph{The Computer Journal}, 50\penalty0 (2):\penalty0 151--163, 2007.

\bibitem[Gammerman et~al.(1998)Gammerman, Vovk, and Vapnik]{gammerman1998learning}
Gammerman, A., Vovk, V., and Vapnik, V.
\newblock Learning by transduction.
\newblock In \emph{Conference on Uncertainty in Artificial Intelligence}, 1998.

\bibitem[Gao et~al.(2024)Gao, Zhang, Mouatadid, and Das]{gao-etal-2024-spuq}
Gao, X., Zhang, J., Mouatadid, L., and Das, K.
\newblock {SPUQ}: Perturbation-based uncertainty quantification for large language models.
\newblock In \emph{Conference of the European Chapter of the Association for Computational Linguistics}, 2024.

\bibitem[Grewal et~al.(2024)Grewal, Bonilla, and Bui]{grewal2024improving}
Grewal, Y.~S., Bonilla, E.~V., and Bui, T.~D.
\newblock Improving uncertainty quantification in large language models via semantic embeddings.
\newblock \emph{arXiv:2410.22685}, 2024.

\bibitem[He et~al.(2024)He, Yao, Ma, Yu, Dai, Zhang, Lan, and Yu]{he2024webvoyager}
He, H., Yao, W., Ma, K., Yu, W., Dai, Y., Zhang, H., Lan, Z., and Yu, D.
\newblock {W}eb{V}oyager: Building an end-to-end web agent with large multimodal models.
\newblock In \emph{Annual Meeting of the Association for Computational Linguistics}, pp.\  6864--6890, 2024.

\bibitem[Ho \& Salimans(2022)Ho and Salimans]{ho2022classifier}
Ho, J. and Salimans, T.
\newblock Classifier-free diffusion guidance.
\newblock \emph{arXiv:2207.12598}, 2022.

\bibitem[Ho et~al.(2020)Ho, Jain, and Abbeel]{ho2020denoising}
Ho, J., Jain, A., and Abbeel, P.
\newblock Denoising diffusion probabilistic models.
\newblock In \emph{Advances in Neural Information Processing Systems}, 2020.

\bibitem[Hou et~al.(2024)Hou, Liu, Qian, Andreas, Chang, and Zhang]{hou2024decomposing}
Hou, B., Liu, Y., Qian, K., Andreas, J., Chang, S., and Zhang, Y.
\newblock Decomposing uncertainty for large language models through input clarification ensembling.
\newblock In \emph{International Conference on Machine Learning}, 2024.

\bibitem[Johnson et~al.(2017)Johnson, Schuster, Le, Krikun, Wu, Chen, Thorat, Vi{\'e}gas, Wattenberg, Corrado, Hughes, and Dean]{johnson2017google}
Johnson, M., Schuster, M., Le, Q.~V., Krikun, M., Wu, Y., Chen, Z., Thorat, N., Vi{\'e}gas, F., Wattenberg, M., Corrado, G., Hughes, M., and Dean, J.
\newblock {G}oogle{'}s multilingual neural machine translation system: Enabling zero-shot translation.
\newblock \emph{Transactions of the Association for Computational Linguistics}, 5:\penalty0 339--351, 2017.

\bibitem[Joshi et~al.(2017)Joshi, Choi, Weld, and Zettlemoyer]{joshi2017triviaqa}
Joshi, M., Choi, E., Weld, D.~S., and Zettlemoyer, L.
\newblock Triviaqa: A large scale distantly supervised challenge dataset for reading comprehension.
\newblock \emph{arXiv:1705.03551}, 2017.

\bibitem[Kadavath et~al.(2022)Kadavath, Conerly, Askell, Henighan, Drain, Perez, Schiefer, Hatfield-Dodds, DasSarma, Tran-Johnson, et~al.]{kadavath2022language}
Kadavath, S., Conerly, T., Askell, A., Henighan, T., Drain, D., Perez, E., Schiefer, N., Hatfield-Dodds, Z., DasSarma, N., Tran-Johnson, E., et~al.
\newblock Language models (mostly) know what they know.
\newblock \emph{arXiv:2207.05221}, 2022.

\bibitem[Kamkari et~al.(2024)Kamkari, Ross, Hosseinzadeh, Cresswell, and Loaiza-Ganem]{kamkari2024geometric2}
Kamkari, H., Ross, B.~L., Hosseinzadeh, R., Cresswell, J.~C., and Loaiza-Ganem, G.
\newblock A geometric view of data complexity: Efficient local intrinsic dimension estimation with diffusion models.
\newblock In \emph{Advances in Neural Information Processing Systems}, volume~37, 2024.

\bibitem[Kladny et~al.(2025)Kladny, Sch{\"o}lkopf, and Muehlebach]{kladny2025conformal}
Kladny, K.-R., Sch{\"o}lkopf, B., and Muehlebach, M.
\newblock Conformal generative modeling with improved sample efficiency through sequential greedy filtering.
\newblock In \emph{International Conference on Learning Representations}, 2025.

\bibitem[Kuhn et~al.(2023)Kuhn, Gal, and Farquhar]{kuhn2023semantic}
Kuhn, L., Gal, Y., and Farquhar, S.
\newblock Semantic uncertainty: Linguistic invariances for uncertainty estimation in natural language generation.
\newblock In \emph{International Conference on Learning Representations}, 2023.

\bibitem[Kuwahara et~al.(2025)Kuwahara, Lin, Huang, Leung, Yapeter, Stanevich, Perez, and Cresswell]{kuwahara2025document}
Kuwahara, B., Lin, C.-Y., Huang, X.~S., Leung, K.~K., Yapeter, J.~A., Stanevich, I., Perez, F., and Cresswell, J.~C.
\newblock Document summarization with conformal importance guarantees.
\newblock \emph{arXiv:2509.20461}, 2025.

\bibitem[Laufer-Goldshtein et~al.(2023)Laufer-Goldshtein, Fisch, Barzilay, and Jaakkola]{laufer-goldshtein2023efficiently}
Laufer-Goldshtein, B., Fisch, A., Barzilay, R., and Jaakkola, T.~S.
\newblock Efficiently controlling multiple risks with pareto testing.
\newblock In \emph{International Conference on Learning Representations}, 2023.

\bibitem[Lei et~al.(2018)Lei, G’Sell, Rinaldo, Tibshirani, and Wasserman]{lei2018distribution}
Lei, J., G’Sell, M., Rinaldo, A., Tibshirani, R.~J., and Wasserman, L.
\newblock Distribution-free predictive inference for regression.
\newblock In \emph{Journal of the American Statistical Association}, volume 113, pp.\  1094--1111, 2018.

\bibitem[Leung et~al.(2025)Leung, Hosseinzadeh, and Loaiza-Ganem]{leung2025convolutions}
Leung, K.~K., Hosseinzadeh, R., and Loaiza-Ganem, G.
\newblock On convolutions, intrinsic dimension, and diffusion models.
\newblock In \emph{Transactions on Machine Learning Research}, 2025.

\bibitem[Lin et~al.(2024)Lin, Trivedi, and Sun]{lin2024generating}
Lin, Z., Trivedi, S., and Sun, J.
\newblock Generating with confidence: Uncertainty quantification for black-box large language models.
\newblock In \emph{Transactions on Machine Learning Research}, 2024.

\bibitem[Madeo et~al.(2013)Madeo, Wagner, and Peres]{gesture}
Madeo, R., Wagner, P., and Peres, S.
\newblock {Gesture Phase Segmentation}.
\newblock UCI Machine Learning Repository, 2013.

\bibitem[Mohri \& Hashimoto(2024)Mohri and Hashimoto]{mohri2024language}
Mohri, C. and Hashimoto, T.
\newblock Language models with conformal factuality guarantees.
\newblock In \emph{International Conference on Machine Learning}, 2024.

\bibitem[Nikitin et~al.(2024)Nikitin, Kossen, Gal, and Marttinen]{nikitin2024kernel}
Nikitin, A., Kossen, J., Gal, Y., and Marttinen, P.
\newblock Kernel language entropy: Fine-grained uncertainty quantification for {LLM}s from semantic similarities.
\newblock In \emph{Advances in Neural Information Processing Systems}, 2024.

\bibitem[OpenAI(2024)]{openai2024gpt4technicalreport}
OpenAI.
\newblock Gpt-4 technical report.
\newblock \emph{arXiv:2303.08774}, 2024.

\bibitem[Orrick(2023)]{andersen2023case}
Orrick, W.~H.
\newblock {Andersen v. Stability AI Ltd.}, 2023.
\newblock URL \url{https://casetext.com/case/andersen-v-stability-ai-ltd}.

\bibitem[Papadopoulos et~al.(2002)Papadopoulos, Proedrou, Vovk, and Gammerman]{papadopoulos2002inductive}
Papadopoulos, H., Proedrou, K., Vovk, V., and Gammerman, A.
\newblock Inductive confidence machines for regression.
\newblock In \emph{European Conference on Machine Learning}, 2002.

\bibitem[Park et~al.(2023)Park, O'Brien, Cai, Morris, Liang, and Bernstein]{park2023generative}
Park, J.~S., O'Brien, J., Cai, C.~J., Morris, M.~R., Liang, P., and Bernstein, M.~S.
\newblock Generative agents: Interactive simulacra of human behavior.
\newblock In \emph{Proceedings of the annual ACM symposium on user interface software and technology}, 2023.

\bibitem[Pedregosa et~al.(2011)Pedregosa, Varoquaux, Gramfort, Michel, Thirion, Grisel, Blondel, Prettenhofer, Weiss, Dubourg, Vanderplas, Passos, Cournapeau, Brucher, Perrot, and Duchesnay]{scikit-learn}
Pedregosa, F., Varoquaux, G., Gramfort, A., Michel, V., Thirion, B., Grisel, O., Blondel, M., Prettenhofer, P., Weiss, R., Dubourg, V., Vanderplas, J., Passos, A., Cournapeau, D., Brucher, M., Perrot, M., and Duchesnay, E.
\newblock Scikit-learn: Machine learning in {P}ython.
\newblock \emph{Journal of Machine Learning Research}, 12:\penalty0 2825--2830, 2011.

\bibitem[Potapenko et~al.(2021)Potapenko, Meyer, Kohl, Ballard, Cowie, Romera-Paredes, Nikolov, Jain, and Hassabis]{potapenko2021highly}
Potapenko, A.~B., Meyer, C., Kohl, S.~A., Ballard, A.~J., Cowie, A., Romera-Paredes, B., Nikolov, S., Jain, R., and Hassabis, D.
\newblock Highly accurate protein structure prediction with {A}lpha{F}old.
\newblock In \emph{Nature}, volume 596, pp.\  583--589, 2021.

\bibitem[Qiu \& Miikkulainen(2024)Qiu and Miikkulainen]{qiu2024semantic}
Qiu, X. and Miikkulainen, R.
\newblock Semantic density: Uncertainty quantification for large language models through confidence measurement in semantic space.
\newblock In \emph{Advances in Neural Information Processing Systems}, 2024.

\bibitem[Quach et~al.(2024)Quach, Fisch, Schuster, Yala, Sohn, Jaakkola, and Barzilay]{quach2024conformal}
Quach, V., Fisch, A., Schuster, T., Yala, A., Sohn, J.~H., Jaakkola, T.~S., and Barzilay, R.
\newblock Conformal language modeling.
\newblock In \emph{International Conference on Learning Representations}, 2024.

\bibitem[Ramesh et~al.(2022)Ramesh, Dhariwal, Nichol, Chu, and Chen]{ramesh2022hierarchical}
Ramesh, A., Dhariwal, P., Nichol, A., Chu, C., and Chen, M.
\newblock Hierarchical text-conditional image generation with {CLIP} latents.
\newblock \emph{arXiv:2204.06125}, 2022.

\bibitem[Roe(2005)]{miniboone}
Roe, B.
\newblock {MiniBooNE particle identification}.
\newblock UCI Machine Learning Repository, 2005.

\bibitem[Rombach et~al.(2022)Rombach, Blattmann, Lorenz, Esser, and Ommer]{rombach2022high}
Rombach, R., Blattmann, A., Lorenz, D., Esser, P., and Ommer, B.
\newblock High-resolution image synthesis with latent diffusion models.
\newblock In \emph{Proceedings of the IEEE/CVF conference on computer vision and pattern recognition}, 2022.

\bibitem[Ross et~al.(2025)Ross, Kamkari, Wu, Hosseinzadeh, Liu, Stein, Cresswell, and Loaiza-Ganem]{ross2025geometric}
Ross, B.~L., Kamkari, H., Wu, T., Hosseinzadeh, R., Liu, Z., Stein, G., Cresswell, J.~C., and Loaiza-Ganem, G.
\newblock A geometric framework for understanding memorization in generative models.
\newblock In \emph{International Conference on Learning Representations}, 2025.

\bibitem[Ross et~al.(2026)Ross, Vouitsis, Ghomi, Hosseinzadeh, Xin, Liu, Sui, Hou, Leung, Loaiza-Ganem, and Cresswell]{ross2026textual}
Ross, B.~L., Vouitsis, N., Ghomi, A.~A., Hosseinzadeh, R., Xin, J., Liu, Z., Sui, Y., Hou, S., Leung, K.~K., Loaiza-Ganem, G., and Cresswell, J.~C.
\newblock Textual {B}ayes: Quantifying prompt uncertainty in {LLM}-based systems.
\newblock In \emph{International Conference on Learning Representations}, 2026.

\bibitem[Saunders et~al.(1999)Saunders, Gammerman, and Vovk]{saunders1999transduction}
Saunders, C., Gammerman, A., and Vovk, V.
\newblock Transduction with confidence and credibility.
\newblock In \emph{International Joint Conference on Artificial Intelligence}, 1999.

\bibitem[Silver et~al.(2016)Silver, Huang, Maddison, Guez, Sifre, van~den Driessche, Schrittwieser, Antonoglou, Panneershelvam, Lanctot, Dieleman, Grewe, Nham, Kalchbrenner, Sutskever, Lillicrap, Leach, Kavukcuoglu, Graepel, and Hassabis]{Silver_2016}
Silver, D., Huang, A., Maddison, C.~J., Guez, A., Sifre, L., van~den Driessche, G., Schrittwieser, J., Antonoglou, I., Panneershelvam, V., Lanctot, M., Dieleman, S., Grewe, D., Nham, J., Kalchbrenner, N., Sutskever, I., Lillicrap, T., Leach, M., Kavukcuoglu, K., Graepel, T., and Hassabis, D.
\newblock Mastering the game of {Go} with deep neural networks and tree search.
\newblock In \emph{Nature}, volume 529, pp.\  484--489, 2016.

\bibitem[Sohl-Dickstein et~al.(2015)Sohl-Dickstein, Weiss, Maheswaranathan, and Ganguli]{sohl2015deep}
Sohl-Dickstein, J., Weiss, E., Maheswaranathan, N., and Ganguli, S.
\newblock Deep unsupervised learning using nonequilibrium thermodynamics.
\newblock In \emph{International Conference on Machine Learning}, 2015.

\bibitem[Song et~al.(2021)Song, Sohl-Dickstein, Kingma, Kumar, Ermon, and Poole]{song2021score}
Song, Y., Sohl-Dickstein, J., Kingma, D.~P., Kumar, A., Ermon, S., and Poole, B.
\newblock Score-based generative modeling through stochastic differential equations.
\newblock In \emph{International Conference on Learning Representations}, 2021.

\bibitem[Teneggi et~al.(2023)Teneggi, Tivnan, Stayman, and Sulam]{teneggi2023trust}
Teneggi, J., Tivnan, M., Stayman, W., and Sulam, J.
\newblock How to trust your diffusion model: A convex optimization approach to conformal risk control.
\newblock In \emph{International Conference on Machine Learning}, 2023.

\bibitem[Touvron et~al.(2023)Touvron, Lavril, Izacard, Martinet, Lachaux, Lacroix, Rozi{\`e}re, Goyal, Hambro, Azhar, et~al.]{touvron2023llama}
Touvron, H., Lavril, T., Izacard, G., Martinet, X., Lachaux, M.-A., Lacroix, T., Rozi{\`e}re, B., Goyal, N., Hambro, E., Azhar, F., et~al.
\newblock Llama: Open and efficient foundation language models.
\newblock \emph{arXiv:2302.13971}, 2023.

\bibitem[{U.S. Census Bureau}(2000)]{census}
{U.S. Census Bureau}.
\newblock {Census-Income (KDD)}.
\newblock UCI Machine Learning Repository, 2000.

\bibitem[Vovk et~al.(1999)Vovk, Gammerman, and Saunders]{vovk1999machine}
Vovk, V., Gammerman, A., and Saunders, C.
\newblock Machine-learning applications of algorithmic randomness.
\newblock In \emph{International Conference on Machine Learning}, 1999.

\bibitem[Vovk et~al.(2005)Vovk, Gammerman, and Shafer]{vovk2005algorithmic}
Vovk, V., Gammerman, A., and Shafer, G.
\newblock \emph{Algorithmic learning in a random world}.
\newblock Springer, 2005.

\bibitem[Wang et~al.(2023{\natexlab{a}})Wang, Fu, Du, Gao, Huang, Liu, Chandak, Liu, {Van Katwyk}, Deac, Anandkumar, Bergen, Gomes, Ho, Kohli, Lasenby, Leskovec, Liu, Manrai, Marks, Ramsundar, Song, Sun, Tang, Veli{\v c}kovi{\'c}, Welling, Zhang, Coley, Bengio, and Zitnik]{wang2023scientific}
Wang, H., Fu, T., Du, Y., Gao, W., Huang, K., Liu, Z., Chandak, P., Liu, S., {Van Katwyk}, P., Deac, A., Anandkumar, A., Bergen, K., Gomes, C., Ho, S., Kohli, P., Lasenby, J., Leskovec, J., Liu, T., Manrai, A., Marks, D., Ramsundar, B., Song, L., Sun, J., Tang, J., Veli{\v c}kovi{\'c}, P., Welling, M., Zhang, L., Coley, C., Bengio, Y., and Zitnik, M.
\newblock Scientific discovery in the age of artificial intelligence.
\newblock In \emph{Nature}, volume 620, pp.\  47--60, 2023{\natexlab{a}}.

\bibitem[Wang et~al.(2023{\natexlab{b}})Wang, Aitchison, and Rudolph]{wang2023lora}
Wang, X., Aitchison, L., and Rudolph, M.
\newblock Lo{R}a ensembles for large language model fine-tuning.
\newblock \emph{arXiv:2310.00035}, 2023{\natexlab{b}}.

\bibitem[Wang \& Holmes(2025)Wang and Holmes]{wang2025subjective}
Wang, Z. and Holmes, C.
\newblock On subjective uncertainty quantification and calibration in natural language generation.
\newblock In \emph{International Conference on Learning Representations}, 2025.

\bibitem[Webster(2023)]{webster2023reproducible}
Webster, R.
\newblock A reproducible extraction of training images from diffusion models.
\newblock \emph{arXiv:2305.08694}, 2023.

\bibitem[Wen et~al.(2023)Wen, Liu, Chen, and Lyu]{wen2023detecting}
Wen, Y., Liu, Y., Chen, C., and Lyu, L.
\newblock Detecting, explaining, and mitigating memorization in diffusion models.
\newblock In \emph{The Twelfth International Conference on Learning Representations}, 2023.

\bibitem[Wu et~al.(2016)Wu, Schuster, Chen, Le, Norouzi, Macherey, Krikun, Cao, Gao, Macherey, Klingner, Shah, Johnson, Liu, Łukasz Kaiser, Gouws, Kato, Kudo, Kazawa, Stevens, Kurian, Patil, Wang, Young, Smith, Riesa, Rudnick, Vinyals, Corrado, Hughes, and Dean]{wu2016googlesneuralmachinetranslation}
Wu, Y., Schuster, M., Chen, Z., Le, Q.~V., Norouzi, M., Macherey, W., Krikun, M., Cao, Y., Gao, Q., Macherey, K., Klingner, J., Shah, A., Johnson, M., Liu, X., Łukasz Kaiser, Gouws, S., Kato, Y., Kudo, T., Kazawa, H., Stevens, K., Kurian, G., Patil, N., Wang, W., Young, C., Smith, J., Riesa, J., Rudnick, A., Vinyals, O., Corrado, G., Hughes, M., and Dean, J.
\newblock Google's neural machine translation system: Bridging the gap between human and machine translation.
\newblock \emph{arXiv:2302.13971}, 2016.

\bibitem[Yang et~al.(2024{\natexlab{a}})Yang, Robeyns, Wang, and Aitchison]{yang2024bayesian}
Yang, A.~X., Robeyns, M., Wang, X., and Aitchison, L.
\newblock Bayesian low-rank adaptation for large language models.
\newblock In \emph{International Conference on Learning Representations}, 2024{\natexlab{a}}.

\bibitem[Yang et~al.(2024{\natexlab{b}})Yang, Tsai, and Yamada]{yang2024verbalized}
Yang, D., Tsai, Y.-H.~H., and Yamada, M.
\newblock On verbalized confidence scores for {LLM}s.
\newblock \emph{arXiv:2412.14737}, 2024{\natexlab{b}}.

\end{thebibliography}
\bibliographystyle{icml2026}

\newpage
\appendix
\renewcommand{\subsectionautorefname}{Appendix}
\onecolumn
\section{Review of Existing Upper Bounds}\label{app:upper_bound}

As mentioned in the main text, there are accompanying results to those in \autoref{th:cp} and \autoref{th:crc} which provide upper bounds. We begin by reviewing the upper bound for CP.
\begin{theorem}[\citet{angelopoulos2024theoretical}]\label{th:cp_upper}
    Under the conditions of \autoref{th:cp}, we have that
    \begin{equation}
        \sP \left(Y^{(n+1)}_{\textnormal{GT}} \in \mathcal{C}_{\hlambda}\left(X^{(n+1)}\right)\right) \leq \gamma + \dfrac{1}{n+1} + \epsilon,
    \end{equation}
    where $\epsilon$ is the probability of the $(n+1)^{th}$ score being equal to another score, i.e.,
    \begin{equation}
        \epsilon \coloneqq \sP\left(S^{\downarrow}\left(X^{(j)}, Y_{\textnormal{GT}}^{(j)}\right) = S^{\downarrow}\left(X^{(n+1)}, Y_{\textnormal{GT}}^{(n+1)}\right) \textnormal{ for some } j \in \{1,\dots,n\}\right).
    \end{equation}
\end{theorem}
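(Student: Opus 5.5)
The plan is the standard rank argument of split conformal prediction, with ties handled by a randomized tie-break. Write $S_i \coloneqq S^{\downarrow}(X^{(i)}, Y^{(i)}_{\textnormal{GT}})$ for $i=1,\dots,n+1$ and $k \coloneqq \ceil{(n+1)\gamma}$. Exchangeability of the pairs transfers to the scores, since $S^{\downarrow}$ is a fixed function. By definition, $\hlambda$ is the $k$-th smallest of $S_1,\dots,S_n$, with $\hlambda = \infty$ when $k>n$. In that case $\gamma > n/(n+1)$, so the right-hand side exceeds $1$ and the bound is trivial. We may therefore assume $k\le n$. The coverage event is $\{S_{n+1} \le \hlambda\}$, so the goal is to show $\sP(S_{n+1} \le \hlambda) \le \epsilon + k/(n+1)$. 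This suffices because $k < (n+1)\gamma + 1$.

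\textbf{Step 1: split on ties.} Let $E$ be the event that $S_{n+1} = S_j$ for some $j \le n$, so that $\sP(E) = \epsilon$. Then
\[
\sP(S_{n+1}\le \hlambda) \le \epsilon + \sP\bigl(S_{n+1} \le \hlambda,\ E^c\bigr).
\]

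\textbf{Step 2: translate coverage into a rank bound.} On $E^c$, the condition $S_{n+1} \le \hlambda = S_{(k:n)}$ forces $S_{n+1} < S_{(k:n)}$. Hence $S_{n+1}$ is strictly smaller than at least $n-k+1$ of the calibration scores. It follows that the number of indices $j \le n+1$ with $S_j < S_{n+1}$ is at most $k-1$, so the rank of $S_{n+1}$ among all $n+1$ scores is at most $k$.

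\textbf{Step 3: make the rank uniform.} Draw independent $U_1,\dots,U_{n+1}\sim\mathrm{Unif}[0,1]$, independent of the data. Define the tie-broken rank $R$ of $S_{n+1}$ by ordering the pairs $(S_i,U_i)$ lexicographically. On $E^c$ no other score equals $S_{n+1}$, so $R$ coincides with the rank from Step 2. This gives
\[
\sP\bigl(S_{n+1}\le\hlambda,\ E^c\bigr) \le \sP(R\le k).
\]

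\textbf{Step 4: evaluate the uniform rank.} The pairs $(S_i,U_i)$ are exchangeable and almost surely distinct, so $R$ is uniform on $\{1,\dots,n+1\}$. Therefore $\sP(R\le k) = k/(n+1)$, and combining this with Steps 1 and 3 yields $\sP(S_{n+1}\le\hlambda) \le \epsilon + k/(n+1)$, which completes the bound.

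The main obstacle is the justification in Step 4. To handle it, apply a permutation $\pi$ of indices. By exchangeability the joint law of the pairs is unchanged, so $\sP(\text{rank of item } n+1 = r)$ equals $\sP(\text{rank of item } \pi(n+1) = r)$. The ranks of the items form a permutation of $\{1,\dots,n+1\}$ almost surely, so summing over the $n+1$ items gives $(n+1)\,\sP(R=r)=1$ for each $r$.
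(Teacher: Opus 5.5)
Your proof is correct. It is essentially the standard rank-uniformity argument behind this result, which the paper does not prove itself and instead cites from \citet{angelopoulos2024theoretical}: on $E^c$ the coverage event forces the test score to have rank at most $k=\lceil (n+1)\gamma\rceil$ among all $n+1$ scores, exchangeability makes that rank uniform, and $k/(n+1)\le \gamma+\tfrac{1}{n+1}$, with $\epsilon$ absorbing ties involving the test score. Your randomized tie-break is a valid way to get exact uniformity despite ties among the calibration scores, and nothing is missing.
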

The first such upper bound was proved by \citet{lei2018distribution} under the assumption that the scores follow a continuous joint distribution, meaning that $\epsilon=0$.

CRC also admits an upper bound, which requires the notion of a c\`adl\`ag function and its jumps.
\begin{definition}
    Consider $\Lambda = [\lambda_{\textnormal{min}}, \lambda_{\textnormal{max}}]$. The function $f: \Lambda \rightarrow \R$ is called \emph{c\`adl\`ag} if it is right-continuous and $\lim_{\epsilon \rightarrow 0^+} f(\lambda - \epsilon)$ exists for every $\lambda \in (\lambda_{\textnormal{min}}, \lambda_{\textnormal{max}}]$. In this case, the jump of $f$ at $\lambda \in \Lambda$ is defined as
    \begin{equation}
        h(f, \lambda) \coloneqq \begin{cases}
            \displaystyle  f(\lambda) - \lim_{\epsilon \rightarrow 0^+} f(\lambda - \epsilon), & \text{if }\lambda \in (\lambda_{\textnormal{min}}, \lambda_{\textnormal{max}}].\\
            f(\lambda), & \text{if }\lambda = \lambda_{\textnormal{min}}.
        \end{cases}
    \end{equation}
\end{definition}
Note that any right-continuous, non-decreasing function is c\`adl\`ag, so that under the assumptions of \autoref{th:crc}, $U^{(i)}$ is c\`adl\`ag almost surely for $i=1,\dots, n+1$. We now state the upper bound for CRC.
\begin{theorem}[\citet{angelopoulos2024conformal}]\label{th:crc_upper}
    Assume that the requirements of \autoref{th:crc} hold. Additionally, assume that $U : 2^{\Y_\textnormal{GT}} \times \Y_\textnormal{GT} \rightarrow [0, u_\textnormal{max}]$, that for $i=1,\dots,n+1$, the random functions $U^{(i)}$ are \emph{i.i.d.}, and that $\mathbb{P}(h(U^{(i)}, \lambda)>0)=0$ for every $\lambda \in \Lambda$. Then,
    \begin{equation}\label{eq:crc_guarantee2}
    \E\left[U^{(n+1)}(\hlambda)\right] \leq \gamma + \dfrac{2 u_{\textnormal{max}}}{n+1}.
    \end{equation}
\end{theorem}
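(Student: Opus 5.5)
The plan is to compare $\hlambda$ with an ``oracle'' threshold computed from all $n+1$ utilities. This oracle is a symmetric function of $U^{(1)},\dots,U^{(n+1)}$, so exchangeability lets us replace $U^{(n+1)}$ by the full average at that threshold. Concretely, set $\bar U_{n+1}(\lambda) \coloneqq \tfrac{1}{n+1}\sum_{i=1}^{n+1}U^{(i)}(\lambda)$ and
\begin{equation*}
\lambda'' \coloneqq \inf\left\{\lambda\in\Lambda : \bar U_{n+1}(\lambda)\geq \gamma + \tfrac{u_{\textnormal{max}}}{n+1}\right\}\wedge\lambda_{\textnormal{max}}.
\end{equation*}

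\textbf{Step 1: $\hlambda\leq\lambda''$.} Whenever $\bar U_{n+1}(\lambda)\geq \gamma + u_{\textnormal{max}}/(n+1)$, the bound $U^{(n+1)}\leq u_{\textnormal{max}}$ gives
\begin{equation*}
\bar U_n(\lambda)=\tfrac{1}{n}\left((n+1)\bar U_{n+1}(\lambda)-U^{(n+1)}(\lambda)\right)\geq \tfrac{n+1}{n}\gamma .
\end{equation*}
So the set defining $\lambda''$ is contained in the set defining $\hlambda$ in \autoref{eq:crc_lambda}. Together with the common cap at $\lambda_{\textnormal{max}}$, this yields $\hlambda\leq\lambda''$ almost surely. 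Since $U^{(n+1)}$ is non-decreasing, $U^{(n+1)}(\hlambda)\leq U^{(n+1)}(\lambda'')$.

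\textbf{Step 2: exchangeability.} Because $\lambda''$ is invariant under permutations of $(U^{(1)},\dots,U^{(n+1)})$, the random variables $U^{(i)}(\lambda'')$ have a common mean. Hence $\E[U^{(n+1)}(\lambda'')]=\E[\bar U_{n+1}(\lambda'')]$.

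\textbf{Step 3: bound $\bar U_{n+1}(\lambda'')$ pointwise.} There are three cases.
\begin{enumerate}[label=(\roman*)]
\item If the defining set is empty, then $\lambda''=\lambda_{\textnormal{max}}$ and $\bar U_{n+1}(\lambda_{\textnormal{max}})<\gamma+u_{\textnormal{max}}/(n+1)$.
\item If $\lambda''=\lambda_{\textnormal{min}}$, the no-jump assumption at $\lambda_{\textnormal{min}}$ forces $U^{(i)}(\lambda_{\textnormal{min}})=0$ almost surely, which is trivially fine.
\item Otherwise $\lambda''\in(\lambda_{\textnormal{min}},\lambda_{\textnormal{max}}]$. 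Every point strictly to its left lies below the threshold, so the left limit is at most $\gamma+u_{\textnormal{max}}/(n+1)$. The average is c\`adl\`ag as a finite average of non-decreasing right-continuous functions, so $\bar U_{n+1}(\lambda'')\leq \gamma+\tfrac{u_{\textnormal{max}}}{n+1}+h(\bar U_{n+1},\lambda'')$, where $h(\bar U_{n+1},\lambda'')=\tfrac{1}{n+1}\sum_i h(U^{(i)},\lambda'')$.
\end{enumerate}

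\textbf{Step 4 (main obstacle): control the jump.} The jump must be controlled at the data-dependent point $\lambda''$. The assumption $\sP(h(U^{(i)},\lambda)>0)=0$ only applies at fixed $\lambda$, so it cannot be used directly. Instead I would show that almost surely no two distinct $U^{(i)},U^{(j)}$ share a jump point. Condition on $U^{(j)}$: as a monotone function it has countably many jump points. By independence (the i.i.d. assumption), $U^{(i)}$ has probability zero of jumping at each of these fixed points, and a countable union bound gives probability zero overall. Consequently, at $\lambda''$ at most one summand jumps, and its jump is at most $u_{\textnormal{max}}$ since utilities lie in $[0,u_{\textnormal{max}}]$. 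This gives $h(\bar U_{n+1},\lambda'')\leq u_{\textnormal{max}}/(n+1)$ almost surely. Taking expectations and chaining Steps 1 to 3 yields \autoref{eq:crc_guarantee2}.
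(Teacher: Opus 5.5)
Your proposal is correct and is essentially the paper's own argument: the paper recovers this bound from the upper bound in \autoref{th:main} (oracle threshold $\lambda^{**}=\inf \V(\bD_{:n+1},\gamma+\tfrac{a_{\textnormal{max}}}{n+1})$, the containment giving $\hlambda\leq\lambda^{**}$, and exchangeability to replace $A^{(n+1)}(\lambda^{**})$ by $\bar{A}_{n+1}(\lambda^{**})$), and then observes in \autoref{app:extra} that $H\leq h(\bar{A}_{n+1},\lambda^{**})\leq \tfrac{a_{\textnormal{max}}}{n+1}$ under the c\`adl\`ag, independence and no-fixed-jump assumptions. Your Steps 1--3 reproduce the first part, with your case split playing the role of $H$, and your countable-jump-set argument in Step 4 supplies the ``no two simultaneous jumps'' fact that the paper defers to \citet{angelopoulos2024conformal}.
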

Note that originally, \citet{angelopoulos2024conformal} provided a lower bound for a loss function; the restatement we wrote above corresponds to the case where the loss is given by $u_{\text{max}}$ minus the utility function. Note also that, rather than using $\gamma$ directly, \citet{angelopoulos2024conformal} write the bound in terms of $\alpha \coloneqq u_{\text{max}} - \gamma$. Once again, writing the bound for CRC in terms of a utility function rather than a loss function facilitates comparing between CRC and CG. Note also that the assumption that $\mathbb{P}(h(U^{(i)}, \lambda)>0)=0$ for every $\lambda \in \Lambda$ is analogous to the scores having a continuous distribution in CP. Lastly, we also note that when $U(\mathcal{C}, y) = \mathds{1}(y \in \mathcal{C})$, the bound obtained in CRC is weaker than the corresponding bound in CP by a factor of $2$; this is discussed in more detail by \citet{angelopoulos2024conformal}.

\section{Proof of \autoref{th:main}}\label{app:proof}

We begin by restating \autoref{def:sensible} and \autoref{th:main} for convenience.

\toggletrue{restateMode}
\defsens*
\thmain*
\togglefalse{restateMode}

\begin{proof}
We first prove the lower bound. Let
\begin{equation}
    \lambda^{*} \coloneqq \inf \V \left(\bD_{:n+1}, \gamma\right).
\end{equation}
Note that $\lambda^*$ cannot be computed in practice since we do not have access to the $(n+1)$-th datapoint; we merely use this quantity as a theoretical construct in our proof. 
Since $\lambda^* \in \V(\bD_{:n+1}, \gamma) \subseteq \Lambda$, almost surely, then $\sP(\lambda^* \in \Lambda)=1$, so that $A^{(i)}(\lambda^*)$ is almost surely well-defined for $i=1,\dots,n+1$. 
Additionally, due to the considered instance of Conf-Gen being $\gamma$-sensible, $\bar{A}_{n+1}(\lambda^*) \geq \gamma$, almost surely. 
Note that $\lambda^*$ is a symmetric and deterministic function of $A^{(1)},\dots,A^{(n+1)}$, which are exchangeable by assumption. 
By standard properties of exchangeability \citep{angelopoulos2024theoretical}, it follows that
\begin{equation}
    \E\left[A^{(n+1)}(\lambda^*) \right] = \E\left[A^{(i)}(\lambda^*) \right]
\end{equation}
for every $i=1,\dots, n+1$. We then have that
\begin{equation}
     \E\left[A^{(n+1)}(\lambda^*)\right] = \dfrac{1}{n+1}\sum_{i=1}^{n+1}\mathbb{E}\left[A^{(i)}(\lambda^*)\right] = \mathbb{E}\left[\dfrac{1}{n+1}\sum_{i=1}^{n+1}A^{(i)}(\lambda^*)\right] = \E\left[\bar{A}_{n+1}(\lambda^{*})\right] \geq \gamma.
\end{equation}
We now want to show that $\lambda^* \leq \hlambda$, almost surely. There are two cases: either $\hlambda = \infty$, or $\hlambda < \infty$. In the former case, the inequality clearly holds. Otherwise, if $\hlambda < \infty$, we have that $\hlambda \in \V(\bD_{:n}, \tfrac{n+1}{n}\gamma) \setminus \{\infty\}$, and thus
\begin{align}
    \dfrac{1}{n} \sum_{i=1}^{n}A^{(i)}(\hlambda) \geq \dfrac{n+1}{n}\gamma 
    & \implies \sum_{i=1}^{n}A^{(i)}(\hlambda) \geq (n+1)\gamma \label{eq:first_ineq1}\\ 
    & \implies \sum_{i=1}^{n+1}A^{(i)}(\hlambda) \geq (n+1)\gamma\\
    & \implies \dfrac{1}{n+1}\sum_{i=1}^{n+1}A^{(i)}(\hlambda) \geq \gamma\\
    & \implies \hlambda \in \V(\bD_{:n+1}, \gamma),
\end{align}
almost surely, where the second implication follows from the non-negativity of $A$. Since $\lambda^* = \inf \V(\bD_{:n+1}, \gamma)$, it follows that $\lambda^* \leq \hlambda$ also holds in this case. 
Now, we denote $B(\lambda) \coloneqq \E[A^{(n+1)}(\lambda)\mid  \hlambda, \lambda^*]$. Lastly, since by assumption $B$ is almost surely non-decreasing, we have that
\begin{align}
\E\left[A^{(n+1)}(\hlambda)\right] & = \E\left[\E\left[A^{(n+1)}(\hlambda) \;\middle|\; \hlambda, \lambda^*\right] \right] = \E[B(\hlambda)]
 \geq  \E[B(\lambda^*)] \label{eq:monotonicity_ineq}\\
 & = \E\left[\E\left[A^{(n+1)}(\lambda^*) \;\middle|\; \hlambda, \lambda^*\right] \right] = \E\left[A^{(n+1)}(\lambda^*)\right] \geq \gamma,
\end{align}
where the second and third equalities follow from $B(\lambda)$ being $\sigma(\hlambda, \lambda^*)$-measurable for every \hbox{$\lambda \in \Lambda$}, where $\sigma(\hlambda, \lambda^*)$ denotes the $\sigma$-algebra generated by $\hlambda$ and $\lambda^*$. This concludes the proof of the lower bound.

We now prove the upper bound and assume that $A$ is upper-bounded by $a_\text{max}$. Note that, by definition of $\lambda^{**}$,
\begin{equation}
    \forall\;\lambda<\lambda^{**}, \quad   \frac{1}{n+1}\sum_{i=1}^{n+1}A^{(i)}(\lambda)<\gamma+\frac{a_\text{max}}{n+1}.
\end{equation}
Therefore, for every $\lambda < \lambda^{**}$,
\begin{align}
    \frac{1}{n+1}\sum_{i=1}^{n+1}A^{(i)}(\lambda^{**}) - \frac{1}{n+1}\sum_{i=1}^{n+1}A^{(i)}(\lambda) > \frac{1}{n+1}\sum_{i=1}^{n+1}A^{(i)}(\lambda^{**})-\gamma-\frac{a_\text{max}}{n+1},
\end{align}
which then implies that
\begin{equation}
    H \geq \frac{1}{n+1}\sum_{i=1}^{n+1}A^{(i)}(\lambda^{**})-\gamma-\frac{a_\text{max}}{n+1}
\end{equation}
holds almost surely. It follows that
\begin{equation}
    \E\left[A^{(n+1)}(\lambda^{**})\right] = \E\left[\frac{1}{n+1}\sum_{i=1}^{n+1}A^{(i)}(\lambda^{**})\right] \leq \gamma +\frac{a_\text{max}}{n+1} + \E[H],
\end{equation}
where the first equality holds due to exchangeability (this is analogous to the reasoning for the lower bound). Now, note that the following implications hold:
    \begin{align}
    & \frac{1}{n+1}\sum_{i=1}^{n+1}A^{(i)}(\lambda^{**})\geq\gamma+\frac{a_\text{max}}{n+1} \implies \frac{1}{n}\sum_{i=1}^{n+1}A^{(i)}(\lambda^{**})\geq\frac{(n+1)\gamma}{n}+\frac{a_\text{max}}{n} \label{eq:first_ineq2}\\
    & \implies \frac{1}{n}\sum_{i=1}^{n}A^{(i)}(\lambda^{**})\geq\frac{(n+1)\gamma}{n}+\frac{1}{n}\left(a_\text{max}-A^{(n+1)}(\lambda^{**})\right)\\
    & \implies \frac{1}{n}\sum_{i=1}^{n}A^{(i)}(\lambda^{**})\geq \frac{(n+1)\gamma}{n} \implies \lambda^{**} \in \V\left(\bD_{:n}, \dfrac{n+1}{n}\gamma\right).
\end{align}
Following once again an analogous reasoning to the one for the lower bound, we have that $\hlambda \leq \lambda^{**}$ holds almost surely and that
\begin{align}
\E\left[A^{(n+1)}(\hlambda)\right] & = \E\left[\E\left[A^{(n+1)}(\hlambda) \;\middle|\; \hlambda, \lambda^{**}\right] \right] \leq \E\left[\E\left[A^{(n+1)}(\lambda^{**}) \;\middle|\; \hlambda, \lambda^{**}\right] \right] \\
& = \E\left[A^{(n+1)}(\lambda^{**})\right] \leq \gamma +\frac{a_\text{max}}{n+1} + \E[H].
\end{align}
\end{proof}

\section{Additional Discussion of Conf-Gen: Assumptions and Comparison to CRC}\label{app:extra}

We now provide an extended discussion of the assumptions of Conf-Gen, as well as its similarities and differences with CRC.

First, we discuss the monotonicity assumption in \autoref{def:sensible}, which we note we only use with $\gamma' = \tfrac{n+1}{n}\gamma$ and $\gamma'' = \gamma$ when proving the lower bound, i.e., that $\lambda \mapsto \E[A^{(n+1)}(\lambda)\mid \hlambda, \lambda^*]$ is non-decreasing, almost surely. This assumption is used in the proof of \autoref{th:main} in \autoref{eq:monotonicity_ineq} to establish that $\E[A^{(n+1)}(\hlambda)] \geq \E[A^{(n+1)}(\lambda^*)]$ follows from $\lambda^* \leq \hlambda$. Note that the assumption of monotonicity only in (total) expectation---i.e., that $\lambda \mapsto \E[A^{(n+1)}(\lambda)]$ is non-decreasing---is not enough to conclude the desired inequality holds, due to $\hlambda$ not being independent from $A$ and $\C$, and $\lambda^*$ also not being independent from $(X^{(n+1)}, \bY^{(n+1)}, Y_{\text{GT}}^{(n+1)})$. Note also that the assumption of monotonicity in total expectation follows from monotonicity in conditional expectation. Thus, our monotonicity condition is stronger than monotonicty in total expectation, but weaker than $\lambda \mapsto A^{(n+1)}(\lambda)$ being non-decreasing almost surely, as required by CRC. Our condition can be understood as strengthening the assumption of monotonicity in total expectation by requiring that knowledge of $\hlambda$ and $\lambda^*$ must not alter the distribution of $A$, $\C$, and $(X^{(n+1)}, \bY^{(n+1)}, Y_{\text{GT}}^{(n+1)})$ so drastically so as to violate monotonicity in expectation. Intuitively, we can expect monotonicity in conditional expectation to hold whenever monotonicity in total expectation holds and the dependence between $A^{(n+1)}(\lambda)$ and $(\hlambda, \lambda^*)$ is sufficiently weak.

Second, the assumption in Conf-Gen that $A$ be non-negative can be understood as simply requiring $A$ to be lower-bounded (and the same is true for CRC). More precisely, assume we have an admissibility function $\tilde{A}$ which is lower-bounded by $a_{\text{min}} \in \R$ rather than $0$, and say we are given a scalar admissibility target $\tilde{\gamma} \geq a_{\text{min}}$. We can then define $A \coloneqq \tilde{A} - a_{\text{min}}$ and $\gamma \coloneqq \tilde{\gamma} - a_{\text{min}}$, and by non-negativity of $A$, \autoref{th:main} holds. Adding $a_{\text{min}}$ to both sides of \autoref{eq:cg_guarantee} then yields the desired conformal guarantee:
\begin{equation}
        \E\left[\tilde{A}\left(X^{(n+1)}, \Chat\left(X^{(n+1)}, \bY^{(n+1)}\right), Y_{\text{GT}}^{(n+1)}\right)\right] \geq \tilde{\gamma}.
\end{equation}
Note that in this case, $\hlambda$ as defined in \autoref{eq:cg_lambda} can be re-written in terms of $\tilde{A}$ and $\tilde{\gamma}$ as
\begin{equation}
    \hlambda = \inf \left\{\lambda \in \Lambda \;\middle|\; \dfrac{1}{n}\sum_{i=1}^n \tilde{A}\left(X^{(i)}, \C\left(X^{(i)}, \bY^{(i)}\right), Y_{\text{GT}}^{(n+1)}\right) \geq \dfrac{n+1}{n}\tilde{\gamma} - \dfrac{a_{\text{min}}}{n}\right\}.
\end{equation}

Third, the proof of \autoref{th:main} requires $\hlambda \in \V(\bD_{:n}, \tfrac{n+1}{n}\gamma)$ and $\lambda^* \in \V(\bD_{:n}, \gamma)$ to hold almost surely: this happens whenever $\lambda \mapsto A(X, \C(X, \bY), Y_{\text{GT}})$ is right-continuous and $\Lambda$ is a closed interval, as required in CRC---yet it can also hold in other settings, e.g., if $\Lambda$ is finite.

Fourth, Conf-Gen deals with $\Lambda$ more cleanly than CRC by making it clear that this set can be unbounded and that it need not be an interval.

Lastly, the upper bound in \autoref{eq:cg_guarantee2} improves upon the corresponding bound for CRC from \autoref{eq:crc_guarantee2} in various ways: it has the same weaker assumptions than CRC for the lower bound, it does not require independence nor the c\`adl\`ag property, and its use of $\E[H]$ is somewhat analogous to $\epsilon$ in the CP upper bound (\autoref{th:cp_upper}), whereas the CRC bound (\autoref{th:crc_upper}) has no such analogue. To see that our bound indeed generalizes the CRC one, note that if the functions $A^{(i)}$ are c\`adl\`ag, then $H \leq h(\bar{A}_{n+1}, \lambda^{**})$, and if additionally these functions are independent and $\sP(h(A^{(i)}, \lambda) > 0) = 0$ for every \hbox{$\lambda \in \Lambda$} (as required by CRC), then no more than a single jump can happen at the same time, which yields that $h(\bar{A}_{n+1}, \lambda^{**}) \leq \frac{a_{\text{max}}}{n+1}$ almost surely (see \citet{angelopoulos2024conformal} for a formal derivation of this fact). In other words, our bound implies the CRC bound when we add the c\`adl\`ag and independence assumptions used by CRC, yet it remains valid without them. For example, if the functions $A^{(i)}$ are continuous and $\sP(\lambda^{**} = \inf \Lambda) = 0$, then $H=0$ almost surely, in which case our upper bound improves to $\gamma + \frac{a_{\text{max}}}{n+1}$.

\section{Additional Examples}\label{app:examples}

\autoref{tab:examples2} is an extended version of \autoref{tab:examples}, including the discussion below and the names of each sequence selector as it appears in our Python package. We invite the reader to consult the package documentation for a complete list of features and configurations beyond those highlighted in \autoref{tab:examples2}.

Here we provide additional examples of Conf-Gen configurations beyond those presented in \cref{sec:examples}. First, note that we can always apply a post-processing step to the selection function outputs from \autoref{tab:examples}; formally this means applying a function to this output. For example, in the experiment from \cref{sec:qa}, the selection function is $f(\by_{:\tau(x, \by, \lambda)})$, where $f: \Y^* \rightarrow \Y^*$ is a deterministic function which removes all exact duplicates from its input sequence. More complex choices of post-processing functions are also possible. For instance, we might want to remove all ``semantic duplicates'', which could be achieved by outputting $F(\by_{:\tau(x, \by, \lambda)})$, where $F$ is an LLM call instructing the LLM to remove responses with the same meaning, even if they are not identical. Note that in general, the output of the selection function $\C(x, \by)$ in this example need not be a subsequence of $\by$, although the finite image property remains nonetheless valid.

We also mentioned in \cref{sec:related} that the conformal summarization method of \citet{kuwahara2025document} is an instance of Conf-Gen. Like conformal factuality, this corresponds to the third row of \autoref{tab:examples2}, except this time the aggregation function is different. Here, there is no ground truth variable, and we define $A_t''(x, y_t)$ as an indicator of whether sentence $y_t$ is relevant within document $x$. We also define $A_t'(x, \by)$ as $\tfrac{1}{N(x)}\sum_t A_t''(x, y_t)$, where $N(x)$ is the number of relevant sentences in document $x$. The aggregation function is then given by
\begin{equation}\label{eq:recall_agg}
    \texttt{agg}(\mathbf{a}') = \mathds{1}\left(\sum_{a' \in \mathbf{a}'} a' \geq \beta \right).
\end{equation}
Note that Conf-Gen also allows us to easily modify this setup. For example, if we wanted the output to be a proper summary rather than a subset of important sentences, we could change the output from $\{y_t | S_t^{\uparrow} > \lambda\}$ to $F(\{y_t | S_t^{\uparrow} > \lambda\})$, where $F$ is an LLM call where the LLM is instructed to summarize all the sentences given to it as input.

Lastly, in \cref{sec:forests} we used a selection function which is not displayed in \autoref{tab:examples}: the smallest subset such that the accumulated score exceeds $\lambda$. Formally, for a sequence $\by$ with corresponding scores $\mathbf{S}^{\uparrow}$, this selection function is given by
\begin{equation}\label{eq:smalest_sum_subset}
    \C(x, \by) \in \argmin_{\mathcal{U} \subseteq \by |  \sum_{S^{\uparrow} \in \mathbf{S}_{\mathcal{U}}^{\uparrow}} S^{\uparrow} > \lambda} |\mathcal{U}|
\end{equation}
whenever the $\argmin$ is not empty, and $\C(x, \by) =\by$ otherwise, where $\mathbf{S}_{\mathcal{U}}^{\uparrow}$ contains the entries in $\mathbf{S}^{\uparrow}$ corresponding to $\mathcal{U}$. Operationally, this selection function is equivalent to the first row of \autoref{tab:examples2}, except the sequence $\by$ is sorted beforehand on decreasing order of its corresponding scores (and the output is a set rather than a sequence). Note that the sorting operation requires access to the entire sequence $\by$, and thus this selection function is no longer compatible with partial generation. Note also that this selection function behaves similarly to the second row from \autoref{tab:examples2}, except it does not include all entries with matching scores at once. As an example, consider a sequence $\by=(y_1, y_2, y_3)$ with corresponding scores $\mathbf{s}^{\uparrow} = (1, 1, 2)$, and assume that $\mathbf{s}^{\downarrow} = -\mathbf{s}^{\uparrow}$. This selection function would produce $\{y_3\}$, $\{y_3, y_1\}$, and $\{y_3, y_2, y_1\}$ (assuming the sorting algorithm produces $(y_3, y_1, y_2)$ and not $(y_3, y_2, y_1)$), with the jumps happening at $\lambda=2, 3$. The selection function from the second row of \autoref{tab:examples2} would produce $\emptyset$, $\{y_3\}$, and $\{y_3, y_2, y_1\}$, with the jumps happening at $\lambda = -2, -1$: this latter selection function always outputs $y_1$ and $y_2$ together as they have the same score.

\begin{table*}[t]
    \centering
    \caption{Representative configurations of Conf-Gen.}
    \small
    \begin{tabular}{cccccc}
\hline
& \multicolumn{2}{c}{Selection function} & \multicolumn{3}{c}{Compatible with} \\ \cmidrule(r){2-3} \cmidrule(l){4-6}
\multirow{-2}{*}{Name \vspace{2pt}} & $\C(x, \by)$ & $\texttt{accum}$ & $\texttt{agg}$ & Partial generation & Post-processing \\ \hline
$\texttt{RunningMax}$ / $\texttt{RunningSum}$ & $\by_{:\tau(x, \by, \lambda)}$ & $\max$ / sum  & $\max$ / \autoref{eq:recall_agg} & Yes  & Yes \\
$\texttt{BelowLambda}$ & $\{y_t \mid S_t^{\downarrow} \leq \lambda \}$ & N/A  & $\max$ / \autoref{eq:recall_agg} & No & Yes \\
$\texttt{AboveLambda}$ & $\{y_t \mid S_t^{\uparrow} > \lambda \}$ & N/A  & $\min$ & No & Yes \\
$\texttt{RunningMaxSingle}$ & $y_{\tau(x, \by, \lambda)}$ & $\max$  & $\max$ / $\min$ & Yes & Yes  \\
$\texttt{SmallestSubsetSum}$ & \autoref{eq:smalest_sum_subset} & sum  & $\max$ / \autoref{eq:recall_agg} & No & Yes  \\
\hline
\end{tabular}
    \label{tab:examples2}
\end{table*}

\section{Experiments} \label{appendix:experiments}

\subsection{Conformal Open-Domain Question Answering} \label{sec:triviaqa_experimental_details}

\textbf{Dataset Splits.} The TriviaQA dataset \citep{joshi2017triviaqa} contains trivia questions such as \emph{``What is Harrison Ford's real name?''} or \emph{``What is Mel Gibson's middle name?''} for which there exist a small number of correct answers (often only one). 
We use the same TriviaQA dataset used by  \citet{quach2024conformal} composed of $17,944$ trivia questions and their answers. \citet{quach2024conformal} decompose their dataset as follows: $2,000$ questions are used for training, $2,000$ questions are used for validation, and the rest is used for test. We use their training set as our calibration set, and our test set is the same as theirs.

\textbf{Protocol.} We follow the experimental protocol from \citet{quach2024conformal} by using LLaMA-13B embeddings \citep{touvron2023llama}. For each question $x$, we sample $T=20$ answers. We define $S^{\uparrow}(x,y)$ as the likelihood function of the LLaMA model
with length-normalization \citep{wu2016googlesneuralmachinetranslation} for the question $x$ and the answer $y$. The admissibility function $A'(x,y, y_{\text{GT}})$ is $1$ if $y$ matches $y_{\text{GT}}$, and $0$ otherwise. 

\textbf{Sampling Strategies.} In the main paper, we consider that our sampling strategy is \emph{i.i.d.}, which means that the LLM can generate the same answer multiple times. 
As an ablation study, we propose here an alternative sampling strategy (that we call \emph{diverse}) that forces the LLM to generate an answer that is different from all the answers that were previously generated. 
To this end, and due to the nature of the dataset and of the ground truth answers that are often short, we found that an easily reproducible way of generating different answers was to add previously generated answers to the list of \emph{bad words}.\footnote{See \url{https://huggingface.co/docs/transformers/v4.46.0/en/internal/generation_utils} for details. In practice, this corresponds to a list of token ids that are not allowed to be generated.} When explicitly asked in the prompt to generate different answers, LLaMa tends to generate the same answer multiple times because of the high certainty of the model for its first generated answer; this is why we used the list of bad words instead.

\textbf{Results.} \autoref{fig:triviaqa_results} illustrates the average test admissibility score and the average number of answers generated for each question by \citet{quach2024conformal} with the \emph{max scoring functions} that they propose, and by our method with the selection function from the first row of \autoref{tab:examples} with $\texttt{accum} = \max$ but with duplicates removed, as a function of $\gamma \in [0.62, 0.86]$ (as done by \citet{quach2024conformal}). 
As discussed in the main paper, our approach is cheaper at inference time since it requires fewer LLM calls for a given $\gamma \leq 0.84$. 
When $\gamma \geq 0.85$, the value of our $\hat{\lambda}$ is $+\infty$, and the size of our sequence becomes as large as possible. 
\citet{quach2024conformal} avoid this degenerate case by using different filters with the Pareto testing method \citep{laufer-goldshtein2023efficiently} to cross-validate hyperparameters that our approach does not use, but this cross-validation process also incurs additional cost during calibration, and it results in larger sets when $\gamma \leq 0.84$.

\autoref{fig:triviaqa_results_appendix} illustrates the average sequence length of the baseline and of our method with \emph{i.i.d.} sampling and with the \emph{diverse} strategy described above with different sequence selectors described in \autoref{tab:examples2}. 
Since the first answer generated by the model is often correct and there are few possible correct answers in this task, the \emph{diverse} sampling strategy underperforms both the strategy with \emph{i.i.d.} sampling and the baseline. 
This demonstrates that the choice of the \emph{sequence generation} strategy is a crucial step of our approach and may be impacted by the nature of the task and of the dataset. On the other hand, the choice of the sequence selector function has less impact on the performance in this task. 
We also report the results obtained by the baseline with different scoring functions called \emph{First-K}, \emph{First-K+reject}, \emph{Max}, and \emph{Sum}~\citep[see][Section 5.2]{quach2024conformal}.

\begin{figure}
    \centering
    \includegraphics[width=\linewidth]{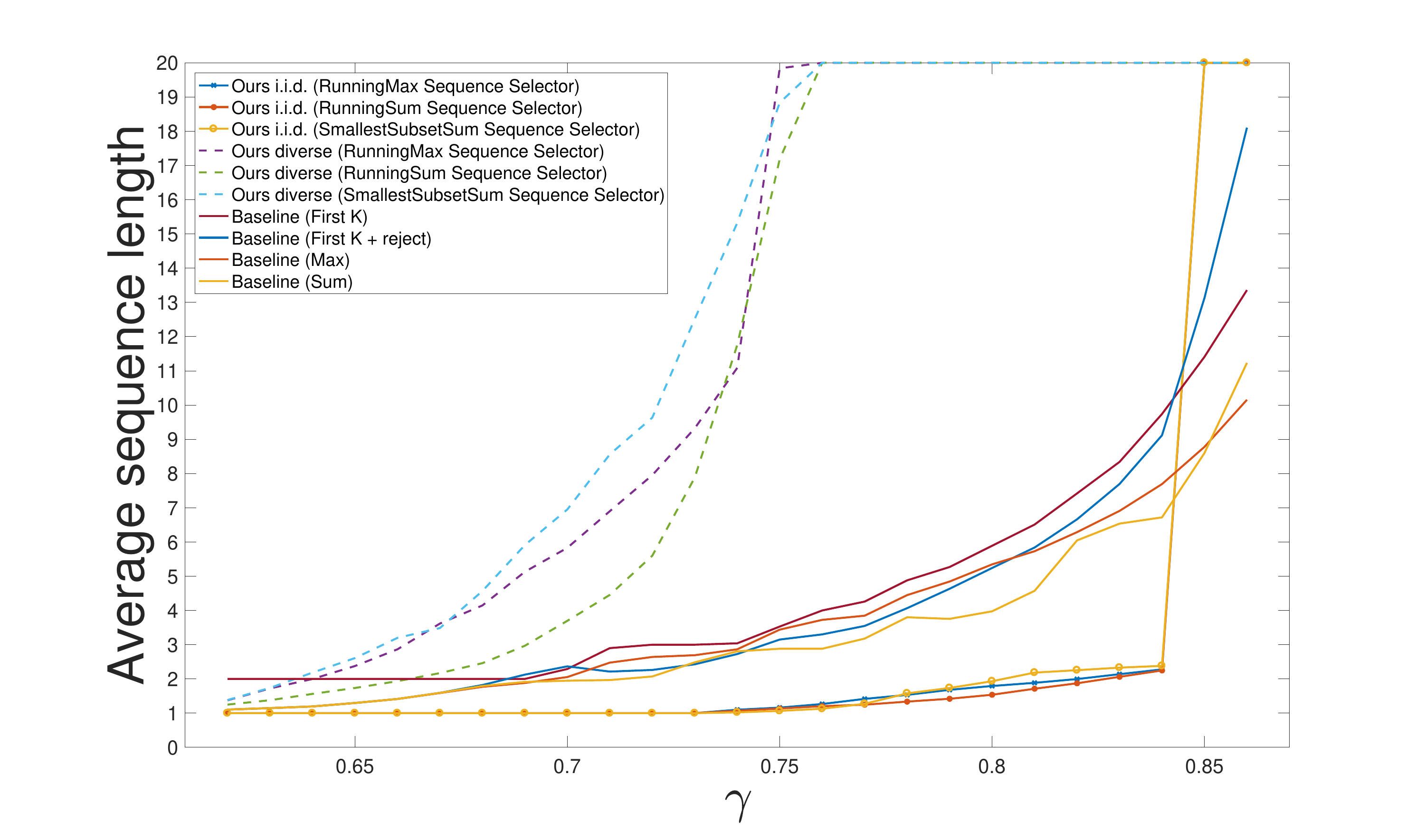}
    \vspace{-25pt}
    \caption{Average sequence length as a function of $\gamma$ with different sampling strategies. Our proposed strategy results in shorter average sequence lengths than the baseline, resulting in a smaller number of LLM calls.}
    \label{fig:triviaqa_results_appendix}
    \label{fig:triviaqa_results_admissibility}

\end{figure}

\subsection{Conformal Generation of Non-Memorized Images}\label{app:images}

\textbf{Score Function.} We now specify the score function we used for the experiment in \cref{sec:mem} in more detail. Recall that diffusion models \citep{sohl2015deep, ho2020denoising, song2021score} such as Stable Diffusion generate an image by first sampling Gaussian noise and iteratively refining the noise into an image. Since Stable Diffusion is a latent diffusion model, this refinement happens in latent space, and a decoder $g$ must be applied to obtain an image from a latent. Formally, the $t$-th generation does not only contain the final generated latent, but the entire trajectory of noisy latents used by the diffusion model during generation: $\bY_t = (Y_t(1), \dots, Y_t(m))$, where $Y_t(1)$ corresponds to the final generated latent (i.e., $g(Y_t(1))$ is the final generated image), with each $Y_t(j)$ becoming progressively nosier as $j$ increases, and with $Y_t(m)$ corresponding to pure Gaussian noise. Recall that diffusion models provide a score function $s$,\footnote{This function corresponds to the Stein score, and has nothing to do with our use of the term ``score function''.} and the classifier guidance term corresponding to $Y_t(j)$ is
\begin{equation}
    s\left(Y_t(j), j, X_t\right) - s\left(Y_t(j), j, \emptyset \right),
\end{equation}
i.e., it is the difference between the score function conditioned on the input prompt $X_t$, and the score function without conditioning on any prompt. Our score function is given by $S_t^{\uparrow} = - S^{\downarrow}_t$, where
\begin{equation}\label{eq:score_cfg}
    S^{\downarrow}_t = S^{\downarrow}(X_t, \bY_t) = \dfrac{1}{m}\sum_{j=1}^m \sigma^2(j) \Vert s\left(Y_t(j), j, X_t\right) - s\left(Y_t(j), j, \emptyset \right) \Vert_2^2,
\end{equation}
where $\sigma(j)$ is a known coefficient from the diffusion process. As mentioned in the main text, \citet{wen2023detecting} found this score indicative of memorization, and we used the procedure proposed by \citet{ross2025geometric}, which we now describe. $\bY_1$ is produced directly from $X_1$, and then the gradient of $S^{\downarrow}(X_1, \bY_1)$ is computed for every token in $X_1$. The token achieving the largest gradient norm is selected, and an LLM is instructed to produce $X_2$ by changing the selected token from $X_1$, while keeping the entire prompt semantically meaningful. Then, $\bY_2$ is generated using $X_2$ as the prompt, and this process is repeated sequentially to produce $X_3$ based on $S^{\downarrow}(X_2, \bY_2)$, and so on. \citet{ross2025geometric} proposed this procedure as a way to generate a sequence of progressively less memorized images; our work can be understood as conformalizing the choice of which of these images to select. \citet{ross2025geometric} found that a different score based on local intrinsic dimension estimates \citep{kamkari2024geometric2, leung2025convolutions} performed slightly better than the one in \autoref{eq:score_cfg}, but this score is more computationally taxing to compute and resulted in out-of-memory errors on our hardware.

\textbf{Dataset.} We used the code of \citet{ross2025geometric}, which uses the samples that \citet{webster2023reproducible} identified as memorized by Stable Diffusion v1.5. The codebase does not directly provide images, but rather URLs containing the images. Some of the URLs were no longer valid, and some of the valid URLs contained duplicate images. After de-duplication, we were left with $52$ images, which we split into a calibration and a test set each containing $26$ images. For each one of these ground truth memorized images $Y_{\text{GT}}^{(i)}$ and each corresponding generated image $g(Y_t^{(i)}(1))$, we asked $10$ human evaluators to choose one of three options:\footnote{Note that this is an example where the elements of $\bY$ do not live in the same space as $Y_{\text{GT}}$: the former are latents, and the later is an image. In other words, this example highlights that Conf-Gen remains valid when $\Y \neq \Y_{\text{GT}}$.} $(a)$ ``bad'', meaning that the generated image is a copy or near-copy of the real image; $(b)$ ``medium'', meaning that the generated image is ``too different'' or completely unrelated to the real one; and $(c)$ ``good'', meaning that the generated image is similar in content and style to the real one, but not close enough to be a copy. The admissibility $A_t'$ is then given by the percentage of evaluators who assessed the corresponding generated image as ``good'' or ``medium''. 
In \autoref{fig:bad_example}, \autoref{fig:medium_example}, and \autoref{fig:good_example}, we display examples shown to the human evaluators, corresponding to a ``bad'', a ``medium'', and a ``good'' generation, respectively.

\textbf{Discussion.} Note that this experiment is an example where the function $\lambda \mapsto A(\lambda)$ is not non-decreasing, almost surely: it is possible , however unlikely, that by changing more tokens in the prompt, the corresponding generated image becomes more memorized. Nonetheless, since this event should be expected to have very low probability, it is reasonable to believe that monotonicity still holds in conditional expectation, as required by \autoref{def:sensible}. This example underscores the practical relevance of our relaxed theoretical framework over the standard requirements of CRC.

\textbf{Human Evaluators.} We used the Prolific platform to recruit our anonymous human evaluators. We recruited a total of $40$ participants; the only restrictions we enforced were self-reported fluency in English (in order to understand instructions), and a minimum of a technical college degree, so as to maximize the probability of true English fluency. We manually reviewed every image before running the experiment to ensure no disturbing content would be shown to the participants. We paid an average hourly rate above \pounds $12$, which is significantly more than the Prolific-recommended amount, \pounds $9$. Some institutions have an institutional review board (IRB) which sets the ethical standards that its members must follow when conducting experiments with humans, and whose approval is required prior to running such experiments. Our institution has no IRB, and we thus followed the aforementioned steps to ensure our experiment was run ethically.

\begin{figure}
    \centering
    \includegraphics[width=0.9\linewidth]{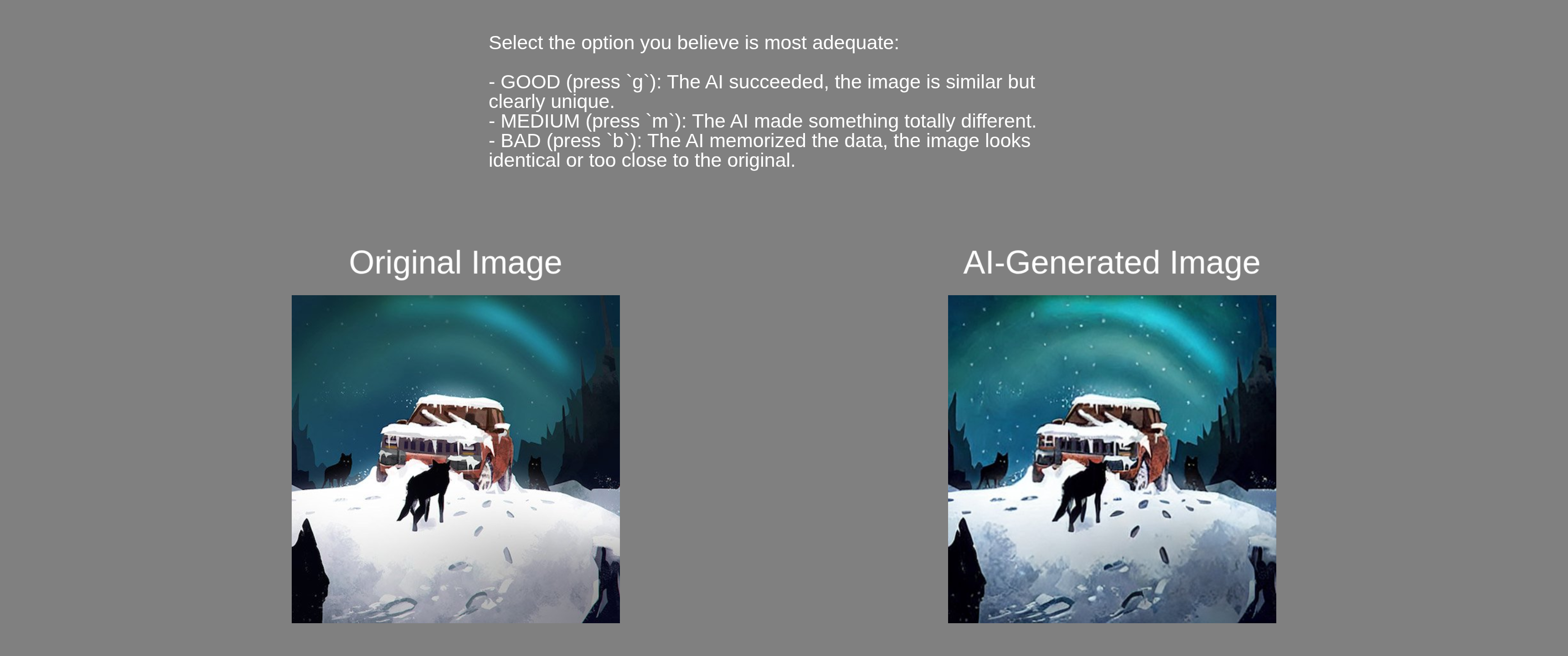}
    \caption{Example of a screen displayed to human evaluators, corresponding to a ``bad'' example.}
    \label{fig:bad_example}
\end{figure}

\begin{figure}
    \centering
    \includegraphics[width=0.9\linewidth]{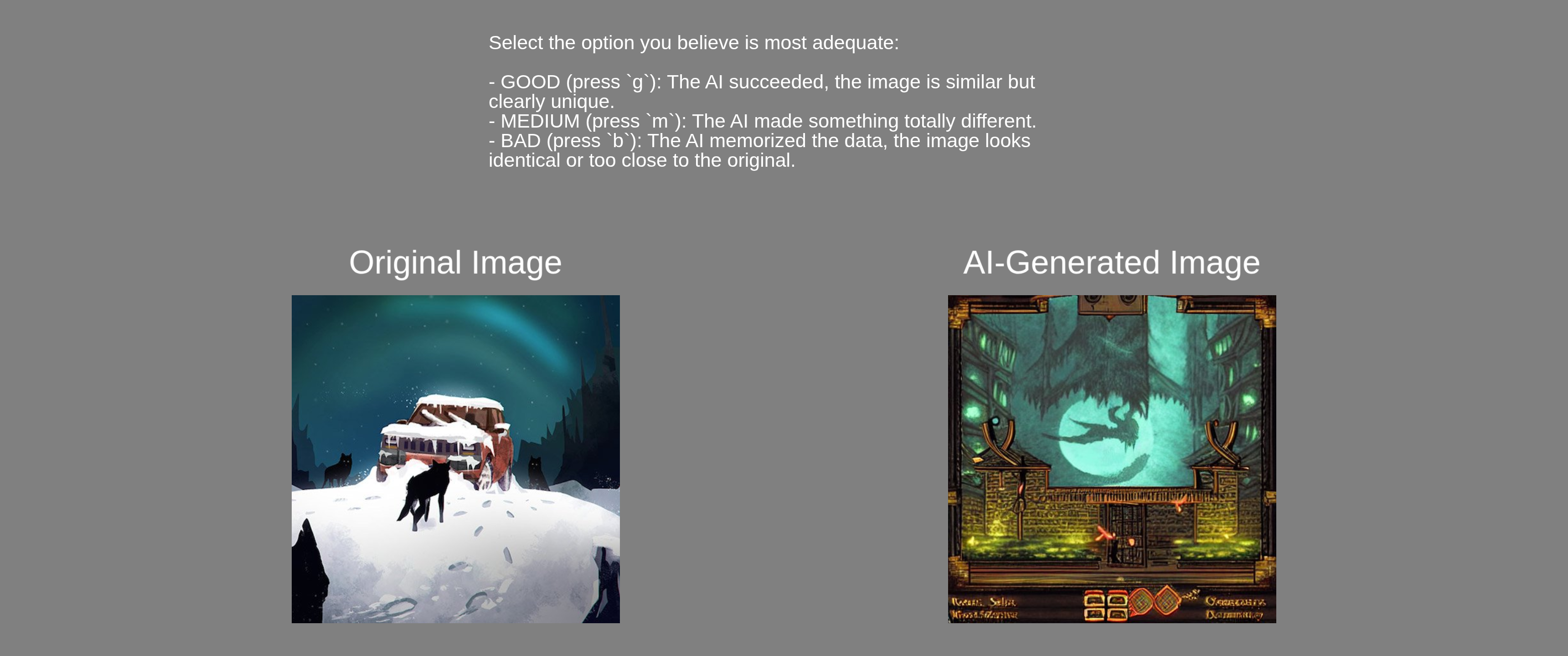}
    \caption{Example of a screen displayed to human evaluators, corresponding to a ``medium'' example.}
    \label{fig:medium_example}
\end{figure}

\begin{figure}
    \centering
    \includegraphics[width=0.9\linewidth]{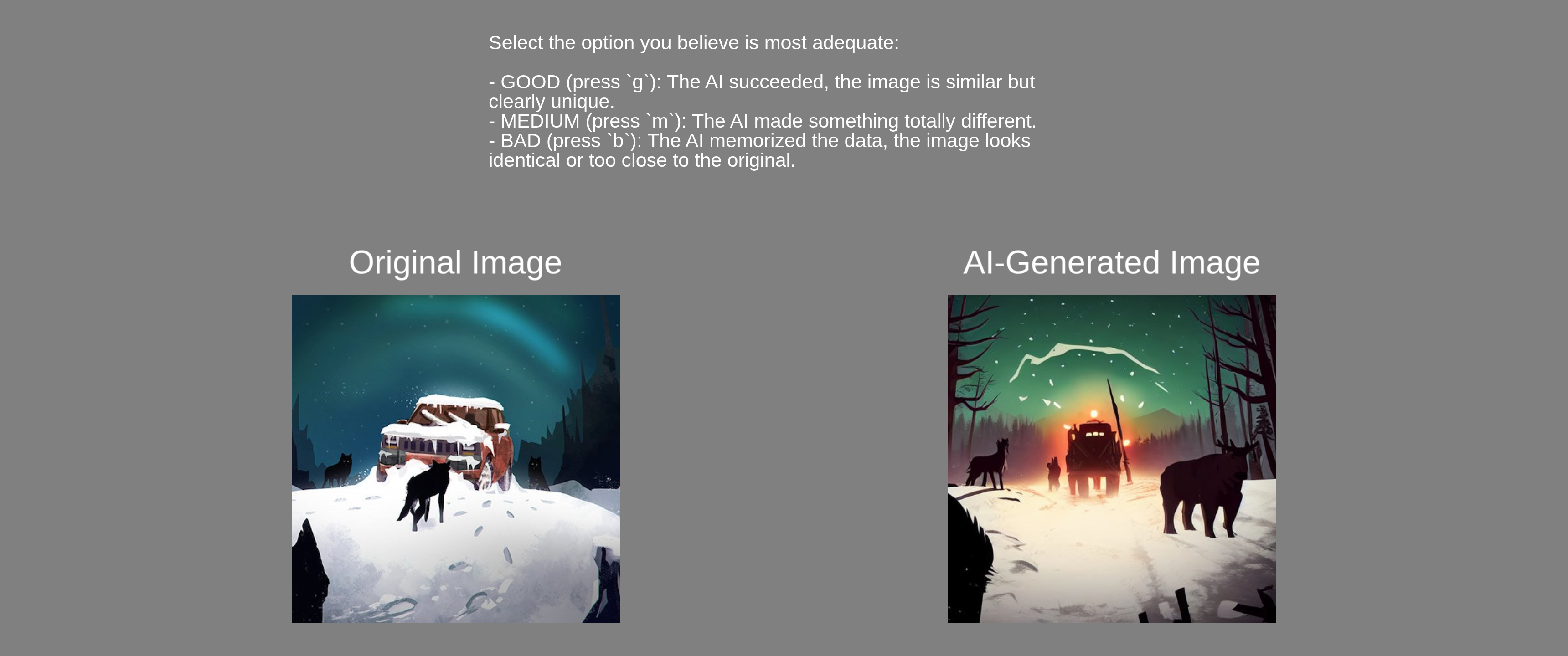}
    \caption{Example of a screen displayed to human evaluators, corresponding to a ``good'' example.}
    \label{fig:good_example}
\end{figure}

\newpage

\subsection{Conversational Conversational AI Chatbot}\label{app:conversational}

\textbf{Dataset Specification.} We utilize the multi-turn human-generated conversations from the ClariQ dataset for our experiments. It contains $498$ human-labeled conversations containing initial questions and further clarifications. Certain conversations include responses which provide little information (e.g., simple responses such as \emph{``No''}, \emph{``I don't know''}, \emph{``I am not sure''}, etc.). To ensure the quality of the data, we filtered out conversations including such responses, resulting in a high-quality dataset of $316$ conversations. We use a $50{-}50$ split for the calibration and test sets.

\textbf{Dataset Preprocessing: Question Consolidation.} Each conversation in the dataset is obtained as follows. First, a user asks an ambiguous question, $X_1$. An LLM then generates a follow-up question $Y_1'$ asking for clarification at which point the user provides an answer $X_2'$ to $Y_1'$. The LLM then asks another follow-up question, $Y_2'$, and the user provides an answer $X_3'$. This process happens once more to produce $Y_3'$ and $X_4'$, and results in the sequences $\bY'=(Y_1', Y_2', Y_3')$ and $\bX' = (X_2', X_3', X_4')$. The ClariQ dataset contains $X_1$, $\bY'$, and $\bX'$; and all the conversations in the dataset are such that at the end of the conversation, the original question has been disambiguated. We then consolidate the conversation into a series of progressively disambiguated questions:
\begin{equation}
    X_t = \text{CONSOLIDATE}(X_1, \bX_{:t}', \bY_{:t-1}')
\end{equation}
for $t=2,3,4$, where $\text{CONSOLIDATE}$ is an LLM call where the LLM is instructed to distill the original question and all provided clarifications into a single question. As a result, the consolidated question $X_t$ represents the entire context of the conversation up to step $t$. We also make another LLM call to obtain an answer $Y_t$ to the consolidated question $X_t$, and the score $S_t^{\uparrow} = S^{\uparrow}(X_t, Y_t)$ is obtained through another LLM call where the LLM quantifies how clear $X_t$ is as a question (although the score function could only take $X_t$ as input, we also pass $Y_t$). 

Furthermore, to diversify the question sequence lengths over the conversation dataset, we evenly divide the ClariQ dataset (the multi-turn human-generated portion) into $4$ subsets, such that the length of the question sequences (i.e. $\mathbf{X}$) ranges from $1$ to $4$ for each of the subsets. We refer to these subsets as \emph{0-turn questions}, \emph{1-turn questions}, \emph{2-turn questions}, and \emph{3-turn questions}, respectively. To obtain these subsets, for each $4$-question sequence we obtained from the consolidation stage, we drop the questions in the middle, such that the initial ambiguous question $X_1$ and the final fully-clarified question $X_4$ are always maintained (with the exception for the \emph{0-turn questions}, for which we only keep $X_4$). As a result, the number of follow-up questions required to obtain an unambiguous question across the test dataset is, on average, \emph{2.5} per conversation. For these conversations, the admissibility $A_t'$ at the $t$-th turn of the conversation is an indicator of whether the original question has been fully disambiguated, i.e., it is $0$ unless $t$ is the final turn of the conversation. Thus, the conformal guarantee ensures that the conversational chatbot stops asking clarifying questions past the turn where the question has been fully disambiguated with probability at least $\gamma$.

\textbf{Discussion.} Recall that the selection function is given by the fourth row of \autoref{tab:examples}, which does not always automatically satisfy the monotonicity requirement in \autoref{def:sensible}. Note that for this task, $A_t'$ is always non-decreasing in $t$ by construction. As a consequence, $\lambda \mapsto A(\lambda)$ is non-decreasing, and Conf-Gen can thus be applied.

\textbf{LLM Prompts for Consolidation, Generation, and Scoring.} We provide in \autoref{fig:conversational_consolidate} the system prompt we used for consolidation step. The LLM prompt for the response generation given a user question is provided in \autoref{fig:conversational_generate}. Lastly, the LLM prompt for the scoring function is provided in \autoref{fig:conversational_score_1} and \autoref{fig:conversational_score_2}. We use \emph{GPT-4o} for the LLM models for question consolidation, response generation, and scoring. 
\newpage

\begin{figure}[h!]
\begin{framed}
\begin{Verbatim}
consolidation_prompt: |
   You condense a multi-turn passage (alternating lines starting with "User:" and "Assistant:") into a single, 
   fully specified question that reflects the user's final clarified intent.
   Rules:
   - Integrate all details; prioritize the latest user clarifications.
   - Resolve pronouns from earlier context when obvious (e.g., “his” → “Michael Jordan”).
   - Output ONLY the consolidated question as one sentence, no preamble or extra text.
   - Be concise (<= 25 words) and specific.
   

consolidation_examples: |
   Example Passages:
   Passage:
   User: How do I bake a cake?  
   Assistant: What type of cake are you interested in?  
   User: A chocolate cake with minimal sugar.  

   Consolidated Question:
   How do I bake a low-sugar chocolate cake?
   ---

   Passage:
   User: Can you tell me about the latest iPhone?  
   Assistant: Do you want specifications, price, or features?  
   User: Just the new features and improvements.  

   Consolidated Question:
   What are the new features and improvements in the latest iPhone?
   ---

   Passage:
   User: I want to plan a vacation.  
   Assistant: Where are you thinking of going?  
   User: Somewhere in Europe, ideally with historical landmarks.  

   Consolidated Question:
   How can I plan a European vacation focused on historical landmarks?
   ---

   Passage:
   User: Explain machine learning.  
   Assistant: Do you want a general overview or examples of applications?  
   User: Focus on real-world applications in healthcare.  

   Consolidated Question:
   What are real-world applications of machine learning in healthcare?
   ---

   Passage:
   User: I need help with my laptop.  
   Assistant: What problem are you experiencing?  
   User: It keeps overheating when I run multiple programs.  

   Consolidated Question:
   How can I prevent my laptop from overheating when running multiple programs?
   ---

   Passage:
   User: I like Italian food.  
   Assistant: Are you looking for recipes or restaurants?  
   User: Recipes, preferably easy ones I can make at home.  

   Consolidated Question:
   What are easy Italian recipes I can prepare at home?
   ---

   Passage:
   User: I want to learn photography.  
   Assistant: Do you mean digital or film photography?  
   User: Digital, with tips for beginners on portraits.  

   Consolidated Question:
   How can a beginner learn digital portrait photography effectively?
   ---

   Passage:
   User: I'm interested in running a marathon.  
   Assistant: Do you need training plans or diet advice?  
   User: Both, especially training for a first marathon.  

   Consolidated Question:
   What training plan and diet advice are recommended for someone running their first marathon?
\end{Verbatim}
\end{framed}
\caption{Prompt used for the LLM to consolidate questions in ClariQ.}
\label{fig:conversational_consolidate}
\end{figure}

\newpage

\begin{figure}[h!]
\begin{framed}
\begin{Verbatim}
generation_prompt: |
   You are a concise evaluator and answer generator.
   Your task is to decide whether the user's question is clear enough to answer directly, and then respond appropriately.
   ---
   Decision rules (in order):
   1. If the question is truly vague, open-ended, or underspecified, respond exactly:
      "I cannot answer because the question is not clear enough."
   2. If the question is reasonably specific and factual — even if it asks for multiple related details (e.g., biography + contributions, or recipes for a specific dish) — treat it as clear and answer concisely (1-2 sentences).  
      Only refuse if the intent or essential details are genuinely ambiguous.
   3. If the question is clear but requires real-time data, private information, or external access, respond exactly:
      "The question is specific and clear, but I cannot answer due to information or access limitations."
   ---
   Output rules:
   - Respond with only the answer text — no commentary, explanations, or code formatting.
   - Keep answers short (1-2 sentences), factual, and strictly on-topic.
   - Consider multi-part factual questions as clear if the meaning is specific and understandable.


generation_examples: |
   **Examples:**

   Q: "Tell me about animals."  
   A: "I cannot answer because the question is not clear enough."
   ---

   Q: "What are some good ways to train a dog?"  
   A: "Positive reinforcement, consistency, and short daily sessions are effective ways to train a dog."
   ---

   Q: "What are some effective exercises to strengthen core muscles and improve posture for office workers?"  
   A: "Planks, bridges, and seated stretches help strengthen core muscles and improve posture for office workers."
   ---

   Q: "Who won the FIFA World Cup in 2023?"  
   A: "The question is specific and clear, but I cannot answer due to information or access limitations."
   ---

   Q: "Can you tell me about Leonardo da Vinci?"  
   A: "Leonardo da Vinci was a Renaissance artist, inventor, and scientist, known for works like the Mona Lisa and The Last Supper."
   ---

   Q: "What are some quick, healthy recipes for dinner under 30 minutes?"  
   A: "Grilled salmon with vegetables or stir-fried chicken with broccoli are quick, healthy dinner options under 30 minutes."
   ---

   Q: "Explain AI."  
   A: "I cannot answer because the question is not clear enough."
   ---

   Q: "What is the current price of Bitcoin?"  
   A: "The question is specific and clear, but I cannot answer due to information or access limitations."
   ---

   Q: "What did Marie Curie contribute to science, and what is her biography?"  
   A: "Marie Curie discovered radium and polonium, conducted pioneering research on radioactivity, and was the first woman to win a Nobel Prize; she was born in 1867 and died in 1934."
   ---

   Q: "Give me travel tips."  
   A: "I cannot answer because the question is not clear enough."
   ---

   Q: "What are some recommended beginner-friendly books to learn Python programming?"  
   A: "'Automate the Boring Stuff with Python' and 'Python Crash Course' are beginner-friendly books to learn Python programming."
   ---

   Q: "Who is the current president of France?"  
   A: "The question is specific and clear, but I cannot answer due to information or access limitations."
\end{Verbatim}
\end{framed}
\caption{Prompt used for the LLM to generate responses to consolidated questions from ClariQ.}
\label{fig:conversational_generate}
\end{figure}

\newpage

\begin{figure}[h!]
\begin{framed}
\begin{Verbatim}
scoring_prompt: |
   You are evaluating a question-answer pair produced by a separate generation step.  
   The question has already been consolidated into a single, clear form.

   Your primary goal is to assign a **continuous clarity and specificity score between 0.0 and 1.0** that reflects 
   **how specific, detailed, and well-defined** the question is.

   ### Scoring Philosophy:
   - The score must vary smoothly and **not be limited to round multiples of 0.1.**
   - Use the full numerical continuum — fine-grained differences (e.g., 0.33 vs. 0.37) are encouraged.
   - Imagine the score as a "clarity percentage": 0.00 = completely unclear, 1.00 = perfectly precise.

   ### Main Criteria:
   1. **Clarity & Specificity:**  
      - Questions with more explicit context, constraints, or detail score higher.
      - Ambiguous or open-ended questions score lower.

   2. **Answer as Evidence:**  
      - The answer serves only as indirect evidence of how clear the question is.
      - A clear question tends to produce a short, precise answer.  
         A vague question tends to produce refusal or generalities.
      - However, never inflate the score if the question itself is vague.

   3. **Refusals & Access Limits:**
      - Refusal because of *vagueness* → low score.  
      - Refusal because of *access limitation* → high score if question itself is specific.

   ### Interpretation Range:
   | Range | Meaning |
   |--------|----------|
   | 0.00-0.20 | Extremely vague or incoherent |
   | 0.21-0.40 | Somewhat vague, lacks focus |
   | 0.41-0.60 | Moderately clear but missing key details |
   | 0.61-0.80 | Mostly clear and specific, minor ambiguity |
   | 0.81-1.00 | Very clear, precise, and narrowly scoped |

   ### Output format (strict):
   Respond with a single JSON object containing exactly two keys:
   - "score": a continuous float between 0.0 and 1.0 (number, not string).
   - "rationale": one concise sentence explaining why the score was chosen.

   **Rules:**
   - Use precise decimal values (e.g., 0.37, 0.64, 0.82) — avoid rounding to tenths.
   - Output only the JSON object (no code fences, labels, or commentary).
   - Strings inside the JSON must escape all internal quotes (\\").
   - The score must reflect the question's clarity, not how "good" the answer is.


\end{Verbatim}
\end{framed}
\caption{Prompt used for the LLM to generate responses to consolidated questions from ClariQ.}
\label{fig:conversational_score_1}
\end{figure}

\newpage

\begin{figure}[h!]
\begin{framed}
\begin{Verbatim}
scoring_examples: |
   **Examples:**
   # ====== Very Low Clarity (0.0-0.2) ======
   Q: "Tell me something interesting."
   A: "I cannot answer because the question is not clear enough."
   Score: 0.05
   Rationale: No topic or constraint; completely open-ended and vague.
   Q: "Explain."
   A: "I cannot answer because the question is not clear enough."
   Score: 0.02
   Rationale: Incoherent and contextless; provides no subject to address.
   Q: "Talk about AI."
   A: "I cannot answer because the question is not clear enough."
   Score: 0.12
   Rationale: Mentions a topic but lacks any direction, focus, or goal.
   Q: "Tell me about history."
   A: "I cannot answer because the question is not clear enough."
   Score: 0.15
   Rationale: Very broad; no timeframe, region, or event specified.
   Q: "How are things?"
   A: "I cannot answer because the question is not clear enough."
   Score: 0.08
   Rationale: Conversational but meaningless as a factual query.
   # ====== Low-Moderate Clarity (0.21-0.4) ======
   Q: "What are some technologies used today?"
   A: "Computers, smartphones, and the internet are common technologies today."
   Score: 0.27
   Rationale: Some topical relevance but extremely broad; lacks domain or purpose.
   Q: "Tell me about AI in healthcare."
   A: "AI in healthcare includes diagnosis support and predictive analytics."
   Score: 0.34
   Rationale: Slightly more focused but still too general and undefined in scope.
   Q: "What are ways to improve health?"
   A: "Exercise, balanced diet, and adequate sleep improve health."
   Score: 0.31
   Rationale: Basic but non-specific; does not define population or goal.
   Q: "Why is the sky blue?"
   A: "Because of Rayleigh scattering of sunlight in the atmosphere."
   Score: 0.39
   Rationale: Simple but specific physical question; focused yet shallow in scope.
   # ====== Moderate Clarity (0.41-0.6) ======
   Q: "What are some ways to reduce carbon emissions?"
   A: "Renewable energy, energy efficiency, and carbon taxes can reduce emissions."
   Score: 0.57
   Rationale: Moderately specific; identifies a theme but lacks timeframe or region.
   Q: "How does machine learning work?"
   A: "It learns patterns from data to make predictions or decisions."
   Score: 0.45
   Rationale: Clear topic but broad and missing key context or examples.
   Q: "What are the causes of climate change?"
   A: "Main causes include burning fossil fuels, deforestation, and agriculture."
   Score: 0.53
   Rationale: Understandable but still general; no particular framing or focus.
   Q: "What are benefits of meditation?"
   A: "It reduces stress and improves focus and emotional well-being."
   Score: 0.49
   Rationale: Clear but not specific; lacks context such as duration or demographic.
   Q: "Explain the difference between a virus and bacteria."
   A: "Viruses need a host to reproduce, while bacteria can live independently."
   Score: 0.58
   Rationale: Narrow topic but not framed in detail (e.g., structure, examples).
   # ====== Fair-High Clarity (0.61-0.8) ======
   Q: "What are common side effects of taking antibiotics?"
   A: "Nausea, diarrhea, and allergic reactions are common side effects."
   Score: 0.69
   Rationale: Fairly clear and focused but lacks mention of drug type or duration.
   Q: "How can students improve concentration while studying?"
   A: "Taking breaks, avoiding distractions, and maintaining good sleep can help."
   Score: 0.64
   Rationale: Clear practical goal but lacks specificity (e.g., age group or context).
   Q: "What are the main causes of inflation in developed economies?"
   A: "Supply chain issues, monetary policy, and demand surges are main causes."
   Score: 0.77
   Rationale: Clear economic scope and focus; moderately well-framed.
   Q: "What are key differences between renewable and non-renewable energy sources?"
   A: "Renewables are replenishable (solar, wind), while non-renewables are finite (coal, oil)."
   Score: 0.73
   Rationale: Well-scoped and comparative but lacks application context.
   Q: "What are early symptoms of diabetes in adults?"
   A: "Frequent urination, increased thirst, and fatigue are early symptoms."
   Score: 0.81
   Rationale: Highly focused medical question with clear population definition.
\end{Verbatim}
\end{framed}
\caption{Prompt used for the LLM to score questions from ClariQ.}
\label{fig:conversational_score_2}
\end{figure}

\newpage

\begin{figure}[h!]
\begin{framed}
\begin{Verbatim}
# ====== High Clarity (0.81-0.95) ======
   Q: "What are the top three renewable energy strategies for reducing carbon emissions in urban areas by 2030?"
   A: "Solar rooftops, electric public transport, and building efficiency improvements."
   Score: 0.92
   Rationale: Very specific — defined goal, context, timeframe, and scope.
   Q: "How can a beginner learn Python for data analysis within three months?"
   A: "Take structured courses, practice datasets, and complete small projects."
   Score: 0.87
   Rationale: Clear, focused, and practical; specifies skill level, purpose, and timeframe.
   Q: "What exercises are recommended for improving cardiovascular health in adults over 50?"
   A: "Brisk walking, cycling, and swimming are recommended."
   Score: 0.93
   Rationale: Clear, medically targeted, and demographically scoped.
   Q: "What are the differences between supervised and unsupervised learning, with examples for each?"
   A: "Supervised learning uses labeled data (classification); unsupervised finds clusters."
   Score: 0.96
   Rationale: Exceptionally precise; question explicitly defines scope and deliverables.
   Q: "What were the economic effects of the 2008 financial crisis on European employment between 2009 and 2012?"
   A: "Unemployment rates rose significantly, especially among young workers."
   Score: 0.94
   Rationale: Detailed timeframe, region, and economic focus yield very high clarity.
   # ====== Perfect Clarity (1.0) ======
   Q: "What is the acceleration due to gravity on Earth at sea level, in meters per second squared?"
   A: "Approximately 9.81 m/s²."
   Score: 1.0
   Rationale: Fully specific, measurable, and unambiguous; perfect clarity.
   Q: "In what year did the Apollo 11 mission land on the Moon?"
   A: "1969."
   Score: 0.99
   Rationale: Exact factual target, zero ambiguity.
   Q: "Which programming language introduced the concept of object-oriented programming first?"
   A: "Simula, introduced in the 1960s, was the first object-oriented language."
   Score: 0.98
   Rationale: Clear, historical, and narrowly defined.
   Q: "What is the melting point of pure gold in degrees Celsius?"
   A: "1064°C."
   Score: 0.99
   Rationale: Precise scientific measurement with no ambiguity.
   # ====== Calibration (fine granularity guidance) ======
   The following examples illustrate subtle scoring differences between similar questions, to encourage 
   non-round, continuous scoring behavior.
   # ====== Fine-grained calibration examples ======
   Q: "Tell me about renewable energy."
   A: "I cannot answer because the question is not clear enough."
   Score: 0.25
   Rationale: Broad topic without context or goal; only slightly better than incoherent because it names a concept.
   Q: "What are types of renewable energy sources?"
   A: "Solar, wind, hydro, and geothermal are main renewable energy types."
   Score: 0.37
   Rationale: Slightly clearer; enumerative but still lacks focus, timeframe, or purpose.
   Q: "What are the main advantages of renewable energy?"
   A: "It’s sustainable, reduces emissions, and lowers long-term costs."
   Score: 0.44
   Rationale: Defines direction (“advantages”), improving specificity, though still broad in scope.
   Q: "What are the main advantages of solar energy in residential use?"
   A: "It reduces electricity bills and reliance on the grid."
   Score: 0.68
   Rationale: Sharply more focused; defines technology (solar) and context (residential).
   Q: "What are the main advantages of rooftop solar systems in reducing household electricity costs in Canada?"
   A: "They provide stable long-term savings and reduce grid dependency."
   Score: 0.79
   Rationale: Very specific geographic and functional framing; near high clarity but still not numerical or time-bounded.
   Q: "What is the estimated average payback period, in years, for a 5kW rooftop solar system in Ontario under current energy prices?"
   A: "Around 8 to 10 years, depending on usage and incentives."
   Score: 0.93
   Rationale: Fully concrete parameters (location, system size, metric) make this almost perfectly precise.
\end{Verbatim}
\end{framed}
\caption{Prompt used for the LLM to score questions from ClariQ (continued).}
\label{fig:conversational_score_3}
\end{figure}

\newpage

\subsection{Conformal Agentic AI for Web-Based Tasks}\label{app:agentic}

\textbf{Dataset and Task Setup.} We utilize the benchmark dataset proposed by \citet{he2024webvoyager}, which initially comprises 643 tasks spanning 15 diverse websites, including Amazon, BBC News, Google Flights, and Wolfram Alpha. To ensure a reliable evaluation environment, we filter out tasks rendered infeasible by external factors such as unremovable ads, mandatory login requests, or strict anti-bot policies. This filtering process results in a curated set of 542 tasks. We partition these tasks into a calibration set (80\%) and a test set (20\%).

\textbf{Sampling Strategy.} For each valid task in our dataset, we generate up to 10 candidate action trajectories using an iterative agentic framework. More specifically, we always attempt to generate 10 trajectories, but due to occasional API call errors, some tasks end up with 8 or 9 trajectories. To generate an action trajectory, we employ GPT-4.1 in an iterative process. At each step, it is presented with a screenshot of the current browser view and the task context. The model then samples a specific UI action (e.g., clicking a coordinate, scrolling, or typing). This action is executed by the backend browser, updating the state for the next step. Detailed specifications of the action space and browser environment follow those from \citet{he2024webvoyager}.

\textbf{Score Function.} To obtain the score function, we require the LLM to self-assess its progress at every step of the action trajectory. The model outputs a floating-point number in the range $[0, 1]$, quantifying the percentage of the task it believes has been completed (with 1 indicating full success). We achieve this by appending a specific scoring instruction to the original prompt template from \citet{he2024webvoyager}. The instruction is \texttt{Completion: \{A float between 0 and 1 indicating how much of the task has been completed until now. Zero means no progress has been made and one means full completion.\}}. Placing this instruction at the end allows us to extract the score while preserving the integrity of the original generation context. Ultimately, we define the score of a full trajectory as the self-assessment value provided at its final step.

\textbf{Admissibility Function.} To implement our admissibility function, we adopt the automated evaluation protocol established by \citet{he2024webvoyager}. We employ GPT-4.1 as an automated judge, denoted by $A'$, to evaluate the performance of action trajectories. Specifically, for a task $x$ and a trajectory of actions $\by_t$, the function $A'(x, \by_t)$ assigns a binary label of $1$ (success) or $0$ (failure) to $\by_t$ by analyzing the task description alongside the complete history of screenshots and text outputs; note that there is no ground truth here. We adhere strictly to the assessment prompts and criteria detailed in the original study.

\subsection{Conformal Random Forests}\label{app:trees}

\textbf{Detailed Method.} We trained $T$ trees $F_1, \ldots, F_T$ using a training dataset. Here $F_t: \X \rightarrow \triangle^{C-1}$, i.e., given a feature vector $X$, each tree will output a probability vector over $C$ possible classes. The prediction of the random forest model $F$ will be $F(X)= \frac{1}{T}\sum_{t=1}^T F_t(X)$. 

In our setting, given a tree $F_t$, one can retrieve the prediction $F_t(X)$. As our scoring function should reflect confidence in the correctness of $F_t(X)$, we use the weighted number of samples of the predicted leaf of $X$ in the tree $F_t$ as a proxy.  This can be achieved by using the \texttt{weighted\_n\_node\_samples} in the \texttt{Tree} class in the \texttt{sklearn} Python package \citep{scikit-learn}.  
Let $W_t(X)$ be the weighted number of samples of the predicted leaf of $X$ in the tree $F_t$. Then we can let $Y_t = (F_t(X), W_t(X))$. Then the score function $S^{\uparrow}(X, Y_t) = W_t(X)$. We used the \texttt{SmallestSubsetSum} selection function described in \autoref{app:examples}.

The admissibility $A'_t\in \{0, 1\}$ is $1$ if and only if $F_t$ predicted the correct class, i.e., $\argmax F_t(X) = Y_{GT}$.\footnote{Note that this is an example where the elements of $\bY$ do not live in the same space as $Y_{\text{GT}}$: the former are pairs $(F_t(X), W_t(X))$, and the later is a class label. In other words, this example highlights that Conf-Gen remains valid when $\Y \neq \Y_{\text{GT}}$.} The \texttt{agg} function is $1$ if the number of trees predicting the correct class is at least $k$, where $k$ is a hyperparameter, and $0$ otherwise. This choice corresponds to \autoref{eq:recall_agg} with $\beta=k$ (except $A_t'$ is different in this experiment than in the example in \autoref{app:examples}). Thus the conformal guarantee is that at least $k$ trees predicted the correct class.

The dataset Click\_prediction\_small contains 35,953 train instances and 3,995 test instances, with 11 features and 2 classes. This data is derived from the 2012 KDD Cup. The data is subsampled to 0.1\% of the original number of instances, downsampling the majority class (click=0) so that the target feature is reasonably balanced (5 to 1). The data is about advertisements shown alongside search results in a search engine, and whether or not people clicked on these ads. The task is to build the best possible model to predict whether a user will click on a given ad. 

In addition to the dataset Click\_prediction\_small, we performed experiments on four more datasets; we show the results below.

\textbf{Dataset GesturePhaseProcessed.} The dataset GesturePhaseProcessed \cite{gesture} contains 8,885 train instances and 988 test instances, with 32 features and 5 classes. 
The dataset is composed of features extracted from 7 videos with people gesticulating, and aims at studying gesture phase segmentation. Each video is represented by two files: a raw file, which contains the position of hands, wrists, head, and spine of the user in each frame; and a processed file, which contains velocity and acceleration of hands and wrists. 
We trained the random forest model on the train instances to achieve an AUC of $0.9027$. 
\autoref{fig:tree_results_gesture} illustrates the average admissibility of the selected conformal sets on GesturePhaseProcessed. The diagonal line represents the theoretical lower bound established in \autoref{th:main}, which is empirically satisfied for all values of $\gamma\in [0, 1]$. \autoref{fig:tree_results_gesture} (right) plots the average number of trees used as a function of $\gamma$.

\textbf{Dataset Adult.} The dataset Adult \cite{adult} contains 43,957 train instances and 4,885 test instances, with $14$ features and $2$ classes.
The dataset prediction task is to determine whether a person makes over $50$K a year. Extraction was done by Barry Becker from the 1994 Census database. 
We trained the random forest model on the train instances to achieve an AUC of $0.9104$. 
\autoref{fig:tree_results_adult} illustrates the average admissibility of the selected conformal sets on Adult. The diagonal line represents the theoretical lower bound established in \autoref{th:main}, which is empirically satisfied for all values of $\gamma\in [0, 1]$. \autoref{fig:tree_results_adult} (right) plots the average number of trees used as a function of $\gamma$.

\textbf{Dataset Census-Income.} The dataset Census-Income \cite{census} contains 269,356 train instances and 29,929 test instances, with $41$ features and $2$ classes.
This dataset contains weighted census data extracted from the $1994$ and $1995$ ``Current Population Surveys'' conducted by the U.S. Census Bureau. The data contains $41$ demographic and employment related variables. 
We trained the random forest model on the train instances to achieve an AUC of $0.9478$. 
\autoref{fig:tree_results_census} illustrates the average admissibility of the selected conformal sets on Census-Income. The diagonal line represents the theoretical lower bound established in \autoref{th:main}, which is empirically satisfied for all values of $\gamma\in [0, 1]$. \autoref{fig:tree_results_census} (right) plots the average number of trees used as a function of $\gamma$.

\textbf{Dataset MiniBooNE.} The dataset MiniBooNE \cite{miniboone} contains 117,057 train instances and 13,007 test instances, with $50$ features and $2$ classes. 
This dataset is taken from the MiniBooNE experiment and is used to distinguish electron neutrinos (signal) from muon neutrinos (background). 
We trained the random forest model on the train instances to achieve an AUC of $0.9790$. 
\autoref{fig:tree_results_miniboone} illustrates the average admissibility of the selected conformal sets on MiniBooNE. The diagonal line represents the theoretical lower bound established in \autoref{th:main}, which is empirically satisfied for all values of $\gamma\in [0, 1]$. \autoref{fig:tree_results_miniboone} (right) plots the average number of trees used as a function of $\gamma$.

\textbf{Difference in performance.} We perform experiments on varying the strength of the conformal guarantee $k$, and note their difference in the performance of the model by selecting the best subset of the $100$ trees. To this end, for each $k$ from $5$ to $50$, with an increment of $5$, and for each $\gamma$, we perform calibration on 100 instances in the test set. Then, we perform conformal generation on the remaining instances in the test dataset and compute the performance using AUC with respect to the ground truth label. For each $k$, we pick the $\gamma$ with the highest AUC. \autoref{fig:tree_results_auc} shows the change in AUC for each $k$. We note that for most datasets, if $k$ is in a reasonable range, the change in AUC is minimal. For Click\_prediction\_small, there is a minor increase in the AUC using this method. This shows that it is possible to gain AUC ``for free'' by using an existing model and just a few data instances on the calibration dataset.

\newpage

\begin{figure}
    \centering
    \includegraphics[width=0.95\linewidth]{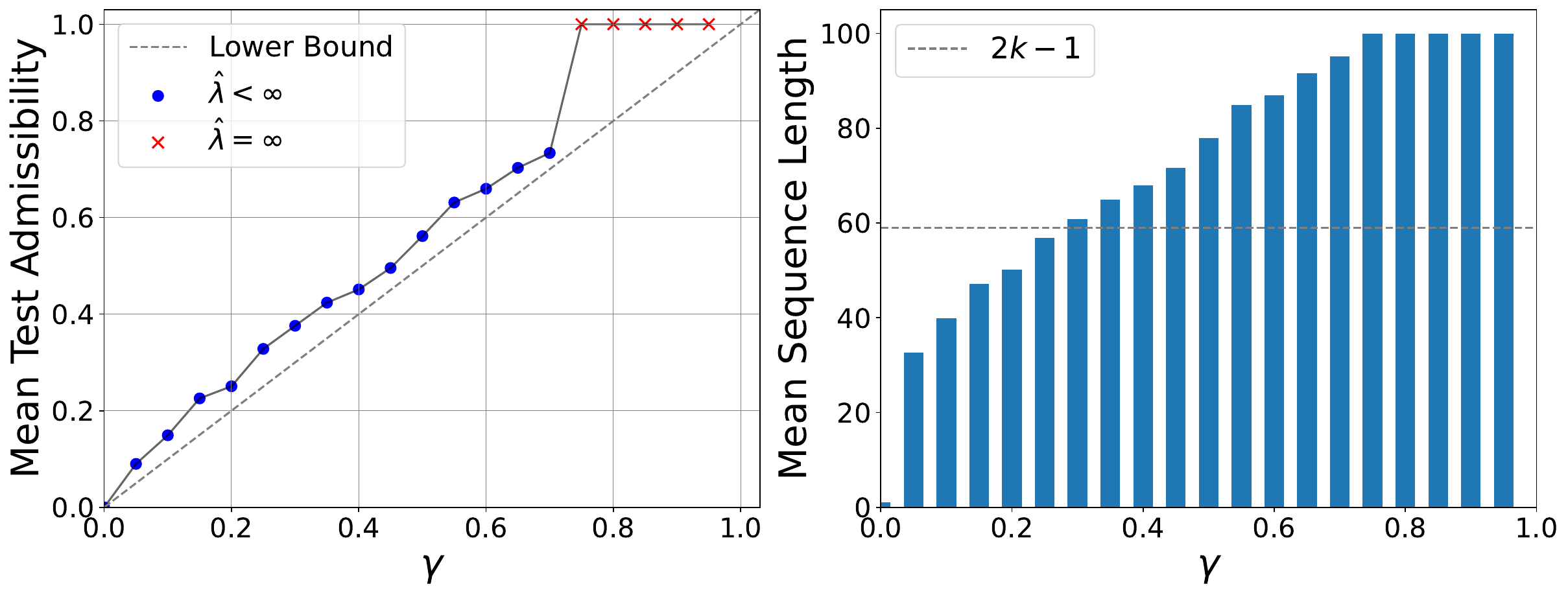} 
    \caption{Conformal generation admissibility and average sequence length as a function of $\gamma$ of our conformal random forest on GesturePhaseProcessed.}
    \label{fig:tree_results_gesture}
\end{figure}

\begin{figure}
    \centering
    \includegraphics[width=0.95\linewidth]{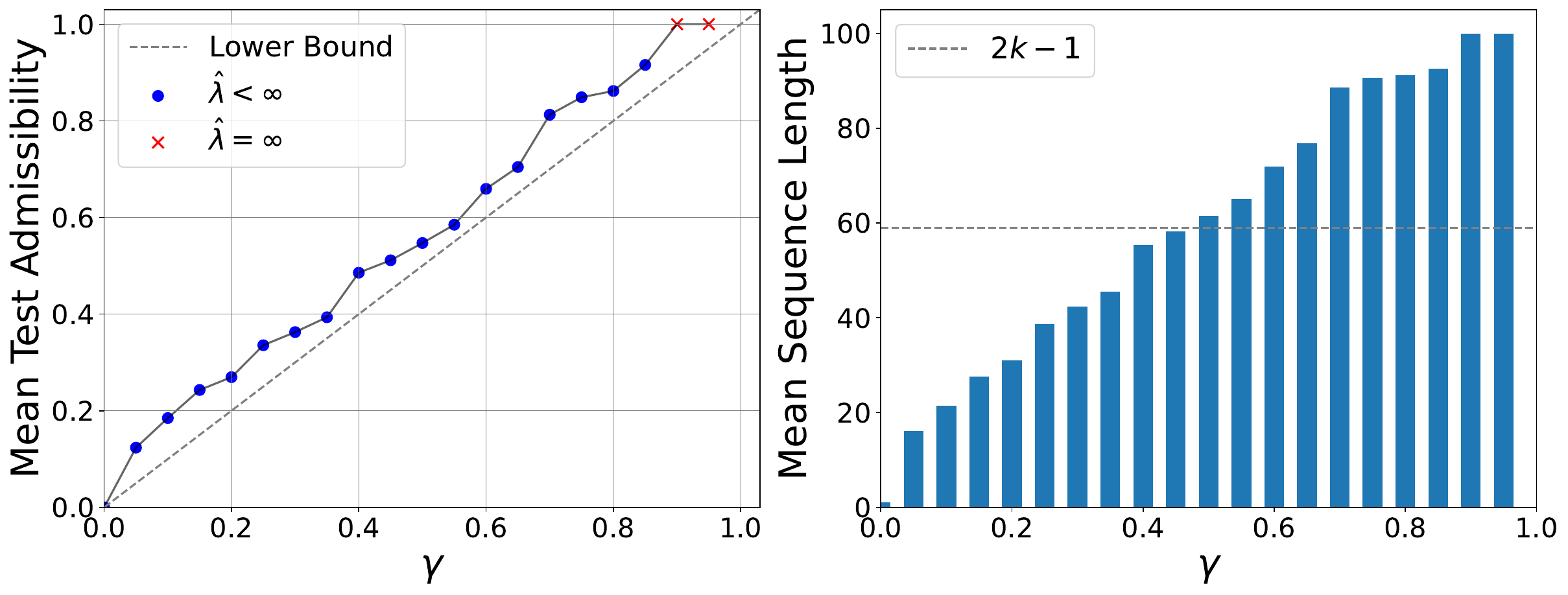} 
    \caption{Conformal generation admissibility and average sequence length as a function of $\gamma$ of our conformal random forest on Adult.}
    \label{fig:tree_results_adult}
\end{figure}

\begin{figure}
    \centering
    \includegraphics[width=0.95\linewidth]{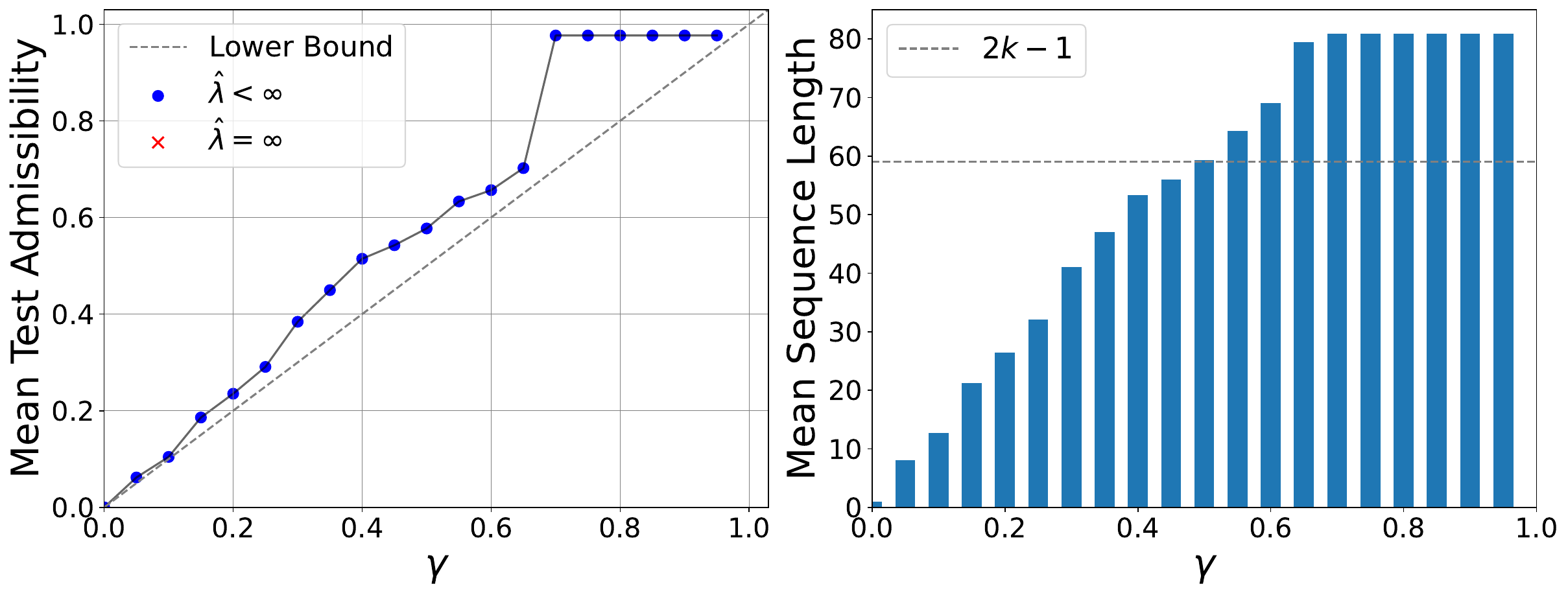} 
    \caption{Conformal generation admissibility and average sequence length as a function of $\gamma$ of our conformal random forest on Census-Income.}
    \label{fig:tree_results_census}
\end{figure}

\begin{figure}
    \centering
    \includegraphics[width=0.95\linewidth]{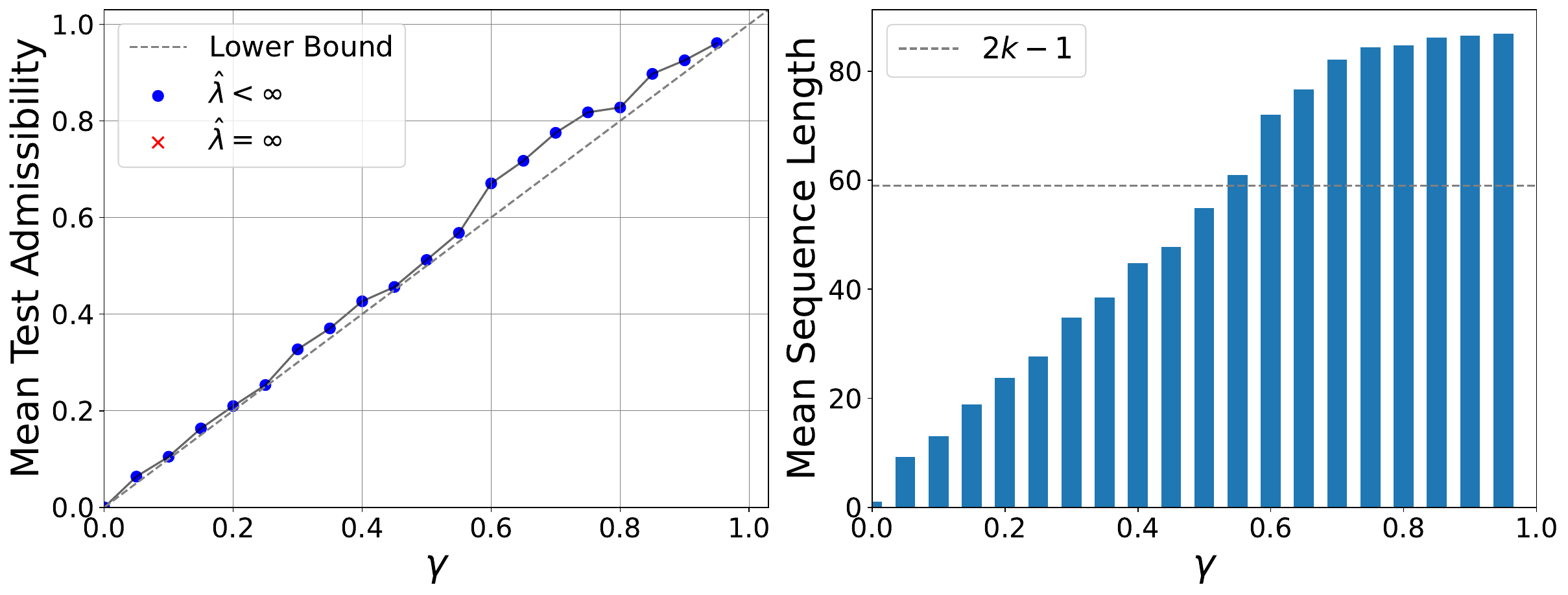} 
    \caption{Conformal generation admissibility and average sequence length as a function of $\gamma$ of our conformal random forest on MiniBooNE.}
    \label{fig:tree_results_miniboone}
\end{figure}

\begin{figure}
    \centering
    \includegraphics[width=0.5\linewidth]{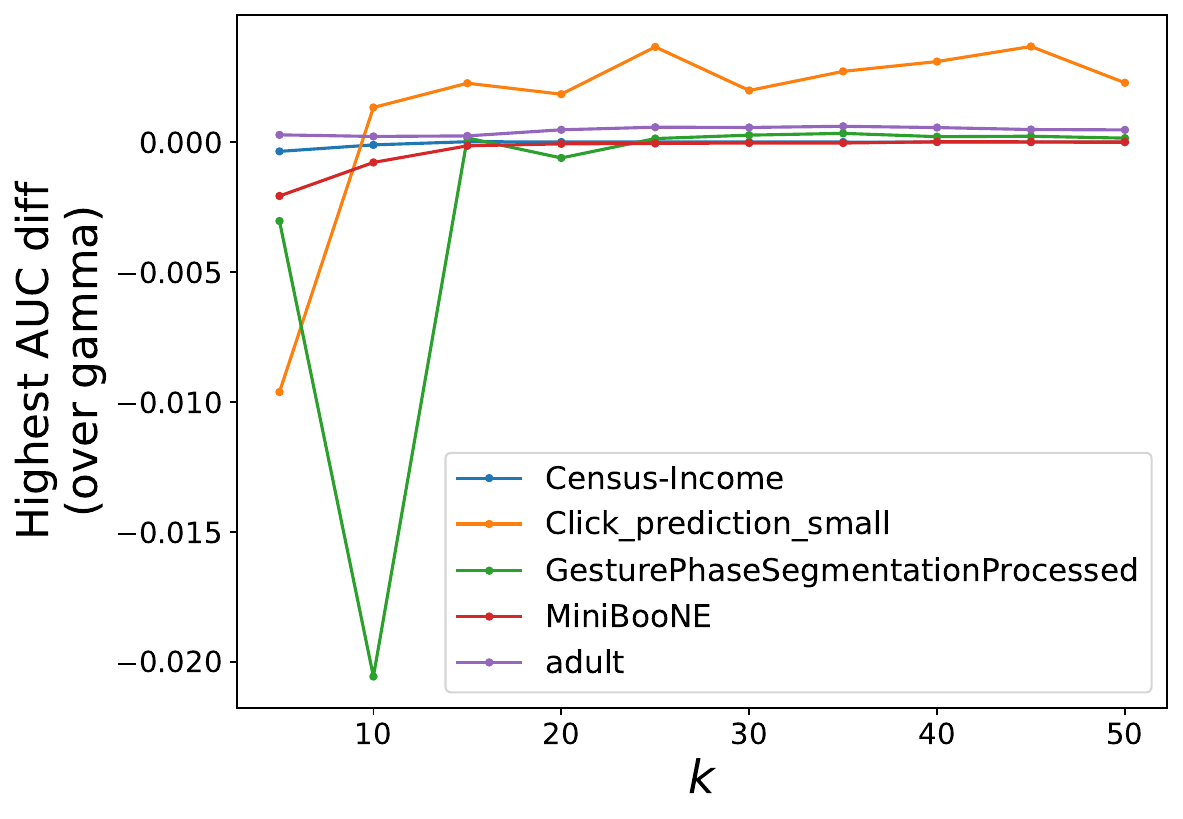} 
    \caption{Change in performance (AUC) of our conformal random forests for various datasets and $k$ values.}
    \label{fig:tree_results_auc}
\end{figure}

\end{document}